\documentclass{article}
\usepackage[margin=1in]{geometry}
\usepackage{parskip}

\usepackage[round]{natbib}

\usepackage[scaled=0.9]{DejaVuSansMono}    %
\usepackage{listings}

\usepackage{xcolor}
\definecolor{keywordcolor}{HTML}{4169e1}   %
\definecolor{tacticcolor}{HTML}{4169e1}    %
\definecolor{commentcolor}{HTML}{2e8b57}   %
\definecolor{symbolcolor}{HTML}{000000}%
\definecolor{sortcolor}{HTML}{4169e1}      %
\definecolor{attributecolor}{HTML}{f75394} %
\definecolor{bgcolor}{gray}{0.95}

\usepackage{graphicx}
\usepackage{hyperref}%
\usepackage{amsmath,amssymb,amsthm}
\usepackage{bm}
\usepackage{mathtools}
\usepackage{proof}

\hypersetup{colorlinks=true, linkcolor=blue, citecolor=blue}

\usepackage[capitalize,noabbrev]{cleveref}
\AddToHook{env/proposition/begin}{\crefalias{theorem}{proposition}}
\AddToHook{env/lemma/begin}{\crefalias{theorem}{lemma}}
\AddToHook{env/corollary/begin}{\crefalias{theorem}{corollary}}
\AddToHook{env/definition/begin}{\crefalias{theorem}{definition}}
\AddToHook{env/assumption/begin}{\crefalias{theorem}{assumption}}
\AddToHook{env/example/begin}{\crefalias{theorem}{example}}
\AddToHook{cmd/appendix/before}{\crefalias{section}{appendix}}

\newcommand{\EE}{\mathbb{E}}
\newcommand{\PP}{\mathbb{P}}
\newcommand{\eps}{\varepsilon}

\newcommand{\calX}{\mathcal{X}}
\newcommand{\calY}{\mathcal{Y}}
\newcommand{\calA}{\mathcal{A}}
\newcommand{\calR}{\mathcal{R}}
\newcommand{\calS}{\mathcal{S}}
\newcommand{\calK}{\mathcal{K}}
\newcommand{\calL}{\mathcal{L}}
\newcommand{\calM}{\mathcal{M}}
\newcommand{\calD}{\mathcal{D}}

\DeclareMathOperator*{\argmin}{arg\,min}

\DeclareMathOperator{\tv}{TV}

\newcommand{\Latent}{\calM}
\newcommand{\gpred}{\mu}

\newcommand{\esucc}{\mathsf{Succ}}

\usepackage{mathtools}
\usepackage{proof}

\theoremstyle{plain}
\newtheorem{theorem}{Theorem}
\newtheorem{lemma}{Lemma}
\newtheorem{corollary}{Corollary}
\newtheorem{proposition}{Proposition}
\newtheorem*{namedthm*}{Theorem} %
\newtheorem*{namedlem*}{Lemma}   %
\theoremstyle{definition}
\newtheorem{definition}{Definition}
\newtheorem{assumption}{Assumption}
\newtheorem{example}{Example}
\theoremstyle{remark}

\title{\textbf{Exponential Sample Complexity Separation between Flat and Hierarchical Agentic Theorem Provers}}

\author{%
\textbf{Sho Sonoda}${}^{1,2}$ \hfill\small{\url{sho.sonoda@riken.jp}}\\
\textbf{Shunta Akiyama}${}^{1}$ \hfill\small{\url{akiyama_shunta@cyberagent.co.jp}}\\
\textbf{Yuya Uezato}${}^{1}$ \hfill\small{\url{uezato_yuya@cyberagent.co.jp}}\\
\small{${}^1$\textit{CyberAgent}}%
\small{${}^2$\textit{RIKEN AIP}}%
\phantom{\large{aaaaaaaaaaaaaaaaaaaaaaaaaaaaaaaaaaaaaaaaaaaaaaaaaaaaaaaaaa}}%
}

\date{May 7, 2026}

\begin{document}
\maketitle

\begin{abstract}
Agentic theorem provers often introduce intermediate lemmas, proof sketches, or subgoal decompositions before returning to tactic-level search. This can look like an expensive detour: if proving lemmas is itself hard, why should a learned prover spend effort there? We give a statistical learning answer. Instead of worst-case proof complexity over all formulas, we study the biased data distribution produced by a teacher prover: initial theorem states together with successful verified proof traces. We model proof search as a deterministic finite-horizon MDP and analyze offline imitation learning from those traces. The success bounds depend on the average length of teacher proofs, how predictable the teacher's next action is, and how accurately the student learns that local prediction problem. A flat student learns from fully inlined traces, so repeated subproofs appear many times in its training and test-time certificate. A hierarchical student instead predicts a reusable proof DAG and solves each shared block once. When flattening duplicates the same hard local argument exponentially many times, the sufficient-sample certificate produced by our bounds can be exponentially smaller for the hierarchical learner. This gives a concrete statistical mechanism by which reusable proof structure helps verifier-based theorem proving.
\end{abstract}

\section{Introduction}

Recent \emph{agentic theorem provers} combine an LLM with a verifier and often add decomposition steps such as lemma generation, proof sketches, or draft-refine loops
\citep{jiang2023draft,xin2024deepseekprover,Ren2025deepseekprover-v2,baba2025proveragent,bytedance2025seedprover,varambally2025hilbert,harmonic2025aristotle}.
A lemma or subgoal must itself be found and proved, so decomposition helps only if this extra planning cost is offset elsewhere.
The question we study is:
\emph{when can learning a reusable proof structure increase the probability of finding a verified proof within a fixed search budget?}

Classical proof complexity explains worst-case hardness and proof theory explains why lemmas or cuts can shorten derivations.
Modern learning-guided provers, however, are trained and evaluated on biased data: benchmarks, library theorems, and successful proofs generated by a particular prover or teacher.
We therefore study a statistical learning problem over the teacher-induced distribution of theorem states and successful proof traces.

The mechanism we analyze is simple.
If successful proofs repeatedly use the same intermediate argument, then a prover that represents this sharing can solve the hard subproof once and reuse it.
A flat prover that only sees fully inlined proof traces instead sees many copied occurrences of the same argument.
It may then need to learn and rediscover each copy as if it were a separate local decision.
Our goal is to turn this intuition into a finite-sample success guarantee.

We model theorem proving as a deterministic finite-horizon MDP and train by offline imitation learning from successful teacher proofs.
The success bounds depend on three plain quantities:
\begin{itemize}
\item the average length or complexity of the successful teacher proofs;
\item how locally predictable the teacher is, meaning whether the teacher's next action usually lies in a small shortlist of plausible actions;
\item how well the learned student predicts the teacher's next action.
\end{itemize}
For a flat prover these quantities are measured on fully inlined traces; for a hierarchical prover they split into graph choices and local proof steps.
If flattening a reusable proof DAG into a tree duplicates a subproof exponentially many times, then the sufficient-sample certificate produced by our bounds can be exponentially smaller for the hierarchical learner.

\paragraph{Why a statistical theory?}
For a fixed initial theorem state in a deterministic verifier-defined MDP, the best possible success probability over all policies is simply $0$ or $1$.
This is uninformative for learned provers, so we study \emph{statistical provability} \citep{Sonoda2026whyatp} rather than bare reachability.
Let $q_0$ be a distribution over initial theorem states.
We care about
\[
\mathrm{SP}_T(\pi):=\EE_{s_0\sim q_0}[V_T^\pi(s_0)],
\]
the average probability that the learned prover finds a verified proof within budget.
Here \(V_T^\pi\) and \(\mathrm{SP}_T(\pi)\) include the finite-list search wrapper, verifier transitions, and memoization when present; we suppress top-\(k\), beam, and budget parameters from the notation.

\paragraph{Contributions.}
\begin{itemize}
\item We formulate verifier-guided theorem proving as a finite-horizon MDP whose states are proof states and whose actions are tactics, inference-rule applications, or high-level decomposition moves.
\item We prove success guarantees for a flat student trained on inlined proof traces and for a hierarchical student that predicts reusable proof DAGs.
\item We show how the guarantees depend on average proof length, local predictability of the teacher, and student prediction error.
\item We compare flat and hierarchical students trained from the same hierarchical data and derive an exponential sufficient-sample separation when flattening destroys proof sharing.
\end{itemize}

\paragraph{Relation to prior work.}
We keep the verifier deterministic and ask how representing successful proof data as inlined traces or reusable DAGs changes a finite-budget statistical success guarantee.
\cref{sec:related-work} discusses learning-guided proving, proof-size phenomena, and hierarchical RL analogies.

\paragraph{Notational Convention.}
For measurable spaces $\calX,\calY$, write $p:\calX\leadsto\calY$ for a Markov kernel; equivalently, $p(\cdot\mid x)$ is a probability distribution on $\calY$.
This notation records the type of teachers, students, and graph predictors.

\section{Theorem Proving and Markov Decision Process}
\label{sec:proof-mdp}

This section connects proof systems with the MDP abstraction used in the analysis.
The high-level layer at the end is our modeling device for reusable proof blocks above the ordinary tactic or inference-rule level.

\newcommand{\flang}{\calL}
\newcommand{\axioms}{\calL_0}
\newcommand{\rules}{\calR}
\newcommand{\psystem}{\calK}

\paragraph{Ordinary proof systems.}
Let $\Sigma$ be a finite alphabet and let $\calL \subseteq \Sigma^\ast$ be a decidable or recursively enumerable formal language.
A formula is an element of $\calL$, and an $n$-ary inference rule is a relation $R \subseteq \calL^n \times \calL$.
A proof system over $\flang$ is a tuple
$\psystem=(\flang,\axioms,\rules)$,
where $\axioms\subseteq \flang$ is the set of axioms, and
$\rules$ is a family of inference rules.

Recursive applications of inference rules generate a proof tree.
For assumptions $\Gamma\subseteq \flang$ and conclusion $\varphi\in\flang$,
a $\psystem$-proof of $\varphi$ from $\Gamma$ is a finite rooted tree whose leaves are assumptions or axioms, whose internal nodes follow rules in $\rules$, and whose root is $\varphi$.
If there exists a $\psystem$-proof of $\varphi$ from $\Gamma$, we write
$\Gamma\vdash_\psystem \varphi$
and say 
that $\varphi$ is derivable/provable from $\Gamma$ in $\psystem$.
A formula $\varphi$ 
derivable without assumptions, i.e., $\emptyset\vdash_\psystem\varphi$, 
is called a theorem of $\psystem$.

For example, propositional logic and first-order logic specify object languages; natural deduction, LK, and Hilbert-style systems are proof systems for such languages.
LK is a Gentzen-style sequent calculus, while Hilbert-style systems use axiom schemas and few inference rules.

\paragraph{Proof assistants and proof search states.}
A proof assistant implements a formal language together with a trusted verifier.
Lean \citep{deMouraUllrich2021lean4} is a representative example: its kernel checks proof objects, while users and agents usually interact with a tactic interface.
At any time the visible Lean proof state is a finite collection of open goals with local contexts, abstractly a finite family of judgements
\[
\Gamma_i\vdash_{\psystem}\varphi_i .
\]
A tactic transforms this collection into a new collection of goals, and the proof is complete when it is empty.
The same abstraction covers sequent calculi and natural deduction: backward states store open goals, while forward states may store derived formulas and the target goal.
In both orientations, an action is an inference-rule instance, a tactic, a lemma call, or an instantiation, and the verifier determines whether the transition is valid.

\paragraph{Verifier MDP and provability.}
Fix an ordinary proof system $\psystem$.
We model AI-guided proof search over $\psystem$ by the deterministic verifier MDP
\[
\mathsf M_{\psystem}
=
(\calS,\calA_{\mathrm{lo}},F_{\mathrm{lo}},G,\iota).
\]
Here $\calS$ is the set of proof-search states, $\calA_{\mathrm{lo}}$ is the ordinary tactic alphabet, $F_{\mathrm{lo}}$ is the verifier-approved transition, $G\subseteq\calS$ is the set of solved states, and $\iota:\flang\to\calS$ maps a theorem statement to its initial state.
The adjective \emph{low-level} is not standard proof-theoretic terminology: in this paper it simply means the ordinary proof/tactic system before adding the high-level layer of Setting~2.

A learned prover is a Markov policy
\[
\pi:\calS\leadsto \calA_{\mathrm{lo}},
\]
combined with a fixed finite-candidate search wrapper.
For an initial state $s_0\in\calS$ and time budget $T$, its pointwise statistical provability is
\[
V_T^\pi(s_0):=\PP_{s_0}^\pi(\exists t\le T:\ s_t\in G).
\]
For a distribution $q_0$ over initial theorem states, its statistical provability is
\[
\mathrm{SP}_T(\pi;q_0)
:=
\EE_{s_0\sim q_0}[V_T^\pi(s_0)].
\]

\paragraph{High-level proof systems.}
Settings~2--3 add a nonstandard high-level layer on top of $\mathsf M_{\psystem}$.
A high-level proof object is a finite DAG
\[
M=(V,E,r,\lambda),
\]
where $r$ is the root block and each node $v\in V$ denotes a proof block with interface $\lambda(v)$, realized by a low-level proof fragment.
An edge $v\to w$ means that block $v$ may consume the result of block $w$; shared descendants are memoized subproofs.
The high-level system has states $\calS_{\mathrm{hi}}$ and graph actions $\calA_{\mathrm{hi}}$ for selecting, expanding, ordering, and reusing blocks.
These high-level actions are not ordinary tactics; ordinary tactics remain in $\calA_{\mathrm{lo}}$.
Soundness is enforced by an unfolding map from a high-level certificate and local block proofs to an ordinary low-level action string; typed details are in \cref{sec:app-hilevel-formal}.
Flattening means applying $\operatorname{Unf}$ while replacing every call to a shared block by a fresh copy of that block's low-level proof.
This sharing-versus-duplication distinction is the structural object analyzed in the rest of the paper.

\section{Common Assumptions}
\label{sec:setup}
Here we drop type $\tau \in \{\text{flat,hi,lo}\}$ for readability.

In the following, we compare three increasingly structured offline learning problems.
All three use a verifier-defined search problem, successful teacher proofs as training data, and finite-budget success probability as the evaluation criterion.
The difference is the proof representation available to the learner.
Setting~1 is fully flat.
Setting~2 learns a hierarchical prover from a hierarchical teacher.
Setting~3 uses the same hierarchical source data to compare a flat student against a hierarchical student.
All main guarantees use the success-conditioned problem distribution induced by the corresponding teacher protocol.

\paragraph{Quantities} %
Let $\calS$ and $\calA$ be its state and action spaces.
Let $q,\pi:\calS\leadsto\calA$ be the success-conditioned teacher kernel and the student kernel.
At a ranked state $s$, let
$\calA(s)\subseteq \calA$,
where $k+1\le |\calA(s)|<\infty$,
be the finite candidate set supplied by the search procedure.
All top-$k$ quantities are restricted to $\calA(s)$.
Let
$\mathrm{Top}_k(q(\cdot\mid s))$
be the set of the $k$ actions in $\calA(s)$ with largest teacher probabilities $q(a\mid s)$, with ties resolved by a fixed deterministic rule.
The teacher top-$k$ mass loss at state $s$ is
\[
\eta_k(s):=
1-\sum_{a\in \mathrm{Top}_k(q(\cdot\mid s))}q(a\mid s).
\]
This is the teacher probability mass already lost before student-learning error is introduced.
The average intrinsic search loss is $\eta:=\EE_{s\sim\calD}[\eta_k(s)]$, where $\calD$ is the success-filtered mixture of decision states specified in each setting.
The one-step imitation error is
\[
e(\pi):=
\EE_{s\sim\calD}\!\left[\mathrm{KL}(q(\cdot\mid s)\|\pi(\cdot\mid s))\right].
\]

\begin{assumption}[Tsybakov-type margin condition]
\label{ass:margin}
Let $\Delta_k(s)$ be the gap between the $k$th and $(k+1)$th teacher probabilities on $\calA(s)$.
There exist constants $C,\beta>0$ such that
\[
\PP(\Delta_k(s)\le u)\le C u^{\beta}
\qquad (u>0).
\]
Write $p=\beta/(\beta+2)$.
\cref{sec:app-proof-sketches} shows that this Tsybakov-type condition converts KL error into top-$k$ omission probability.
\end{assumption}

\begin{assumption}[Search completeness]
\label{ass:search-complete}
When every teacher action along a successful teacher proof is contained in the corresponding student top-$k$ list, the search procedure recovers a verified proof within the stated horizon.
With a bounded search budget $B$, an additional search-failure term $\zeta_B$ can be added to the failure bound.
In Setting~3, the horizon $T$ is assumed large enough to contain the relevant certificate representation being evaluated: the unfolded flat certificate for the flat student and the memoized hierarchical certificate for the hierarchical student.
If this is not true, the same bounds acquire the probability of exceeding the horizon as an additional failure term.
\end{assumption}

\begin{assumption}[Geometric learning condition]
\label{ass:geometry}
When this assumption is invoked for decision type $\tau$, the success-filtered $\tau$-trace length satisfies $0\le L_\tau\le L_{\tau,\max}$ and $\EE[L_\tau]\ge\mu_{\tau,\min}>0$.
The represented $\tau$-state space is compact with diameter $D_\tau$ and doubling dimension $d_\tau$.
The relevant action alphabet is finite, and every policy in $H_\tau$ has a probability floor $\rho_\tau>0$ on the relevant support and is Lipschitz in total variation.
For posterior-weighted hierarchical ERM, these conditions hold after conditioning on the fixed inference posterior, or under the sample-splitting convention used in \cref{thm:hier-geom-neurips}.
\end{assumption}

This condition is used only for learning curves; \cref{sec:app-seq} derives the occurrence-weighted random-denominator ERM bound.

\section{Setting 1: Flat Teacher and Flat Student}
\label{sec:setting-flat}
Here we drop type $\tau = \text{flat}$ (trivial!) for readability.

Setting~1 is the baseline: the teacher generates fully inlined low-level proof traces, the student imitates them, and evaluation runs flat test-time search without graph prediction or memoization.

\paragraph{Data Generation.}
Let $q_0$ be the base initial theorem distribution on $\calS_0\subset\calS$.
Let $\calS$ be the space of low-level proof states together with an occurrence identifier in an unfolded proof tree.
Let $\calA$ be a separate copy of the low-level tactic-action alphabet used at those occurrences.
A base flat teacher is a kernel
\[
q^{0}:\calS\leadsto\calA,
\]
and it induces a rollout law $\PP^{0}$ by sampling $s_0\sim q_0$, identifying $s_0$ with the root occurrence state in $\calS$, and then applying $q^{0}$ until horizon $T$.
Write
\[
\PP
:=
\PP^{0}(\,\cdot\mid \esucc\,),
\qquad
\esucc:=\{\exists t\le T:\ s_t\in G\},
\]
for the success-conditioned rollout law.
Under $\PP$, a training sample contains an initial theorem and a successful inlined trace
\[
y=((s_1,a_1),\ldots,(s_L,a_L)),
\qquad
s_i\in\calS,\ a_i\in\calA.
\]
Let $q_{0\mid\esucc}$ be the $s_0$-marginal of $\PP$.
The conditional one-step teacher kernel induced by $\PP$ is denoted $q$ in the top-$k$ quantities and imitation loss.
The decision-state mixture $\calD$ used in $\eta$ and $e$ below is the occurrence-weighted mixture:
for any test function $f$,
\[
\EE_{\calD}[f(s,a)]
:=
\frac{\EE_{\PP}
\left[\sum_{i=1}^{L(y)} f(s_i,a_i)\right]}
{\EE_{\PP}[L(y)]}.
\]

\paragraph{Learning Protocol.}
Let $H$ be the chosen flat hypothesis class, consisting of kernels
\[
\pi:\calS\leadsto\calA.
\]
Given $N$ independent successful traces, training is maximum likelihood:
\[
\hat\pi
\in
\argmin_{\pi\in H}
\frac{1}{\sum_{n=1}^N L_n}
\sum_{n=1}^N\sum_{i=1}^{L_n}-\log \pi(a_i^{(n)}\mid s_i^{(n)}).
\]
The action space $\calA$ contains tactic-level moves only:
there is no action for reusing a previously solved block, calling a memoized proof, or jumping to a shared graph node.

\paragraph{Test-Time Search.}
Given a fresh $s_0\sim q_{0\mid\esucc}$, the flat prover runs a verifier-guided search over low-level states only.
At each open state $s$, the fixed candidate generator returns $\calA(s)$, the policy $\pi(\cdot\mid s)$ ranks these candidates, and the search rule expands the selected finite shortlist through the deterministic transition $F_{\mathrm{lo}}$.
The search carries no memo table, so two copied occurrences of the same subgoal are explored as distinct flat states.
Success means reaching some state in $G$ within the horizon $T$ and the fixed search budget; the reported quantity is $\EE[V_T^{\pi}(s_0)]$.

\begin{theorem}[Setting 1 success guarantee]
\label{thm:flat-neurips}
Within this setting, write $L,\eta,e,p,c$ for the flat quantities associated with $q$, $\pi$, and $\calD$.
Assume \cref{ass:search-complete} for the Setting~1 test-time search and \cref{ass:margin}.
Then
\[
\EE_{s_0\sim q_{0\mid\esucc}}[V_T^{\pi}(s_0)]
\ge
1-
L\Bigl(\eta+ce(\pi)^p\Bigr).
\]
\end{theorem}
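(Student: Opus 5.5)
The plan is a union bound over the decision occurrences of a successful teacher trace, followed by a per-state omission bound that turns KL error into top-$k$ miss probability through the margin condition. Throughout, $L$ is read as $\EE_{\PP}[L(y)]$, the normalizer in the occurrence-weighted mixture $\calD$. This is the only reading under which the final form, with no extra factor, comes out.

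\textbf{Step 1 (reduction to coverage).} Sample $y=((s_1,a_1),\ldots,(s_{L(y)},a_{L(y)}))\sim\PP$, with initial state distributed as $q_{0\mid\esucc}$. By \cref{ass:search-complete}, the student's search succeeds whenever every teacher action $a_i$ lies in the student's top-$k$ list $\mathrm{Top}_k(\pi(\cdot\mid s_i))$. The deterministic verifier together with the absence of memoization means the search re-encounters exactly the occurrence states $s_i$. Hence
\[
1-\EE[V_T^\pi(s_0)]\le \PP\Bigl(\exists i\le L(y):\ a_i\notin \mathrm{Top}_k(\pi(\cdot\mid s_i))\Bigr)\le \EE_{\PP}\Bigl[\sum_{i=1}^{L(y)}\mathbf 1\{a_i\notin\mathrm{Top}_k(\pi(\cdot\mid s_i))\}\Bigr],
\]
and by the definition of $\calD$ the right-hand side equals $L\cdot\PP_{(s,a)\sim\calD}(a\notin \mathrm{Top}_k(\pi(\cdot\mid s)))$. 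Under $\calD$, the conditional law of $a$ given $s$ is the success-conditioned kernel $q(\cdot\mid s)$, so the quantity to bound is $\EE_{s\sim\calD}\bigl[1-q(\mathrm{Top}_k(\pi(\cdot\mid s))\mid s)\bigr]$.

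\textbf{Step 2 (split per state).} At each state, write
\[
1-q(\mathrm{Top}_k\pi\mid s)=\eta_k(s)+\bigl[q(\mathrm{Top}_k q\mid s)-q(\mathrm{Top}_k\pi\mid s)\bigr].
\]
The first term averages to $\eta$. The second term is nonzero only if $\mathrm{Top}_k\pi\neq\mathrm{Top}_kq$. In that case some action $a\in\mathrm{Top}_kq$ is outranked under $\pi$ by some $b\notin\mathrm{Top}_kq$, even though $q(a\mid s)-q(b\mid s)\ge\Delta_k(s)$. Then
\[
\|q-\pi\|_1\ge |q(a)-\pi(a)|+|q(b)-\pi(b)|\ge \Delta_k(s),
\]
so the ranking is wrong only when $\tv(q,\pi)\ge\Delta_k(s)/2$. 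The second term is also at most $\sum_{a\in\mathrm{Top}_kq\setminus\mathrm{Top}_k\pi}q(a)-\sum_{b\in\mathrm{Top}_k\pi\setminus\mathrm{Top}_kq}q(b)$. Each swapped pair $(a,b)$ has $\pi(b)\ge\pi(a)$, so its contribution is at most $|q(a)-\pi(a)|+|q(b)-\pi(b)|$. Summing over pairs bounds the second term by $2\tv(q,\pi)$, while it is also trivially at most $1$.

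\textbf{Step 3 (margin conversion, the main obstacle).} Write $\delta(s)=\tv(q(\cdot\mid s),\pi(\cdot\mid s))$. Step 2 gives
\[
\text{second term}\le 2\delta(s)\,\mathbf 1\{\Delta_k(s)\le 2\delta(s)\}.
\]
Fix a threshold $u>0$ and split on the event $\{\Delta_k\le u\}$. On that event, bound the term by $2\delta\,\mathbf 1\{\Delta_k\le u\}$ and use Cauchy--Schwarz with \cref{ass:margin} to get
\[
2\sqrt{\EE\delta^2}\,\sqrt{Cu^\beta}.
\]
Off that event, $\Delta_k>u$ forces $\delta\ge u/2$, and Markov's inequality on $\delta^2$ bounds the contribution by $8\,\EE\delta^2/u^2$. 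Pinsker gives $\EE\delta^2\le e(\pi)/2$. The total is then $O\bigl(e^{1/2}u^{\beta/2}+e\,u^{-2}\bigr)$, and choosing $u\asymp e^{1/(\beta+2)}$ balances the two pieces. This yields the second term $\le c\,e(\pi)^{\beta/(\beta+2)}=c\,e(\pi)^p$, with $c$ depending only on $C,\beta$. For $e$ large, the trivial bound $1$ covers this after enlarging $c$.

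The delicate point is getting exactly the exponent $p=\beta/(\beta+2)$ rather than a weaker rate. If the Cauchy--Schwarz route gives a different exponent, I will instead bound the second term directly by $\mathbf 1\{\Delta_k\le2\delta\}$ and optimize a single threshold. This presumably matches the conversion the paper attributes to \cref{sec:app-proof-sketches}. Multiplying the sum $\eta+c\,e(\pi)^p$ by $L$ from Step 1 gives the theorem.
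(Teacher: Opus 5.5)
Your proposal is correct. Step~1 is the paper's argument: a union bound over omission events, then the occurrence-weighted identity for $\calD$.

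Where you genuinely differ is the per-state conversion.

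\textbf{The paper's route.} In \cref{lem:kl-topk} the paper charges a full unit whenever the lists differ, $Q(T_P^c)\le Q(T_Q^c)+\mathbf 1\{T_Q\neq T_P\}$. It then bounds $\mathbf 1\{T_Q\neq T_P\}\le\mathbf 1\{\Delta_k\le u\}+2\,\mathrm{KL}/u^2$ pointwise, using $\tv\ge\Delta_k/2$ together with Pinsker and Markov. This gives exactly $p=\beta/(\beta+2)$ with the explicit constant $c(C,\beta)$. It holds for every value of $e(\pi)$ and needs neither Cauchy--Schwarz nor a case split.

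\textbf{Your route.} You track the teacher mass actually lost, $q(T_q)-q(T_\pi)\le\min\{1,2\delta\}$; your pairing argument for this is right. This is a finer cost than the paper's indicator. The price is threefold: Cauchy--Schwarz across states, a separate treatment of $e(\pi)>1$ (fine once you take $c\ge 1$), and a constant that is not the paper's $c(C,\beta)$. The main text leaves $c$ unspecified, so this is harmless.

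\textbf{One slip.} The choice $u\asymp e^{1/(\beta+2)}$ does not balance $e^{1/2}u^{\beta/2}$ against $e\,u^{-2}$; balancing would need $u\asymp e^{1/(\beta+4)}$. At your choice, however, the two pieces are $O(e^{(\beta+1)/(\beta+2)})$ and $O(e^{\beta/(\beta+2)})$. So for $e\le 1$ you still get $O(e^{p})$, and the fallback is unnecessary. That fallback, bounding by $\mathbf 1\{\Delta_k\le 2\delta\}$ and optimizing one threshold, is precisely the paper's proof.

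\textbf{What your route buys.} It is strictly sharper than the paper's. If you bound $2\delta\,\mathbf 1\{\delta>u/2\}\le 4\delta^2/u$ rather than using $2\delta\le 2$, then $u=e^{1/(\beta+2)}$ does balance and yields the exponent $(\beta+1)/(\beta+2)>p$. Even with no margin condition, $2\,\EE\delta\le\sqrt{2e}$ gives exponent $1/2$. So your analysis shows that the exponent $\beta/(\beta+2)$ in the theorem comes from charging $\mathbf 1\{T_Q\neq T_P\}$ instead of the lost mass. The paper's route, in exchange, is shorter and produces an explicit constant valid uniformly in $e(\pi)$.
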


The bound separates the number of local decisions, the intrinsic top-$k$ loss, and the learned ranking error.

\begin{theorem}[Setting 1 geometric ERM rate]
\label{thm:flat-geom-neurips}
Assume the setting of \cref{thm:flat-neurips} and invoke \cref{ass:geometry} for the flat class $H$ with doubling dimension $d$ and probability floor $p_{\min}>0$.
If $\hat\pi_N$ is trained by ERM under log-loss, then with probability at least $1-\delta$,
\[
e(\hat\pi_N)
\le
\inf_{\pi\in H} e(\pi)
+
O\!\left(\Psi_d(N)+\sqrt{\frac{\log(1/\delta)}{N}}\right),
\quad\text{where}\quad
\Psi_d(N)\simeq
\begin{cases}
N^{-1/2}, & d=1,\\
N^{-1/2}\log N, & d=2,\\
N^{-1/d}, & d>2
\end{cases}
\]
up to polylogarithmic factors.
Substituting this estimate into \cref{thm:flat-neurips} gives the corresponding finite-sample success lower bound on $\EE_{s_0\sim q_{0\mid\esucc}}[V_T^{\hat\pi_N}(s_0)]$.
\end{theorem}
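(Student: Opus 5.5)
The plan is to treat the success-conditioned log-loss as a sequential, occurrence-weighted empirical process with a random denominator. We bound the uniform deviation of the per-occurrence risk over $H$ by a chaining argument adapted to the doubling geometry, and then transfer excess log-loss to excess KL.

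\emph{Step 1: per-trace risk functional.} Write the population risk as $R(\pi)=\EE_{\PP}[\sum_{i\le L}-\log\pi(a_i\mid s_i)]/\EE_{\PP}[L]$. The excess risk satisfies $R(\pi)-R(q)=e(\pi)$, because the occurrence-weighted mixture $\calD$ has conditional action law $q(\cdot\mid s)$. The empirical objective is the ratio $\hat R_N(\pi)=\sum_n\ell_n(\pi)/\sum_n L_n$, where $\ell_n(\pi)=\sum_{i\le L_n}-\log\pi(a_i^{(n)}\mid s_i^{(n)})$. The pairs $(\ell_n(\pi),L_n)$ are i.i.d. across traces, which avoids any mixing argument within a trace. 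By the probability floor, $|\ell_n(\pi)|\le L_{\max}\log(1/p_{\min})$. The ratio is handled by writing $\hat R_N(\pi)-R(\pi)=\frac{N}{\sum L_n}\bigl(\frac1N\sum_n\ell_n(\pi)-R(\pi)\frac1N\sum_nL_n\bigr)$. The prefactor is at most $2/\mu_{\min}$ once $\frac1N\sum L_n\ge\mu_{\min}/2$, and Hoeffding gives that event with probability $1-e^{-cN}$. This is where the $\sqrt{\log(1/\delta)/N}$ term enters.

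\emph{Step 2: uniform deviation via chaining.} Consider the centered i.i.d. process $Z_N(\pi)=\frac1N\sum_n[\ell_n(\pi)-R(\pi)L_n]$. Lipschitzness of $\pi$ in TV, combined with the probability floor, makes $-\log\pi(a\mid s)$ Lipschitz in a suitable metric on $H$. Since $|\calA|$ is finite, it is Lipschitz in $s$ as well. Covering the compact $d$-doubling state space with $(D/\eps)^d$ balls, and controlling the finitely many action coordinates, the log-covering number of the Lipschitz class is of order $\eps^{-d}$, up to constants in $D$ and $L_{\max}$. Dudley's entropy integral, truncated at scale $\alpha$, gives $\inf_\alpha\{\alpha+N^{-1/2}\int_\alpha^1\eps^{-d/2}d\eps\}$. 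This evaluates to $N^{-1/2}$ for $d=1$, to $N^{-1/2}\log N$ for $d=2$, and to $N^{-1/d}$ for $d>2$, which is exactly $\Psi_d(N)$. McDiarmid's inequality, with bounded differences of order $L_{\max}\log(1/p_{\min})/N$, adds the confidence term.

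\emph{Step 3: ERM comparison and substitution.} Use the standard decomposition $e(\hat\pi_N)-e(\pi^\star)\le 2\sup_{\pi\in H}|\hat R_N(\pi)-R(\pi)|$ for any $\pi^\star\in H$, and take the infimum over $\pi^\star$. Combining this with Steps 1--2 yields the stated bound with probability at least $1-\delta$; the $e^{-cN}$ event is absorbed into $\delta$. Finally, $x\mapsto x^p$ is monotone and subadditive, since $p<1$. Plugging the bound into \cref{thm:flat-neurips} therefore gives the lower bound
\[
1-L\Bigl(\eta+c\bigl(\inf_H e+O(\Psi_d(N)+\sqrt{\log(1/\delta)/N})\bigr)^p\Bigr).
\]

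The main obstacle is Step 2. The covering argument needs a metric on $H$ in which the log-loss is Lipschitz. The assumption states TV-Lipschitzness in $s$, not a parameter metric. I would first try to view $H$ as a subset of the $\rho$-floored, TV-Lipschitz maps $\calS\to\Delta(\calA)$ with the sup-norm. On this set $\log$ is $1/p_{\min}$-Lipschitz, so the classical Kolmogorov--Tikhomirov entropy bound $\eps^{-d}$ for Lipschitz functions on doubling spaces applies directly. The polylog slack in the statement covers the loose constants from the action dimension and from $L_{\max}$.
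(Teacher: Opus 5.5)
Your proposal is correct and follows essentially the same route as the paper. Both arguments use Hoeffding control of the random denominator, a trace-level uniform deviation bound via symmetrization and Dudley chaining with the sup-norm entropy $(L_\ell D/\eps)^d$ of the floored, TV-Lipschitz log-loss class, the ERM comparison $R(\hat\pi_N)\le\inf_H R+2\sup_H|\hat R_N-R|$, and subtraction of the teacher entropy to pass from log-loss to $e$. The only cosmetic differences are that you fold numerator and denominator into the single centered process $\ell_n(\pi)-R(\pi)L_n$, and you use classical i.i.d.\ symmetrization plus McDiarmid where the paper invokes sequential Rademacher complexity, which it then bounds by the same sup-norm covering.
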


\section{Setting 2: Hierarchical Teacher and Hierarchical Student}
\label{sec:setting-hier}

Setting~2 introduces reusable proof structure.
The teacher chooses a high-level proof DAG and interleaves graph decisions with low-level local solving.
In the main setting, the learner observes only the inlined trace and trains through a posterior over latent proof DAGs.

\paragraph{Low-level proof graphs and high-level proof DAGs.}
A low-level proof graph records tactic-level states and verifier-approved actions.
A high-level proof DAG groups connected low-level fragments into proof blocks, with edges indicating block calls or reuse.

\paragraph{Data Generation.}
Let $q_0$ be the base initial theorem distribution on $\calS_0\subset\calS$, and let $\Latent$ be the space of high-level proof DAGs.
The base graph teacher is
\[
q_{\Latent}^{0}:\calS_0\leadsto\Latent.
\]
Given a graph, high-level search uses states $\calS_{\mathrm{hi}}$ that record the current graph frontier and the set of already solved blocks, with graph actions
$\calA_{\mathrm{hi}}$
such as selecting an unresolved block, expanding a block, or reusing a solved block.
Formally, $\calA_{\mathrm{hi}}$ is a disjoint union of graph-manipulation actions, not a low-level tactic alphabet; see \cref{def:app-hilevel-actions}.
The base high-level teacher is $q_{\mathrm{hi}}^{0}:\calS_{\mathrm{hi}}\leadsto\calA_{\mathrm{hi}}$.
Terminal blocks are solved on block-local states $\calS_{\mathrm{lo}}$ using low-level tactic actions $\calA_{\mathrm{lo}}$, with teacher
\[
q_{\mathrm{lo}}^{0}:\calS_{\mathrm{lo}}\leadsto\calA_{\mathrm{lo}}.
\]
Thus the teacher protocol is
\[
s_0 \xrightarrow{q_{\Latent}^{0}} M
\xrightarrow{q_{\mathrm{hi}}^{0}} \text{block schedule and reuse decisions}
\xrightarrow{q_{\mathrm{lo}}^{0}} \text{local verified proofs}.
\]
Together, $(q_0,q_{\Latent}^{0},q_{\mathrm{hi}}^{0},q_{\mathrm{lo}}^{0})$ induce a joint rollout law
\[
\PP_{\mathrm{hier}}^{0}
\quad\text{on}\quad
(s_0,M,y_{\mathrm{hi}},y_{\mathrm{lo}},y_{\mathrm{flat}}),
\]
where $y_{\mathrm{hi}}$ is the high-level decision trace, $y_{\mathrm{lo}}$ is the collection of local low-level traces for unique terminal blocks, and $y_{\mathrm{flat}}$ is the unfolded inlined trace.
Write
\[
\PP_{\mathrm{hier}}
:=
\PP_{\mathrm{hier}}^{0}(\,\cdot\mid\esucc\,),
\]
where $\esucc$ is the event that the generated hierarchical proof verifies within budget $T$.
Let $q_{0,\mathrm{hier}\mid\esucc}$ be the $s_0$-marginal of $\PP_{\mathrm{hier}}$.
The conditional one-step teacher kernels induced by $\PP_{\mathrm{hier}}$ are denoted $q_{\mathrm{hi}}$ and $q_{\mathrm{lo}}$ in the top-$k$ quantities and imitation losses.
The mixtures $\calD_{\mathrm{hi}}$ and $\calD_{\mathrm{lo}}$ are the analogous occurrence-weighted mixtures over high-level and low-level decision occurrences in $y_{\mathrm{hi}}$ and $y_{\mathrm{lo}}$.

\paragraph{Learning Protocol.}
Let $H_{\Latent}$, $H_{\mathrm{hi}}$, and $H_{\mathrm{lo}}$ be the chosen graph and policy classes.
The hierarchical student used at test time consists of a graph predictor and two policy kernels
\[
\gpred_\theta:\calS_0\leadsto\Latent,\qquad
\pi_{\mathrm{hi}}:\calS_{\mathrm{hi}}\leadsto\calA_{\mathrm{hi}},\qquad
\pi_{\mathrm{lo}}:\calS_{\mathrm{lo}}\leadsto\calA_{\mathrm{lo}}.
\]
In the main setting, graph annotations are not observed.
The training sample consists only of successful inlined traces
$(s_0^{(n)},y_{\mathrm{flat}}^{(n)})_{n=1}^N$.
The learner therefore uses a training-time inference posterior
\[
\nu_\phi:\calS_0\times\calY_{\mathrm{flat}}\leadsto\Latent
\]
over proof DAGs consistent with the full trace.
This posterior may use future states in the completed proof trace and is not available at test time.
An EM-style training step alternates between estimating or fixing $\nu_\phi$ and minimizing posterior-weighted graph, high-level, and low-level log-losses.
The graph predictor $\gpred_\theta$ is trained from the soft labels $\nu_\phi(\cdot\mid s_0,y_{\mathrm{flat}})$; with $\phi$ fixed during the policy update, the high- and low-level objective is
\[
\min_{\pi_{\mathrm{hi}},\pi_{\mathrm{lo}}}
\widehat R_{\mathrm{hi},N}^{\phi}(\pi_{\mathrm{hi}})
+
\widehat R_{\mathrm{lo},N}^{\phi}(\pi_{\mathrm{lo}}),
\]
where, for $\tau\in\{\mathrm{hi},\mathrm{lo}\}$,
\[
\widehat R_{\tau,N}^{\phi}(\pi_\tau)
:=
\frac{
\sum_{n=1}^N
\EE_{M\sim \nu_\phi(\cdot\mid s_0^{(n)},y_{\mathrm{flat}}^{(n)})}
\!\left[
\sum_{i=1}^{L_\tau(M,y_{\mathrm{flat}}^{(n)})}
-\log\pi_\tau(a_{\tau,i}^{(n,M)}\mid s_{\tau,i}^{(n,M)})
\right]}
{\sum_{n=1}^N
\EE_{M\sim \nu_\phi(\cdot\mid s_0^{(n)},y_{\mathrm{flat}}^{(n)})}
\!\left[L_\tau(M,y_{\mathrm{flat}}^{(n)})\right]}.
\]
The ideal posterior is $\PP_{\mathrm{hier}}(M\mid s_0,y_{\mathrm{flat}})$.
We write $\Delta_{\mathrm{post}}(\phi)$ for the graph-posterior mismatch and
$\Delta_{\mathrm{post},\tau}^{\mathrm{occ}}(\phi)$ for the occurrence-weighted posterior mismatch of the $\tau$-decision pair law; the precise definitions are in \cref{sec:app-proof-sketches}.

\paragraph{Test-Time Search.}
Given a fresh $s_0\sim q_{0,\mathrm{hier}\mid\esucc}$, the hierarchical prover first proposes one or more proof DAGs using $\gpred_\theta(\cdot\mid s_0)$.
For a proposed graph, the search maintains a high-level frontier and a memo table of solved blocks.
At a high-level state, $\pi_{\mathrm{hi}}$ ranks graph actions such as selecting an unresolved block, expanding it into child obligations, ordering the schedule, or reusing a compatible solved block.
When a terminal block must be discharged, the low-level search uses $\pi_{\mathrm{lo}}$ to rank tactic actions on block-local states and checks them by the ordinary verifier transition.
A solved block is stored in the memo table and can be consumed by later high-level actions without reproving its internal low-level trace.
Success means producing a high-level certificate whose unfolded low-level proof is verified within the horizon and search budget; the reported quantity is $\EE[V_T^{(\gpred_\theta,\pi_{\mathrm{hi}},\pi_{\mathrm{lo}})}(s_0)]$.

\paragraph{Graph coverage.}
A proposed high-level graph $\widehat M$ covers a teacher certificate $M$ if the high-level certificate used by $M$ embeds into $\widehat M$ by a map that preserves block interfaces, dependency edges, and the candidate sets needed for the teacher high-level actions.
For a graph predictor $\gpred$, define
\[
\Delta_{\mathrm{graph}}
:=
\PP_{(s_0,M)\sim\PP_{\mathrm{hier}},\ \widehat M\sim\gpred(\cdot\mid s_0)}
\!\left(\widehat M\text{ does not cover }M\right).
\]

\begin{assumption}[Graph prediction from latent labels]
\label{ass:graph-learning}
Let $e_{\Latent}(\gpred)$ be the excess graph log-loss under the true success-filtered law of $(s_0,M)$.
Viewed as a one-label trace problem, graph-label learning satisfies the geometric learning condition of \cref{ass:geometry}, with hypothesis class $H_{\Latent}$ and dimension $d_{\Latent}$; for posterior-weighted graph ERM, this is understood conditionally on the fixed posterior $\nu_\phi$ or under sample splitting.
In addition, graph prediction satisfies a low-noise conversion:
there exist $\zeta_{\Latent}\ge0$, $C_{\Latent}>0$, and $p_{\Latent}\in(0,1]$ such that
\[
\Delta_{\mathrm{graph}}(\gpred)
\le
\zeta_{\Latent}+C_{\Latent}e_{\Latent}(\gpred)^{p_{\Latent}}
\qquad(\gpred\in H_{\Latent}).
\]
\end{assumption}

The first clause gives the graph ERM learning curve used below, while the second converts graph log-loss into coverage error.
A multiclass Tsybakov margin for a dominant graph label is a sufficient condition for the conversion; $\zeta_{\Latent}$ is the irreducible coverage error of the best initial-state graph label.

\begin{assumption}[Reusable proof structure]
\label{ass:reusable}
In this paper, reusable proof structure means that successful proofs admit high-level DAG representations whose shared nodes correspond to proof blocks that are proved once and consumed by several later blocks.
Setting~2 assumes that such reusable proof DAGs occur often enough to be learned approximately.
The DAG is not assumed to be uniquely recoverable from raw syntax.
In LK it may correspond to cut-derived local blocks; in Lean it may correspond to \lstinline{have}-introduced lemmas, reusable \lstinline{calc}/\lstinline{refine} blocks, or local solver calls consumed multiple times later.
\end{assumption}

\begin{theorem}[Setting 2 success guarantee]
\label{thm:hier-neurips}
Let $L_{\mathrm{hi}}$ and $L_{\mathrm{lo}}$ be the average numbers of high-level and low-level decisions in a successful hierarchical proof.
Assume \cref{ass:search-complete} for the Setting~2 test-time search with memoization and \cref{ass:margin} for $\tau=\mathrm{hi},\mathrm{lo}$.
Then there exist constants $c_{\mathrm{hi}}$ and $c_{\mathrm{lo}}$ such that
\begin{align*}
\EE_{s_0\sim q_{0,\mathrm{hier}\mid\esucc}}[V_T^{(\gpred,\pi_{\mathrm{hi}},\pi_{\mathrm{lo}})}(s_0)]
\ge 1
&
-\Delta_{\mathrm{graph}}
-L_{\mathrm{hi}}\Bigl(\eta_{\mathrm{hi}}
+c_{\mathrm{hi}}e_{\mathrm{hi}}(\pi_{\mathrm{hi}})^{p_{\mathrm{hi}}}\Bigr)\\
&
-L_{\mathrm{lo}}\Bigl(\eta_{\mathrm{lo}}
+c_{\mathrm{lo}}e_{\mathrm{lo}}(\pi_{\mathrm{lo}})^{p_{\mathrm{lo}}}\Bigr).
\end{align*}
\end{theorem}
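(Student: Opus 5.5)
The plan is to reduce \cref{thm:hier-neurips} to a union bound over three failure sources, reusing the single-level argument behind \cref{thm:flat-neurips} once for the high-level layer and once for the low-level layer. Draw $(s_0,M,y_{\mathrm{hi}},y_{\mathrm{lo}})\sim\PP_{\mathrm{hier}}$ and, independently, a student graph $\widehat M\sim\gpred(\cdot\mid s_0)$. Define the good event $\mathcal G$ as the intersection of three events: (i) $\widehat M$ covers $M$; (ii) every teacher high-level action $a_{\mathrm{hi},i}$ in $y_{\mathrm{hi}}$ lies in $\mathrm{Top}_k(\pi_{\mathrm{hi}}(\cdot\mid s_{\mathrm{hi},i}))$; (iii) every teacher low-level action in the local traces $y_{\mathrm{lo}}$ of the unique terminal blocks lies in the student top-$k$ list at its block-local state. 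On $\mathcal G$, \cref{ass:search-complete} for the memoized Setting~2 search gives that the search replays the teacher certificate. Coverage transports the teacher's high-level candidate sets into $\widehat M$. Each unique block is solved once and stored in the memo table, and reuse actions are themselves high-level decisions counted in $y_{\mathrm{hi}}$. Hence $V_T\ge\PP(\mathcal G\mid s_0)$, and after averaging over $s_0\sim q_{0,\mathrm{hier}\mid\esucc}$ it suffices to show $\PP(\mathcal G^c)\le\Delta_{\mathrm{graph}}+\text{(hi term)}+\text{(lo term)}$.

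The first term is exactly $\Delta_{\mathrm{graph}}$ by definition. For the high-level term, apply a union bound over occurrences:
\[
\PP(\text{(ii) fails})\le\EE_{\PP_{\mathrm{hier}}}\Bigl[\textstyle\sum_{i=1}^{L_{\mathrm{hi}}(y)}\mathbf 1\{a_{\mathrm{hi},i}\notin\mathrm{Top}_k(\pi_{\mathrm{hi}}(\cdot\mid s_{\mathrm{hi},i}))\}\Bigr]=L_{\mathrm{hi}}\,\PP_{(s,a)\sim\calD_{\mathrm{hi}}}\bigl(a\notin\mathrm{Top}_k(\pi_{\mathrm{hi}}(\cdot\mid s))\bigr).
\]
The equality uses the occurrence-weighted definition of $\calD_{\mathrm{hi}}$, with $L_{\mathrm{hi}}=\EE[L_{\mathrm{hi}}(y)]$. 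Since $a\sim q_{\mathrm{hi}}(\cdot\mid s)$ under $\calD_{\mathrm{hi}}$, the per-state omission probability splits into two pieces. The first is the mass $\eta_k(s)$ outside the teacher's top-$k$. The second is the teacher mass on $\mathrm{Top}_k(q)\setminus\mathrm{Top}_k(\pi)$, which is at most $\min\{1,\,2\,\tv(q,\pi)/\Delta_k(s)\}$, since swapping a teacher top-$k$ action out requires the student to move probability by at least the gap. Pinsker then bounds $\tv$ by $\sqrt{\mathrm{KL}/2}$. The Tsybakov step from \cref{ass:margin} follows. Split on $\{\Delta_k\le u\}$, whose probability is at most $Cu^\beta$. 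On the complement, use Markov/Jensen to get $\EE[\sqrt{\mathrm{KL}}]/u\le\sqrt{e_{\mathrm{hi}}}/u$. Optimizing $u\asymp e_{\mathrm{hi}}^{1/(2(\beta+1))}$ gives a bound of the form $c\,e^{p}$. Here I intend to invoke the conversion already established in \cref{sec:app-proof-sketches} for \cref{thm:flat-neurips}, which yields the exponent $p=\beta/(\beta+2)$, rather than re-derive it. This produces $L_{\mathrm{hi}}(\eta_{\mathrm{hi}}+c_{\mathrm{hi}}e_{\mathrm{hi}}^{p_{\mathrm{hi}}})$. The low-level term is identical with $\calD_{\mathrm{lo}}$, $q_{\mathrm{lo}}$, and $\pi_{\mathrm{lo}}$, summing only over decisions inside unique terminal blocks.

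The main obstacle is the bookkeeping that makes the low-level count $L_{\mathrm{lo}}$ (unique blocks) rather than the inlined length, while keeping the replay argument sound. Two things must be checked. First, a solved block in the memo table is consumed verbatim, so no low-level omission event is incurred at reused occurrences. Second, the block-local state at which the student is ranked is the same state that appears in $y_{\mathrm{lo}}$, so the student faces exactly $\calD_{\mathrm{lo}}$ and not a distribution shifted by the graph mismatch. I would handle both by working on the coupling where $\widehat M$ embeds $M$, so the search visits the teacher's blocks with identical interfaces, and by charging any residual discrepancy to the coverage event already excluded by $\Delta_{\mathrm{graph}}$. Independence between $\widehat M$ and $y_{\mathrm{hi}},y_{\mathrm{lo}}$ given $s_0$ is not needed, since a plain union bound suffices.
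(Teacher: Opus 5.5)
Your proposal is correct and follows the paper's proof essentially step for step. It takes a union bound over graph non-coverage and the high- and low-level top-$k$ omission events along the teacher certificate, uses the occurrence-weighted identity to turn each sum into $L_\tau$ times a per-decision omission probability, and then applies the KL-to-top-$k$ conversion (\cref{lem:kl-topk}) once for $\tau=\mathrm{hi}$ and once for $\tau=\mathrm{lo}$.

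One caution about your inline sketch of that conversion. Replacing the indicator by $\min\{1,2\tv/\Delta_k\}$ and applying Markov to $\sqrt{\mathrm{KL}}$ only gives the exponent $\beta/(2\beta+2)$, which is weaker than the stated one. The stated $p=\beta/(\beta+2)$ comes from keeping the indicator and using $\mathbf 1\{\tv\ge\Delta_k/2\}\le 2\,\mathrm{KL}/\Delta_k^2$, i.e., Markov on $\mathrm{KL}$ itself. So deferring to the lemma, as you propose, is the right move.
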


Unlike the flat bound, the hierarchical bound separates graph-level decomposition errors from local solving errors.
Hierarchy helps whenever it reduces the decision counts, sharpens the intrinsic top-$k$ distributions, or makes either local learning problem easier.

\begin{theorem}[Setting 2 latent-graph EM policy-learning rate]
\label{thm:hier-geom-neurips}
Assume the setting of \cref{thm:hier-neurips} and invoke \cref{ass:geometry} for $H_{\mathrm{hi}}$ and $H_{\mathrm{lo}}$ with doubling dimensions $d_{\mathrm{hi}}$ and $d_{\mathrm{lo}}$.
The observed training data are successful inlined traces, and the high-level graph $M$ is latent.
Condition on a fixed inference posterior $\nu_\phi$, or use sample splitting so that $\phi$ is independent of the samples used for the posterior-weighted policy update.
If the high- and low-level students are trained by posterior-weighted ERM under log-loss, then with probability at least $1-\delta$,
for each $\tau\in\{\mathrm{hi},\mathrm{lo}\}$,
\[
e_\tau(\hat\pi_N)
\le
\inf_{\pi\in H_\tau} e_\tau(\pi)
+O(r_{\tau,N}),
\]
where $r_{\tau,N}:=\Psi_{d_\tau}(N)+\sqrt{\log(1/\delta)/N}+\sqrt{\Delta_{\mathrm{post},\tau}^{\mathrm{occ}}(\phi)}$, and $\Psi_{d_\tau}$ is the same rate function presented in \cref{thm:flat-geom-neurips}.
This theorem controls only the high- and low-level policies; the test-time graph predictor is learned separately because $\nu_\phi$ takes the completed trace $y_{\mathrm{flat}}$ as input.
\end{theorem}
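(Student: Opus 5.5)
The plan is to reduce \cref{thm:hier-geom-neurips} to the flat argument behind \cref{thm:flat-geom-neurips}, applied separately to $\tau=\mathrm{hi}$ and $\tau=\mathrm{lo}$, and to pay for the latent graph with one extra perturbation term. Fix $\tau$ and condition on $\phi$; under sample splitting, $\phi$ is independent of the policy-update samples, so it can be treated as deterministic. Then each observed trace $(s_0^{(n)},y_{\mathrm{flat}}^{(n)})$ yields an i.i.d.\ weighted bag of $\tau$-decision pairs. The weights are $\nu_\phi(M\mid s_0,y_{\mathrm{flat}})$, and the pairs are $(s_{\tau,i}^{(n,M)},a_{\tau,i}^{(n,M)})$. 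Define the posterior-weighted occurrence mixture $\calD_\tau^\phi$ by the same ratio-of-expectations formula as $\calD$ in Setting~1, with numerator and denominator averaged over $M\sim\nu_\phi$. The population version of $\widehat R_{\tau,N}^{\phi}$ is then the cross-entropy $R_\tau^\phi(\pi)=\EE_{\calD_\tau^\phi}[-\log\pi(a\mid s)]$. Note that $R_\tau^\phi(\pi)-R_\tau^\phi(\pi')$ equals a difference of KL-type excess risks with respect to the induced conditional kernel $q_\tau^\phi$.

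\textbf{Step 1 (estimation).} I would invoke the occurrence-weighted random-denominator ERM bound of \cref{sec:app-seq}, which is the ingredient behind \cref{thm:flat-geom-neurips}, with the per-trace loss replaced by its $\nu_\phi$-average. Two properties carry over unchanged from the flat case. Per-trace numerators are bounded by $L_{\tau,\max}\log(1/\rho_\tau)$, and the denominator is bounded below in expectation by $\mu_{\tau,\min}$, both by \cref{ass:geometry}. The log-loss class is Lipschitz in the state under the TV-Lipschitz and floor conditions. A Dudley chaining/covering bound on the $D_\tau$-diameter, $d_\tau$-doubling space then gives the rate $\Psi_{d_\tau}(N)$. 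Combined with McDiarmid's inequality for the $\sqrt{\log(1/\delta)/N}$ term, and a ratio argument that controls the random denominator around its mean, this yields $R_\tau^\phi(\hat\pi_N)\le\inf_{H_\tau}R_\tau^\phi+O(\Psi_{d_\tau}(N)+\sqrt{\log(1/\delta)/N})$. A union bound over $\tau\in\{\mathrm{hi},\mathrm{lo}\}$ costs only a constant in $\delta$.

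\textbf{Step 2 (posterior transfer).} This is the main obstacle. The target $e_\tau(\pi)$ is defined under the ideal posterior, i.e.\ the law $\calD_\tau$ induced by $\PP_{\mathrm{hier}}$. So I need $|e_\tau(\pi)-(R_\tau^\phi(\pi)-\mathrm{const})|\lesssim\sqrt{\Delta_{\mathrm{post},\tau}^{\mathrm{occ}}(\phi)}$ uniformly over $H_\tau$. Write both risks as expectations of the bounded function $-\log\pi(a\mid s)\in[0,\log(1/\rho_\tau)]$ under the two occurrence-weighted pair laws. The difference is then at most $\log(1/\rho_\tau)$ times their TV distance. Taking $\Delta_{\mathrm{post},\tau}^{\mathrm{occ}}$ to be a KL-type mismatch of these pair laws, as suggested by its definition in \cref{sec:app-proof-sketches}, Pinsker's inequality converts this into the $\sqrt{\Delta^{\mathrm{occ}}_{\mathrm{post},\tau}}$ term. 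The entropy constants of the teacher kernels $q_\tau$ versus $q_\tau^\phi$ appear on both sides of the excess-risk comparison, so they can be absorbed by the same TV bound. I must check that the ratio normalization (occurrence weighting) does not spoil this. I would handle that by bounding numerator and denominator perturbations separately, using $L_\tau\le L_{\tau,\max}$ and $\EE L_\tau\ge\mu_{\tau,\min}$. If the paper's definition is already a TV distance on the normalized pair law, the square root is merely loose.

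\textbf{Step 3 (assembly).} Chain the inequalities
\[
e_\tau(\hat\pi_N)\le \tilde e^\phi_\tau(\hat\pi_N)+O(\sqrt{\Delta})\le\inf_{H_\tau}\tilde e^\phi_\tau+O(\Psi+\sqrt{\log(1/\delta)/N})+O(\sqrt{\Delta})\le\inf_{H_\tau}e_\tau+O(r_{\tau,N}),
\]
where $\tilde e^\phi_\tau$ denotes the excess risk under the posterior-weighted mixture $\calD_\tau^\phi$. Step 2 is used twice: once at $\hat\pi_N$, and once at a near-minimizer of $e_\tau$.

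Finally, note why the graph predictor is excluded from the statement. The posterior $\nu_\phi$ conditions on $y_{\mathrm{flat}}$, so Steps 1--3 control $\pi_{\mathrm{hi}}$ and $\pi_{\mathrm{lo}}$ only. The predictor $\gpred_\theta$ is handled by \cref{ass:graph-learning}.
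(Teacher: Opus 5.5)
Your proposal is correct and follows the paper's proof essentially step for step: conditional on the fixed posterior, the random-denominator geometric ERM bound (\cref{prop:geom-random-denom}) controls the posterior-weighted risk $R_\tau^\phi$, and \cref{lem:posterior-mismatch-neurips} transfers this uniformly to the true occurrence-weighted risk $R_\tau^\star$ (bounded log-loss, then total variation, then Pinsker); the $\pi$-independent entropy term is then subtracted and the two decision types are union-bounded. The paper defines $\Delta_{\mathrm{post},\tau}^{\mathrm{occ}}(\phi)$ directly as a KL divergence between the normalized occurrence-weighted pair laws, so your separate numerator/denominator perturbation step is unnecessary, and the entropy constants simply cancel in the excess-over-infimum comparison rather than needing a TV bound of their own.
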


\begin{theorem}[Setting 2 graph-predictor rate]
\label{thm:graph-pred-neurips}
Assume \cref{ass:graph-learning}.
Train $\hat{\gpred}_N\in H_{\Latent}$ by posterior-weighted graph ERM using $\nu_\phi$ as soft labels.
Then with probability at least $1-\delta$,
\[
e_{\Latent}(\hat{\gpred}_N)
\le O(r_{\Latent,N}),
\qquad
\Delta_{\mathrm{graph}}(\hat{\gpred}_N)
\le \zeta_{\Latent}+C_{\Latent}O(r_{\Latent,N})^{p_{\Latent}},
\]
where
$r_{\Latent,N}:=\Psi_{d_{\Latent}}(N)+\sqrt{\log(1/\delta)/N}+\sqrt{\Delta_{\mathrm{post}}(\phi)}$.
\end{theorem}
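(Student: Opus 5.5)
We treat graph learning as a one-label trace problem ($L_{\Latent}\equiv 1$, one decision per sample, state $s_0$, label $M$) and reuse the occurrence-weighted ERM analysis behind \cref{thm:flat-geom-neurips}. The one new ingredient is that the soft labels come from $\nu_\phi$ rather than the true posterior $\PP_{\mathrm{hier}}(M\mid s_0,y_{\mathrm{flat}})$. We handle this by conditioning on $\phi$, or by sample splitting so that $\phi$ is independent of the $N$ samples used for the graph update. Conditionally on $\phi$, the posterior-weighted log-loss
\[
\widehat R_{\Latent,N}^{\phi}(\gpred)=\frac1N\sum_{n}\EE_{M\sim\nu_\phi(\cdot\mid s_0^{(n)},y_{\mathrm{flat}}^{(n)})}[-\log\gpred(M\mid s_0^{(n)})]
\]
is an average of i.i.d. terms with population risk $R^\phi(\gpred)$.

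\textbf{Step 1 (estimation error).} By \cref{ass:graph-learning}, the geometric condition of \cref{ass:geometry} holds for $H_{\Latent}$. The probability floor $\rho_{\Latent}$ bounds the loss by $\log(1/\rho_{\Latent})$ and makes $-\log\gpred$ Lipschitz in the parameter metric. A chaining/Dudley bound over the doubling class gives $\Psi_{d_{\Latent}}(N)$, and McDiarmid adds $\sqrt{\log(1/\delta)/N}$. Because the length is identically $1$, the random-denominator issue of \cref{sec:app-seq} disappears. Hence $R^\phi(\hat\gpred_N)-\inf_{H_{\Latent}}R^\phi\le O(\Psi_{d_{\Latent}}(N)+\sqrt{\log(1/\delta)/N})$ with probability at least $1-\delta$.

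\textbf{Step 2 (posterior mismatch).} Next we compare $R^\phi$ with the true-law risk $R(\gpred)=\EE_{\PP_{\mathrm{hier}}}[-\log\gpred(M\mid s_0)]$. Their difference is the expectation of a function bounded by $\log(1/\rho_{\Latent})$ under two label laws: the $\nu_\phi$-induced joint law and the true joint law of $(s_0,M)$. It is therefore at most $2\log(1/\rho_{\Latent})\,\tv$ between these laws. Pinsker converts this to $O(\sqrt{\Delta_{\mathrm{post}}(\phi)})$, taking $\Delta_{\mathrm{post}}$ to be the KL-type mismatch defined in \cref{sec:app-proof-sketches}; if it is already a TV quantity, the bound is even better. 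Applying this uniformly over $H_{\Latent}$, at $\hat\gpred_N$ and at a near-minimizer, gives $R(\hat\gpred_N)-\inf_{H_{\Latent}}R\le O(r_{\Latent,N})$. We identify $e_{\Latent}$ with this excess risk, which equals an excess conditional KL because the entropy term cancels. This also requires reading the statement's $e_{\Latent}(\hat\gpred_N)\le O(r_{\Latent,N})$ with the approximation term absorbed, i.e.\ $e_{\Latent}$ measured relative to the best in class; otherwise an additive $\inf_{H_{\Latent}}e_{\Latent}$ term appears.

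\textbf{Step 3 (coverage).} Plugging into the low-noise conversion of \cref{ass:graph-learning} gives $\Delta_{\mathrm{graph}}(\hat\gpred_N)\le\zeta_{\Latent}+C_{\Latent}O(r_{\Latent,N})^{p_{\Latent}}$. This uses only monotonicity of $x\mapsto x^{p_{\Latent}}$ with $p_{\Latent}\in(0,1]$.

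The main obstacle is Step 2. We must define the mismatch so that it controls the joint $(s_0,M)$ label law and not just per-trace posteriors, and we must keep the conditioning on $\phi$ clean so Step 1's i.i.d. argument applies. The approach is to check that averaging $\tv(\nu_\phi,\PP_{\mathrm{hier}}(\cdot\mid s_0,y_{\mathrm{flat}}))$ over the data law, then applying Jensen, yields the square-root term.
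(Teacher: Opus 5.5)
Your proposal is correct and follows the paper's proof essentially step for step. The steps are: the one-label-per-trace specialization of \cref{prop:geom-random-denom} plus the ERM inequality for $R_{\Latent}^{\phi}$, then transfer to the true risk via the bounded-loss/Pinsker/Jensen argument of \cref{lem:graph-posterior-mismatch}, then the low-noise conversion in \cref{ass:graph-learning}. Your caveat that $e_{\Latent}$ must be read as excess log-loss relative to the best predictor in $H_{\Latent}$ (or under a well-specified class) is the reading the paper's proof adopts tacitly.

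One small slip: under \cref{ass:geometry} the log-loss is Lipschitz in the state $s_0$, whose space has doubling dimension $d_{\Latent}$, not in a parameter metric. It is this state-space covering that produces the nonparametric rate $\Psi_{d_{\Latent}}(N)$.
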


Combining \cref{thm:hier-neurips,thm:hier-geom-neurips,thm:graph-pred-neurips} gives a full latent-graph bound for the test-time prover
$(\hat{\gpred}_N,\hat\pi_{\mathrm{hi},N},\hat\pi_{\mathrm{lo},N})$.

\begin{corollary}[Observed graph labels]
\label{cor:observed-graph-main}
If each successful training example includes the high-level graph label $M$, then $\nu_\phi$ may be replaced by the point mass $\delta_M$.
The posterior-mismatch terms vanish, and the graph predictor and both policies reduce to ordinary supervised ERM under the same geometric learning and graph low-noise assumptions.
\end{corollary}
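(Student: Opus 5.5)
The plan is to substitute the point mass $\nu_\phi(\cdot\mid s_0,y_{\mathrm{flat}})=\delta_M$ into each object that depends on $\nu_\phi$, and then check that every earlier theorem still applies with the posterior terms set to zero.

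First, take the \emph{ideal} posterior. If the label $M$ is observed, the training example is $(s_0,M,y_{\mathrm{flat}})$. The ideal posterior conditioned on everything observed is then $\PP_{\mathrm{hier}}(M'\mid s_0,M,y_{\mathrm{flat}})=\delta_M(M')$. Choosing $\nu_\phi=\delta_M$ therefore makes the inference posterior equal to the ideal posterior almost surely under $\PP_{\mathrm{hier}}$. Since $\Delta_{\mathrm{post}}(\phi)$ and $\Delta^{\mathrm{occ}}_{\mathrm{post},\tau}(\phi)$ measure the mismatch between $\nu_\phi$ and the ideal posterior, the first at the graph level and the second through occurrence-weighted $\tau$-decision pair laws, both vanish. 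The occurrence-weighted pair law induced by $\nu_\phi$ then coincides with the true one, so any divergence or total-variation definition from \cref{sec:app-proof-sketches} gives zero.

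Second, rewrite the objectives. Under $\nu_\phi=\delta_M$, each expectation $\EE_{M\sim\nu_\phi}$ in $\widehat R^{\phi}_{\tau,N}$ collapses to evaluation at the observed $M^{(n)}$. The objective becomes the ordinary occurrence-weighted log-loss ERM on the decision traces $(s_{\tau,i}^{(n,M^{(n)})},a_{\tau,i}^{(n,M^{(n)})})$, with the random denominator $\sum_n L_\tau$. In the same way, graph ERM with soft labels $\delta_{M^{(n)}}$ becomes ordinary supervised log-loss ERM on the pairs $(s_0^{(n)},M^{(n)})$. No conditioning or sample splitting is needed any more, because the posterior is a deterministic function of the data and carries no learned parameter $\phi$ that could correlate with the samples.

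Third, plug $\Delta_{\mathrm{post}}=\Delta^{\mathrm{occ}}_{\mathrm{post},\tau}=0$ into \cref{thm:hier-geom-neurips} and \cref{thm:graph-pred-neurips}. The rates become $r_{\tau,N}=\Psi_{d_\tau}(N)+\sqrt{\log(1/\delta)/N}$ and the analogous $r_{\Latent,N}$, which are exactly the supervised rates of \cref{thm:flat-geom-neurips} applied to each decision type. \Cref{ass:geometry} and the low-noise clause of \cref{ass:graph-learning} are still invoked, now unconditionally. Composing with \cref{thm:hier-neurips} then gives the full bound.

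The only delicate step is the first: confirming that the appendix definitions of the mismatch terms are stated relative to the posterior conditioned on the observed data, so that they are exactly zero rather than merely small. If the mismatch were instead defined against $\PP_{\mathrm{hier}}(M\mid s_0,y_{\mathrm{flat}})$, I would instead argue directly that the labelled empirical risks are unbiased for the true occurrence-weighted risks. The random-denominator ERM argument of \cref{sec:app-seq} then applies verbatim, with no posterior bias term.
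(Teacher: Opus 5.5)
Your proposal is correct and is essentially the paper's own argument: set $\nu_\phi=\delta_M$, conclude that the posterior-mismatch terms vanish, and note that the posterior-weighted objectives collapse to ordinary supervised log-loss ERM, so the earlier rate theorems apply with those terms removed. The subtlety you flag is real. The appendix defines $\Delta_{\mathrm{post}}(\phi)$ against $\PP_{\mathrm{hier}}(M\mid s_0,y_{\mathrm{flat}})$, which $\delta_M$ need not match unless $M$ is determined by $(s_0,y_{\mathrm{flat}})$. Your fallback is the right repair: the labelled risks are expectations under the true joint law of $(s_0,M,y_{\mathrm{flat}})$ and so equal the true risks exactly. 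Moreover, the occurrence-weighted pair law induced by $\delta_M$ coincides with $\bar{\calD}_\tau^\star$, so $\Delta^{\mathrm{occ}}_{\mathrm{post},\tau}(\phi)=0$ holds literally.
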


\section{Setting 3: Same Hierarchical Data, Two Students}
\label{sec:setting-comparison}

Setting~3 is the comparison setting.
The source data are generated by the hierarchical teacher from Setting~2, but the learner may either discard or model the latent sharing.
This isolates the effect of the representation and learning rule while keeping the data source fixed.

\paragraph{Unfolding and inlined traces.}
Given a high-level proof DAG, its unfolding replaces every call to a shared high-level block by a fresh copy of that block's low-level proof subgraph.
The result is a tree-shaped occurrence graph.
An inlined trace is any fixed traversal order of all low-level decision occurrences in this unfolded tree:
\[
y_{\mathrm{flat}}
=
\bigl((s_1,a_1),\dots,(s_L,a_L)\bigr),
\qquad
s_i\in\calS_{\mathrm{flat}},\ a_i\in\calA_{\mathrm{flat}}.
\]
The occurrence component of $s_i$ is part of the flat state.
Thus two copied occurrences of the same syntactic subgoal are distinct states for the flat protocol.

\paragraph{Data Generation.}%
The source distribution is the success-conditioned hierarchical law $\PP_{\mathrm{hier}}$ from Setting~2, but the observed training sample given to both students is only
\[
(s_0^{(n)},y_{\mathrm{flat}}^{(n)})_{n=1}^N.
\]

\paragraph{Learning Protocols.}
The flat student discards the latent graph variable and uses the flat protocol from Setting~1 on the unfolded traces:
it trains $\pi_{\mathrm{flat}}$ by maximum likelihood on occurrence-level tactic decisions and has no memoization action.
The flat mixture $\calD_{\mathrm{flat}}$ in this comparison is therefore the decision-state mixture induced by the marginal law of unfolded traces under $\PP_{\mathrm{hier}}$.
The hierarchical student uses the latent-graph learning protocol from Setting~2:
it treats $M$ as latent, infers proof DAGs using $\nu_\phi$, trains $\gpred_\theta$ by posterior-weighted graph ERM, and trains $\pi_{\mathrm{hi}}$ and $\pi_{\mathrm{lo}}$ by posterior-weighted maximum likelihood.

\paragraph{Test-Time Search.}
Both students are evaluated on fresh initial states $s_0\sim q_{0,\mathrm{hier}\mid\esucc}$ from the same success-conditioned problem distribution.
The flat student runs the Setting~1 flat search on the unfolded low-level representation, so each copied occurrence must be rediscovered by occurrence-level tactic search.
The hierarchical student runs the Setting~2 search, using the test-time graph predictor $\gpred_\theta$ together with high-level and low-level policies with memoization.
The horizon is assumed large enough for both the unfolded flat certificate and the memoized hierarchical certificate; otherwise the corresponding horizon-overflow probability is added to the failure bound.

\begin{theorem}[Setting 3 sufficient trace counts]
\label{thm:sample-neurips}
Fix a target failure level $\delta\in(0,1)$ and write $L_{\mathrm{hier}}:=L_{\mathrm{hi}}+L_{\mathrm{lo}}$.
Assume
$L_{\mathrm{flat}}\eta_{\mathrm{flat}}\le \delta/2$, and 
$\Delta_{\mathrm{graph}}+L_{\mathrm{hi}}\eta_{\mathrm{hi}}+L_{\mathrm{lo}}\eta_{\mathrm{lo}}\le \delta/2$.
Assume learning-curve upper bounds in the number $N$ of independent successful theorem traces:
$e_{\mathrm{flat}}(\hat\pi_N)\le a_{\mathrm{flat}}N^{-\gamma_{\mathrm{flat}}}$ and
$e_{\tau}(\hat\pi_N)\le a_{\tau}N^{-\gamma_{\tau}}$ for $\tau\in\{\mathrm{hi},\mathrm{lo}\}$, after absorbing approximation and inference floors into constants.
Let $\bar N_{\mathrm{flat}}(\delta)$ and $\bar N_{\mathrm{hier}}(\delta)$ be the explicit certified sufficient thresholds obtained by solving the right-hand sides of \cref{thm:flat-neurips,thm:hier-neurips}.
They are certificates produced by these upper bounds, not claims about the unknown minimal sample complexities.
Then these certificates satisfy, up to constants and polylogarithmic factors,
\[
\bar N_{\mathrm{flat}}(\delta)
\lesssim
\left(L_{\mathrm{flat}}/\delta\right)^{1/(p_{\mathrm{flat}}\gamma_{\mathrm{flat}})},
\qquad
\bar N_{\mathrm{hier}}(\delta)
\lesssim
\max_{\tau\in\{\mathrm{hi},\mathrm{lo}\}}
\left(L_\tau/\delta\right)^{1/(p_\tau\gamma_\tau)}.
\]
In the common-exponent case $p_{\mathrm{flat}}\gamma_{\mathrm{flat}}=p_{\mathrm{hi}}\gamma_{\mathrm{hi}}=p_{\mathrm{lo}}\gamma_{\mathrm{lo}}=:p\gamma$, with comparable constants, the explicit flat and hierarchical certificates obey
\[
\bar N_{\mathrm{flat}}(\delta)/\bar N_{\mathrm{hier}}(\delta)
\gtrsim
\left(L_{\mathrm{flat}}/L_{\mathrm{hier}}\right)^{1/(p\gamma)}.
\]
Consequently, for any theorem family indexed by a size/depth parameter $D$ with fixed effective exponent $p\gamma$, if
\[
L_{\mathrm{flat}}/L_{\mathrm{hier}}=\exp(\Omega(D)), \quad\text{then}\quad
\bar N_{\mathrm{flat}}(\delta)/\bar N_{\mathrm{hier}}(\delta)\ge \exp(\Omega(D)).
\]
More generally, a polynomial blow-up in $L_{\mathrm{flat}}/L_{\mathrm{hier}}$ gives the corresponding polynomial blow-up in the certified sufficient sample size.
This is an exponential separation between the sufficient-sample certificates instantiated by the repeated-duplication mechanism in \cref{ex:two-m}.
\end{theorem}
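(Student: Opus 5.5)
The plan is to obtain everything by directly inverting the success lower bounds of \cref{thm:flat-neurips,thm:hier-neurips}. Throughout we use the stated learning-curve hypotheses and the assumption that the intrinsic terms consume at most $\delta/2$.

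\emph{Flat certificate.} By \cref{thm:flat-neurips}, the flat failure is at most $L_{\mathrm{flat}}\eta_{\mathrm{flat}}+c_{\mathrm{flat}}L_{\mathrm{flat}}e_{\mathrm{flat}}(\hat\pi_N)^{p_{\mathrm{flat}}}$. Since $L_{\mathrm{flat}}\eta_{\mathrm{flat}}\le\delta/2$, it suffices to force
\[
c_{\mathrm{flat}}L_{\mathrm{flat}}\,a_{\mathrm{flat}}^{p_{\mathrm{flat}}}N^{-p_{\mathrm{flat}}\gamma_{\mathrm{flat}}}\le\delta/2 .
\]
Solving for $N$ gives the explicit threshold
\[
\bar N_{\mathrm{flat}}(\delta)=\bigl(2c_{\mathrm{flat}}a_{\mathrm{flat}}^{p_{\mathrm{flat}}}L_{\mathrm{flat}}/\delta\bigr)^{1/(p_{\mathrm{flat}}\gamma_{\mathrm{flat}})},
\]
which is $(L_{\mathrm{flat}}/\delta)^{1/(p_{\mathrm{flat}}\gamma_{\mathrm{flat}})}$ up to constants. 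If the learning curve came with $\log(1/\delta')$ or polylog factors from \cref{thm:flat-geom-neurips}, they are absorbed as stated.

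\emph{Hierarchical certificate.} By \cref{thm:hier-neurips}, the failure is at most $\Delta_{\mathrm{graph}}+\sum_{\tau}L_\tau\eta_\tau+\sum_\tau c_\tau L_\tau e_\tau^{p_\tau}$. The first two groups are at most $\delta/2$ by hypothesis. We split the remaining $\delta/2$ budget equally, $\delta/4$ per level, and require
\[
c_\tau L_\tau a_\tau^{p_\tau}N^{-p_\tau\gamma_\tau}\le\delta/4 \qquad\text{for } \tau\in\{\mathrm{hi},\mathrm{lo}\}.
\]
Taking $N$ to be the maximum of the two solutions gives $\bar N_{\mathrm{hier}}(\delta)\lesssim\max_\tau(L_\tau/\delta)^{1/(p_\tau\gamma_\tau)}$. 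Here $\Delta_{\mathrm{graph}}$ is treated as already bounded: its learned component is absorbed into the hypothesis (via \cref{thm:graph-pred-neurips}, which adds at most another max term of the same form).

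\emph{Ratio.} In the common-exponent case, set $r=1/(p\gamma)$. The flat certificate, defined as the solution of the equality, is of order $(L_{\mathrm{flat}}/\delta)^r$ up to comparable constants. The hierarchical one is at most $\max_\tau(L_\tau/\delta)^r\le(L_{\mathrm{hier}}/\delta)^r$, since $L_\tau\le L_{\mathrm{hier}}$. Dividing, $\delta$ cancels and we get $\bar N_{\mathrm{flat}}/\bar N_{\mathrm{hier}}\gtrsim(L_{\mathrm{flat}}/L_{\mathrm{hier}})^r$. If $L_{\mathrm{flat}}/L_{\mathrm{hier}}=\exp(\Omega(D))$ with $r$ fixed in $D$, raising to the power $r$ keeps it $\exp(r\,\Omega(D))=\exp(\Omega(D))$, and a polynomial ratio similarly stays polynomial.

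The main subtlety is the direction of the inequalities. The hierarchical bound is an upper bound on a sufficient threshold, whereas a lower bound on the ratio requires treating the flat certificate as the \emph{explicit} solution, i.e.\ as an equality rather than merely an upper bound. This is exactly why the statement speaks of certificates rather than true sample complexities. I would make this explicit by defining both $\bar N$'s as the exact solutions of the displayed equations and stating that the comparable-constants hypothesis means the ratios of $c_\tau a_\tau^{p_\tau}$ to $c_{\mathrm{flat}}a_{\mathrm{flat}}^{p_{\mathrm{flat}}}$ are bounded independently of $D$.
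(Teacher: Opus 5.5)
Your proposal is correct and follows essentially the same route as the paper. You invert \cref{thm:flat-neurips} with the residual budget $\delta/2$ and \cref{thm:hier-neurips} with $\delta/4$ per level, take the resulting explicit formulas as the certificates, and get the ratio from $\max\{L_{\mathrm{hi}},L_{\mathrm{lo}}\}\le L_{\mathrm{hier}}$ under comparable constants, exactly as the paper does; your aside about absorbing the learned graph term via \cref{thm:graph-pred-neurips} is unnecessary, since the hypothesis $\Delta_{\mathrm{graph}}+L_{\mathrm{hi}}\eta_{\mathrm{hi}}+L_{\mathrm{lo}}\eta_{\mathrm{lo}}\le\delta/2$ already covers it.
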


\cref{thm:sample-neurips} compares sufficient sample sizes certified by \cref{thm:flat-neurips,thm:hier-neurips}, not minimax lower bounds.
It says that, under the stated protocols, unfolding a shared proof DAG can turn reusable decisions into many occurrence-level decisions and thereby separate the certified sample sizes.
Appendix~\ref{sec:app-matching-orders} states additional lower-bound assumptions under which the same scaling can be interpreted as a matching order for the actual sample threshold.
Flat learners with perfect canonicalization may remove this particular mechanism; separations for such learners require additional representation assumptions.

\section{Discussion and Conclusion}

The question raised in the introduction was why hierarchical proving should help when lemmas and subgoals must themselves be found and proved.
Our answer is statistical: if successful teacher proofs reuse the same intermediate argument, a hierarchical learner can represent it once, while a flat learner trained on inlined traces sees many copied occurrences.
The bounds express this through $L$, the average number of teacher decisions, $\eta$, the intrinsic top-$k$ mass loss or local predictability term, and $e$, the one-step imitation error.

The guarantee is meaningful when these error terms are small and the test-time graph predictor covers the teacher certificate with high probability.
It is not a worst-case proof-complexity lower bound and does not rule out flat learners with perfect canonicalization.
Rather, it identifies a concrete data-distribution-aware mechanism: when flattening destroys reusable proof structure, the hierarchical sufficient-sample certificate can be exponentially smaller for the same certified statistical provability.
A more detailed discussion is deferred to \cref{sec:aux-consequences}.

\subsubsection*{Acknowledgments}
\begin{small}
This work was supported by 
JST BOOST JPMJBY24E2, JST CREST JPMJCR25I5,
and JSPS KAKENHI 24K21316
\end{small}

\bibliographystyle{abbrvnat}
\bibliography{libraryS}

\appendix
\crefalias{section}{appendix} %

\section{Related Work}
\label{sec:related-work}

\paragraph{Learning-guided formal theorem proving.}
Long before the current LLM era, learning was used to guide premise selection, tactic choice, and proof search.
TacticToe learns tactic guidance for HOL4, while HOList provides a large-scale higher-order logic environment for imitation and reinforcement learning
\citep{Gauthier2017tactictoe,Gauthier2021,Bansal2019holist}.
Generative language models later made it natural to propose proof steps or whole proof fragments directly, as in GPT-f, curriculum-based formal mathematics, and HyperTree Proof Search
\citep{Polu2020gen,polu2023formal,Lample2022htps}.
Benchmarks and interfaces such as miniF2F and LeanDojo further shifted evaluation toward budgeted, distributional success of verifier-interactive systems
\citep{zheng2022miniff,yang2023leandojo}.
Our MDP model is compatible with this line of work, but our focus is narrower:
we ask how the representation of a successful proof, flat trace versus reusable DAG, changes the statistical burden of learning.

\paragraph{Agentic theorem provers and decomposition.}
Recent systems increasingly use informal sketches, intermediate lemmas, recursive decomposition, repair loops, or separate informal/formal roles.
Representative examples include Draft, Sketch, and Prove, DeepSeek-Prover-V2, Seed-Prover, Prover Agent, Hilbert, Aristotle, and Baldur
\citep{jiang2023draft,Ren2025deepseekprover-v2,bytedance2025seedprover,baba2025proveragent,varambally2025hilbert,harmonic2025aristotle,First2023baldur}.
Related work on autoformalization and mathematical language models studies how informal mathematical text or domain-specific pretraining can support formal proving
\citep{Wu2022autoformalization,azerbayev2024llemma,Lu2023mathreasoningsurvey,Li2024theoremprovingsurvey}.
These systems suggest that decomposition is empirically useful, but they do not by themselves isolate the statistical mechanism.
Our contribution is to show that, under a reusable proof-structure assumption, decomposition can reduce the number of learned decisions that must be simultaneously correct.

\paragraph{Lemmas, cut, and proof size.}
The proof-theoretic motivation is classical.
Cut and intermediate lemmas can make proofs much shorter, and eliminating cut can cause dramatic proof-size blow-up \citep{boolos1984dont-eliminate-cut}.
Modern automated reasoning also studies lemma generation, selection, and reuse as algorithmic objects \citep{Rawson2023lemmas}.
Our result does not prove a new worst-case lower bound for cut elimination.
Instead, it imports this structural lesson into a statistical setting:
if the data distribution contains proofs whose shared subarguments are duplicated by flattening, then the learning guarantee pays for occurrences in the flat representation but for unique reusable blocks in the hierarchical representation.

\paragraph{Statistical provability.}
Closest to our formulation is \citet{Sonoda2026whyatp}, which models agentic theorem proving as a finite-horizon MDP and defines statistical provability as the probability of reaching a verified proof under a problem distribution.
That work analyzes value learning and verifier-guided proving for a single learned prover, with error terms reflecting approximation, coverage, and rollout noise.
We instead study offline teacher--student imitation from successful proofs and compare two students trained from the same source:
a flat learner that imitates inlined traces and a hierarchical learner that infers a latent proof DAG.
This teacher--student comparison is what makes it possible to state an exponential sufficient-sample separation when flattening destroys sharing.

\paragraph{Reasoning models and test-time computation.}
Our paper is also related to theory for reasoning models.
\citet{Kim2025icml-reasoning} model chain-of-thought generation as a metastable Markov process, separating easy within-cluster moves from hard sparse transitions; this is conceptually close to our split between local solving and high-level decomposition.
Other work studies the expressive power or training of chain-of-thought, latent thought, and looped Transformers
\citep{kojima2022cot,feng2023cot-theory,phan2023cot,KevinXu2025icml-looped-trans,Saunshi2025latent-thoughts,KevinXu2026icml-cot,MerrillSabharwal2024cot,MerrillSabharwal2025padding}.
These results concern computational or approximation expressivity of reasoning architectures.
Our bounds concern a different object: sample complexity for verifier-certified proof search under a fixed top-$k$ success analysis.
Work on test-time scaling, best-of-$N$ policies, and verifier-assisted generation is also relevant because it studies how inference compute and verification interact
\citep{wu2024scaling,beirami2025bon,setlur2025tts-verifier,botta2025bon}.
In our setting the verifier is exact for formal correctness, but learned policies still decide which candidate actions receive scarce search budget.

\paragraph{Hierarchical RL and structured MDPs.}
Finally, there is a conceptual connection to hierarchical RL and structured MDPs.
Options, goal-conditioned hierarchy, and factored MDPs show that exploiting the right structure can improve sample efficiency
\citep{brunskill2014pac,fruit2017options,robert2023goalconditioned,kearns1999factored,strehl2007structure,osband2014factored,ziping2020oracle,chen2021factored}.
Our setting differs in two ways.
First, the transition map is the deterministic verifier rather than an unknown stochastic environment.
Second, the main structural primitive is not factorization of dynamics or temporal abstraction alone, but proof reuse:
a named lemma or proof block can be proved once and consumed many times.
This reuse is what turns a proof tree into a proof DAG and what drives the flat-versus-hierarchical separation in our analysis.

\section{Further Discussion}
\label{sec:aux-consequences}
\label{sec:concrete}

\subsection{When Are the Guarantees Informative?}
\label{sec:app-guarantee-large}

The bounds in Settings~1--3 are informative only when the one-step failure terms remain small after multiplication by the relevant decision counts.
For the flat setting this means
\[
L_{\mathrm{flat}}\bigl(\eta_{\mathrm{flat}}+c_{\mathrm{flat}}e_{\mathrm{flat}}^{p_{\mathrm{flat}}}\bigr)\ll 1.
\]
For the hierarchical setting this means
\[
\Delta_{\mathrm{graph}}
+
L_{\mathrm{hi}}\bigl(\eta_{\mathrm{hi}}+c_{\mathrm{hi}}e_{\mathrm{hi}}^{p_{\mathrm{hi}}}\bigr)
+
L_{\mathrm{lo}}\bigl(\eta_{\mathrm{lo}}+c_{\mathrm{lo}}e_{\mathrm{lo}}^{p_{\mathrm{lo}}}\bigr)
\ll 1.
\]
Thus the success probability becomes large when four things hold simultaneously:
the teacher distribution is locally concentrated, the student is accurate enough to preserve the teacher's top-$k$ actions, the test-time graph predictor covers the reusable certificate with high probability, and the representation has a small enough effective decision count.
In the latent-graph version of Setting~2, the graph term is itself learned:
\cref{thm:graph-pred-neurips} gives
\[
\Delta_{\mathrm{graph}}(\hat{\gpred}_N)
\le
\zeta_{\Latent}
+
C_{\Latent}O(r_{\Latent,N})^{p_{\Latent}},
\]
so the hierarchical guarantee is informative only when the dominant graph label is predictable from the initial theorem state and the posterior mismatch term in $r_{\Latent,N}$ is small.
Setting~3 isolates the last point:
when flattening duplicates a reusable proof block, $L_{\mathrm{flat}}$ can be exponentially larger than $L_{\mathrm{hier}}$, even if the local learning quality is comparable.

\subsection{Intrinsic Top-\texorpdfstring{$k$}{k} Loss and Small Beams}
\label{sec:app-small-k}

\paragraph{When is $\eta$ small and when is it large?}
The constant $\eta$ does not decay with training; it describes the problem distribution itself.
It is small when the teacher already has a concentrated local ranking.
For example, in backward proof search for propositional logic, if a state has an essentially forced introduction rule $a^\star$ with
\[
q(a^\star\mid s)\ge 1-\eps,
\]
then $\eta_1(s)\le \eps$.
The same phenomenon appears in Lean states closed by routine moves such as \lstinline{intro}, \lstinline{rfl}, \lstinline{exact h}, \lstinline{simp}, or \lstinline{linarith}.
By contrast, $\eta$ becomes large when many semantically different next steps compete with similar plausibility.
Examples include states requiring creative lemma invention, ambiguous first-order instantiations, or broad branching points where several decompositions look equally viable to the teacher.
In those regimes the problem is intrinsically harder even before one asks how well the student generalizes.

\paragraph{When can a small $k$ work?}

Two simple observations explain when small $k$ is enough.

\begin{definition}[Effective rank]
Fix a decision type $\tau\in\{\mathrm{flat},\mathrm{hi},\mathrm{lo}\}$.
For $\eps\in(0,1)$, define
\[
r_{\tau,\eps}(s):=\min\{k:\eta_{\tau,k}(s)\le \eps\}.
\]
This is the smallest list size that captures teacher mass at least $1-\eps$.
\end{definition}

\begin{proposition}[Teacher concentration implies small $k$]
If for some $k_0,\eps,\zeta$,
\[
\PP(r_{\tau,\eps}(s)\le k_0)\ge 1-\zeta,
\]
then
\[
\EE[\eta_{\tau,k_0}(s)]\le \eps+\zeta.
\]
\end{proposition}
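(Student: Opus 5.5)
The plan is to split the expectation of $\eta_{\tau,k_0}(s)$ over the event $\{r_{\tau,\eps}(s)\le k_0\}$ and its complement, bounding the integrand by $\eps$ on the first event and by $1$ on the second. All probabilities and expectations are taken under the decision-state mixture $\calD_\tau$ for type $\tau$, the same one used to define $\eta_\tau$.

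The only structural fact needed is that $k\mapsto\eta_{\tau,k}(s)$ is nonincreasing for each fixed $s$. This holds because $\mathrm{Top}_k$ takes the $k$ largest teacher probabilities under a fixed deterministic tie-breaking rule, so $\mathrm{Top}_k\subseteq\mathrm{Top}_{k+1}$. The captured mass $\sum_{a\in\mathrm{Top}_k}q(a\mid s)$ is therefore nondecreasing in $k$, and $\eta_{\tau,k}(s)$ is nonincreasing. It follows that if $r_{\tau,\eps}(s)\le k_0$, then
\[
\eta_{\tau,k_0}(s)\le\eta_{\tau,r_{\tau,\eps}(s)}(s)\le\eps,
\]
where the second inequality is the definition of $r_{\tau,\eps}(s)$ as a minimum over $\{k:\eta_{\tau,k}(s)\le\eps\}$. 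We use the convention $r_{\tau,\eps}(s)=\infty$ if that set is empty; such states simply lie in the bad event. One small caveat is that $k$ ranges over sizes with $k\le|\calA(s)|$. Taking $k=|\calA(s)|$ gives $\eta=0$, so the minimum is attained whenever $\eps>0$.

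Next, $0\le\eta_{\tau,k_0}(s)\le1$ always holds, since it is one minus a sub-probability mass. Writing $E:=\{r_{\tau,\eps}(s)\le k_0\}$, we then get
\[
\EE[\eta_{\tau,k_0}(s)]=\EE[\eta_{\tau,k_0}\mathbf 1_E]+\EE[\eta_{\tau,k_0}\mathbf 1_{E^c}]\le\eps\,\PP(E)+\PP(E^c)\le\eps+\zeta.
\]
There is no real obstacle here. The only point requiring care is the monotonicity of the top-$k$ sets under the fixed tie-breaking rule. It is also worth noting that measurability of $E$ follows from measurability of each $\eta_{\tau,k}$ in $s$, which is implicitly assumed when the paper defines $\eta$ as an expectation.
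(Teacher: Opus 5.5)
Your proof is correct and matches the paper's argument: split on the event $\{r_{\tau,\eps}(s)\le k_0\}$, bound $\eta_{\tau,k_0}(s)$ by $\eps$ on that event and by $1$ off it, then take expectations. You also check that $k\mapsto\eta_{\tau,k}(s)$ is nonincreasing, which is the step the paper uses without comment when it asserts $\eta_{\tau,k_0}(s)\le\eps$ on that event.
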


\begin{proof}
On the event $\{r_{\tau,\eps}(s)\le k_0\}$ we have $\eta_{\tau,k_0}(s)\le \eps$.
Outside that event, trivially $\eta_{\tau,k_0}(s)\le 1$.
Take expectations.
\end{proof}

\begin{proposition}[Pretraining can make a small $k$ usable]
Suppose a fixed $k$ and decision type $\tau$ satisfy $\EE[\eta_{\tau,k}(s)]\le \eps_{\mathrm{bias}}$ and the learned policy satisfies
\[
e_\tau(\pi_\tau)\le \left(\frac{\eps_{\mathrm{learn}}}{c(C_\tau,\beta_\tau)}\right)^{1/p_\tau}.
\]
Then the one-step failure term in the main bounds is at most
\[
\eps_{\mathrm{bias}}+\eps_{\mathrm{learn}}.
\]
\end{proposition}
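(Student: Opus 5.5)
The plan is to read off the one-step failure term that appears in \cref{thm:flat-neurips} and \cref{thm:hier-neurips} for a single decision type $\tau$, namely $\eta_\tau + c_\tau e_\tau(\pi_\tau)^{p_\tau}$, where $c_\tau = c(C_\tau,\beta_\tau)$ is the constant from the margin-based KL-to-omission conversion of \cref{ass:margin}. Concretely, the main bounds are obtained by summing over decision occurrences the probability that the teacher action falls outside the student's top-$k$ list. Each such per-occurrence probability splits into two parts. The first is the intrinsic part $\eta_{\tau,k}(s)$, the teacher mass outside its own top-$k$. The second is the event that the student's top-$k$ list differs from the teacher's; under \cref{ass:margin} its $\calD_\tau$-average is bounded by $c_\tau e_\tau(\pi_\tau)^{p_\tau}$. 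I take this decomposition as given from the proofs of the main theorems, since the statement only concerns the resulting per-decision term.

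The argument is then two substitutions. First, $\eta_\tau = \EE_{s\sim\calD_\tau}[\eta_{\tau,k}(s)] \le \eps_{\mathrm{bias}}$ holds directly by hypothesis. Second, the map $x \mapsto c_\tau x^{p_\tau}$ is monotone increasing on $[0,\infty)$ because $p_\tau = \beta_\tau/(\beta_\tau+2) > 0$. Applying it to the assumed bound $e_\tau(\pi_\tau) \le (\eps_{\mathrm{learn}}/c_\tau)^{1/p_\tau}$ gives $c_\tau e_\tau(\pi_\tau)^{p_\tau} \le c_\tau \cdot \eps_{\mathrm{learn}}/c_\tau = \eps_{\mathrm{learn}}$. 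Adding the two bounds shows the one-step failure term is at most $\eps_{\mathrm{bias}} + \eps_{\mathrm{learn}}$.

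The only point needing care is bookkeeping rather than analysis. I must make sure the constant $c(C_\tau,\beta_\tau)$ in the hypothesis is exactly the $c_\tau$ produced by the margin conversion in \cref{sec:app-proof-sketches}. I must also check that both $\eta_\tau$ and $e_\tau$ are averaged over the same occurrence-weighted mixture $\calD_\tau$ at the same list size $k$. Once that is fixed, no further obstacle remains. As a remark, multiplying by $L_\tau$ then yields the per-type contribution $L_\tau(\eps_{\mathrm{bias}}+\eps_{\mathrm{learn}})$ in the main bounds.
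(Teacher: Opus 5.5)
Your proposal is correct and is the intended argument. The paper gives no separate proof: the claim is just the substitution of $\eta_\tau\le\eps_{\mathrm{bias}}$ and $c(C_\tau,\beta_\tau)e_\tau(\pi_\tau)^{p_\tau}\le\eps_{\mathrm{learn}}$ into the one-step omission bound $\eta_\tau+c(C_\tau,\beta_\tau)e_\tau(\pi_\tau)^{p_\tau}$ of \cref{lem:kl-topk}, with the monotonicity and bookkeeping checks you list (same mixture $\calD_\tau$, same $k$, same constant, $p_\tau>0$).
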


This is the operational meaning of ``a good pretrained prover needs only a small beam''.
Learning does not reduce $k$ directly.
Rather, the teacher must already be concentrated enough that small $k$ has low bias, and the student must then be accurate enough not to disturb that ranking.

\subsection{Reusable Proof Families}
\label{sec:app-reusable-proofs}

\paragraph{What concrete proofs fit the hierarchical assumption?}
The cleanest examples are proofs with repeated local arguments.
One proves an intermediate claim once, stores it as a named lemma or proof block, and reuses it in several later branches.
This is common in sequent-style reasoning and in Lean proofs built around \lstinline{have}, local lemmas, or reusable solver calls.
The assumption is not that the entire proof be tree-shaped at a high level.
It is only that there exists a decomposition into reusable blocks whose shared representation is shorter than the fully inlined trace.

\paragraph{Examples that make both $\eta_{\mathrm{hi}}$ and $\eta_{\mathrm{lo}}$ small.}
At the high level, many proof states have an almost obvious next decomposition: introduce an auxiliary claim, split a conjunction, or invoke a previously used lemma schema.
At the low level, terminal blocks often collapse to solver-like closures such as rewriting, arithmetic normalization, or direct hypothesis discharge.
This is precisely the regime where hierarchy helps twice: it reduces the number of decisions and also sharpens the local teacher distributions at each level.

\paragraph{Why can hierarchy also make learning easier?}
The separation theorem only needs a shorter effective proof representation.
But in practice hierarchy can help twice.
High-level policies only need to choose decompositions or intermediate claims.
Low-level policies only need to solve localized leaf problems.
Both tasks can be simpler than modeling a single monolithic next-step distribution over all proof states.
In Setting~2 there is also a graph-prediction task.
If the data distribution has a low-entropy graph label given the initial theorem state, then $\Delta_{\mathrm{graph}}(\hat{\gpred}_N)$ can be small; otherwise graph prediction becomes the bottleneck even if the local high- and low-level policies are accurate.
Faster decay of $e_{\mathrm{hi}}$ and $e_{\mathrm{lo}}$ than $e_{\mathrm{flat}}$ strengthens the advantage, but the sufficient-sample separation in Setting~3 only requires the shorter effective representation together with small graph-coverage error.

\subsection{Logics and Proving Styles Covered}
\label{sec:app-covered-logics}

\paragraph{Which logics are covered?}
The framework is deliberately syntax-agnostic.
A state may be a finite collection of open goals together with local context; an action may be an inference rule, tactic, lemma call, or instantiation; and the deterministic transition is the proof assistant kernel or deduction rule application.
The success set consists of closed proof states.
This covers propositional sequent calculi and natural deduction, first-order systems such as LK and LJ, backward-reasoning automated theorem proving, and interactive theorem proving in systems such as Lean or Coq.
The action space itself may be infinite.
The main requirement is only that, at each ranking step, the search procedure supplies a finite candidate set $\calA(s)$ on which top-$k$ selection is performed.
Proof-theoretic provability is the existence of a finite derivation, whereas our statistical provability is the probability that the learned, budgeted search reaches such a derivation within horizon $T$.

\section{Success-Conditioned Teacher Kernels}
\label{sec:data-generation-methods}

The main text defines training and evaluation laws by conditioning a base
teacher rollout on successful verification within the horizon.  This appendix
records the standard construction that turns a base teacher into the
success-conditioned one-step kernels used in the proofs.

\subsection{Doob transform for hitting a solved state}

Consider a deterministic verifier transition
\[
F:\calS\times\calA\to\calS
\]
and a base teacher kernel $q^0:\calS\leadsto\calA$.
As in the main text, a rollout stops after reaching $G$; equivalently, add an
absorbing solved state with a fixed self-loop action.
For remaining horizon $r\ge 0$, define the success-to-go function
\[
h_r(s):=\PP_s^{q^0}(\exists t\le r:\ S_t\in G).
\]
Then $h_0(s)=\mathbf 1\{s\in G\}$ and, for $r\ge 0$,
\[
h_{r+1}(s)
=
\mathbf 1\{s\in G\}
+
\mathbf 1\{s\notin G\}
\int_{\calA} h_r(F(s,a))\,q^0(da\mid s).
\]

\begin{definition}[Doob-transformed teacher]
\label{def:doob-success-teacher}
For $r\ge 1$ and $s\notin G$ with $h_r(s)>0$, define
\[
q_r^\star(a\mid s)
:=
\PP^{q^0}(A_0=a\mid S_0=s,\ \exists t\le r:\ S_t\in G)
=
\frac{q^0(a\mid s)h_{r-1}(F(s,a))}{h_r(s)}.
\]
For $s\in G$, take any fixed self-loop action convention.
\end{definition}

\begin{proposition}[Exact conditional sampling]
\label{prop:doob-conditional-sampling}
Fix a horizon $T$ and an initial state $s_0$ with $h_T(s_0)>0$.
Generate a trajectory by
\[
A_t\sim q_{T-t}^\star(\cdot\mid S_t),
\qquad
S_{t+1}=F(S_t,A_t),
\qquad t=0,\ldots,T-1.
\]
Then the law of the generated trajectory up to time $T$ is the law of the
base $q^0$ trajectory conditioned on the event
$\{\exists t\le T:\ S_t\in G\}$.
\end{proposition}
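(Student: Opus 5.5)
We prove \cref{prop:doob-conditional-sampling} by computing the probability of every finite trajectory under both laws and showing that the two expressions agree. Because $F$ is deterministic, a trajectory up to time $T$ is determined by $s_0$ and the action string $(a_0,\dots,a_{T-1})$; after the first hitting time of $G$ the absorbing self-loop makes the rest trivial. It is therefore enough to match, for every action string (with densities with respect to the reference measure on $\calA$ in the non-discrete case), the joint density of the first $T$ actions. Write $s_{t+1}=F(s_t,a_t)$.

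\emph{Conditioned law.} Let $\tau$ be the hitting time of $G$. Under $q^0$ conditioned on $\esucc_T:=\{\exists t\le T: S_t\in G\}$, the density of a string with $\tau\le T$ is
\[
\frac{\prod_{t<\tau} q^0(a_t\mid s_t)}{h_T(s_0)},
\]
with the post-$\tau$ actions fixed by the self-loop convention. Strings with $\tau>T$ have density zero.

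\emph{Doob law.} For $t<\tau$ we have $s_t\notin G$, so the definition gives
\[
\prod_{t<\tau} q^\star_{T-t}(a_t\mid s_t)=\prod_{t<\tau}\frac{q^0(a_t\mid s_t)\,h_{T-t-1}(s_{t+1})}{h_{T-t}(s_t)}.
\]
The $h$ factors telescope: the numerator $h_{T-t-1}(s_{t+1})$ cancels the next denominator $h_{T-(t+1)}(s_{t+1})$. What remains is $h_{T-\tau}(s_\tau)/h_T(s_0)$. Since $s_\tau\in G$, $h_{T-\tau}(s_\tau)=1$, so this product equals the conditioned density. After $\tau$ both laws use the same self-loop.

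\emph{Well-definedness.} The main obstacle is that $q^\star_{T-t}(\cdot\mid S_t)$ must be well defined along generated paths, which requires $h_{T-t}(S_t)>0$. This holds at $t=0$ by hypothesis. Inductively, if $h_{T-t}(s_t)>0$ and $a_t$ has positive $q^\star$-density, then $h_{T-t-1}(F(s_t,a_t))>0$; so Doob paths stay in the positive region almost surely. The same observation shows that strings with $\tau>T$ receive Doob mass zero. If they did not, the last nonabsorbed state would have $h_0(s_T)=1$, i.e.\ $s_T\in G$, contradicting $\tau>T$. Finally, $q^\star_r(\cdot\mid s)$ is a genuine probability kernel because it integrates to $h_r(s)/h_r(s)=1$, which is exactly the recursion for $h_{r+1}$ in \cref{def:doob-success-teacher}'s setup. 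Equality of all finite-dimensional densities then gives equality of the trajectory laws. In the general measurable case the same computation is read as a Radon--Nikodym identity against the product of $q^0$ kernels.
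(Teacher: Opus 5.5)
Your proposal is correct and follows the paper's proof: both telescope the $h$-ratios in the transformed path probability to $h_{T-\tau}(s_\tau)/h_T(s_0)$ (or $h_0(s_T)/h_T(s_0)$ for paths not absorbed before $T$), then use the self-loop convention after absorption, which gives the base path probability times the success indicator divided by $h_T(s_0)$. Your extra checks go beyond what the paper writes out: that Doob paths stay where $h_{T-t}(S_t)>0$, and that $q^\star_r$ is normalized. For the normalization, the relevant identity is $h_r(s)=\int h_{r-1}(F(s,a))\,q^0(da\mid s)$ for $s\notin G$, not the recursion for $h_{r+1}$ as you wrote.
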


\begin{proof}
For an action sequence $a_{0:T-1}$ with induced states
$s_{t+1}=F(s_t,a_t)$ and with no solved state before time $T$ except
possibly at the endpoint, the transformed path probability is
\[
\prod_{t=0}^{T-1}
\frac{q^0(a_t\mid s_t)h_{T-t-1}(s_{t+1})}{h_{T-t}(s_t)}
=
\frac{\prod_{t=0}^{T-1}q^0(a_t\mid s_t)h_0(s_T)}{h_T(s_0)}.
\]
If the path hits $G$ earlier, the same telescoping argument applies after
stopping at the first hitting time, using the self-loop convention on solved
states.
Thus the transformed path probability is the base path probability multiplied
by the indicator of success within horizon $T$ and divided by
$h_T(s_0)=\PP_{s_0}^{q^0}(\exists t\le T:\ S_t\in G)$.
This is exactly the conditional path law.
\end{proof}

\subsection{Relation to the main text}

In Setting~1, applying the construction to $q_{\mathrm{flat}}^0$ gives the
success-conditioned flat decision kernel denoted $q_{\mathrm{flat}}$ in the
top-$k$ loss and imitation error.
In Setting~2, the same construction applies to the joint hierarchical rollout
over graph choices, high-level choices, and local low-level choices; the
regular conditional one-step kernels under that path law are denoted
$q_{\mathrm{hi}}$ and $q_{\mathrm{lo}}$.

The transform is used only to justify the conditional kernels mathematically.
Practical theorem provers need not compute $h_r$ exactly; rejection sampling,
curated successful proof corpora, or approximate twisted particle methods can
all be viewed as ways to approximate the same success-conditioned path law.

\section{Proofs for the Main Theorems}
\label{sec:app-proof-sketches}

\subsection{Standing conventions}

The body states the success theorems for the success-conditioned problem distribution.
Equivalently, the initial theorem distribution in each theorem is $q_{0\mid\esucc}$, the base initial distribution conditioned on the event that the corresponding teacher protocol produces a verified proof within the horizon.
If the original, unconditioned teacher success probability is $\alpha$, then the same proof gives the conditional guarantee below and the unconditional lower bound is multiplied by $\alpha$ unless one adds a separate guarantee on the teacher-failure slice.
The one-step kernels $q_\tau$ appearing in $\eta_\tau$ and $e_\tau$ are the regular conditional teacher kernels under this success-conditioned law.
\cref{sec:data-generation-methods} explains how such kernels arise from a base teacher by a Doob transform.

All top-$k$ sets are computed inside the finite candidate set $\calA_\tau(s)$, and the relevant teacher and student kernels are supported on this set.
Ties are resolved by a fixed deterministic rule.
For a distribution $p$ on $\calA_\tau(s)$, write $T_k(p)$ for the resulting top-$k$ set.
For each decision type $\tau$, the mixture $\calD_\tau$ is occurrence-weighted, so that for any nonnegative function $h$ on decision states,
\[
\EE\!\left[\sum_{i=1}^{L_\tau} h(S_i)\right]
=
L_\tau\,\EE_{S\sim\calD_\tau}[h(S)],
\]
where $L_\tau$ is the average number of $\tau$-decisions in the successful teacher certificate.
Throughout the learning-rate statements, $N$ denotes the number of independent successful theorem traces; the horizon and length factors are tracked separately in the success bounds or absorbed into the constants of the trace-level generalization statement.

\subsection{From KL error to \texorpdfstring{top-$k$}{top-k} omission probability}

\begin{lemma}[Top-$k$ omission bound]
\label{lem:kl-topk}
Fix a decision type $\tau$ and suppress $\tau$ in the notation.
Let $S\sim\calD_\tau$ and, conditionally on $S=s$, let $A\sim q(\cdot\mid s)$ be the teacher action used at that decision.
Assume the finite-candidate and margin conditions in the body:
\[
\PP(\Delta_k(S)\le u)\le C u^\beta
\qquad (u>0),
\]
where
\[
\Delta_k(s):=q_{(k)}(s)-q_{(k+1)}(s)
\]
is the gap between the $k$th and $(k+1)$th teacher probabilities in $\calA_\tau(s)$ after the fixed tie-breaking rule.
Then, for every student $\pi$ with finite $e_\tau(\pi)$,
\[
\PP\!\left(A\notin T_k(\pi(\cdot\mid S))\right)
\le
\eta_\tau
+
c(C,\beta)e_\tau(\pi)^{\beta/(\beta+2)},
\]
where
\[
c(C,\beta)
=
C^{2/(\beta+2)}
\left[
\left(\frac{4}{\beta}\right)^{\beta/(\beta+2)}
+
2\left(\frac{\beta}{4}\right)^{2/(\beta+2)}
\right].
\]
\end{lemma}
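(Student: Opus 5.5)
We decompose the omission event according to whether the teacher action lies in the teacher's own top-$k$ set. Write $T^q(s):=T_k(q(\cdot\mid s))$ and $T^\pi(s):=T_k(\pi(\cdot\mid s))$. Then
\[
\PP(A\notin T^\pi(S))\le \PP(A\notin T^q(S))+\PP(A\in T^q(S),\,A\notin T^\pi(S)).
\]
The first term equals $\EE_{\calD_\tau}[\eta_k(S)]=\eta_\tau$ by definition. For the second term, the event $\{A\in T^q(S)\setminus T^\pi(S)\}$ forces $T^\pi(S)\ne T^q(S)$. Its conditional probability given $S=s$ is therefore at most $\mathbf 1\{T^\pi(s)\ne T^q(s)\}$. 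So it suffices to bound $\PP(T^\pi(S)\ne T^q(S))$ by $c(C,\beta)e_\tau(\pi)^{\beta/(\beta+2)}$.

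The key deterministic step is a mismatch-implies-gap lemma. Suppose $T^\pi(s)\ne T^q(s)$. Then some $a\in T^q(s)\setminus T^\pi(s)$ and some $b\in T^\pi(s)\setminus T^q(s)$ exist. For these, $q(a\mid s)-q(b\mid s)\ge\Delta_k(s)$, while $\pi(b\mid s)\ge\pi(a\mid s)$. Hence
\[
\Delta_k(s)\le (q(a)-\pi(a))+(\pi(b)-q(b))\le \|q(\cdot\mid s)-\pi(\cdot\mid s)\|_1=2\,\mathrm{TV}(q,\pi).
\]
Pinsker then gives $\Delta_k(s)^2\le 2\,\mathrm{KL}(q(\cdot\mid s)\|\pi(\cdot\mid s))$. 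Ties are harmless under the fixed deterministic rule, because equal $\pi$-probabilities are still covered by the weak inequality.

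Next comes the Tsybakov-style threshold split. Fix $u>0$ and write $K(s):=\mathrm{KL}(q\|\pi)(s)$. Then
\[
\PP(T^\pi\ne T^q)\le \PP(\Delta_k(S)\le u)+\PP(K(S)\ge u^2/2)\le Cu^\beta+\frac{2\,e_\tau(\pi)}{u^2},
\]
where the last bound uses the margin assumption and Markov's inequality. Optimizing over $u$, the minimizer satisfies $u^{\beta+2}\propto e_\tau/(C\beta)$, and the result is of order $C^{2/(\beta+2)}e_\tau^{\beta/(\beta+2)}$. This gives the exponent $p=\beta/(\beta+2)$.

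The main obstacle is reproducing the exact constant $c(C,\beta)$. Its form suggests the Markov term is $4e/u^2$ rather than $2e/u^2$. That could come from a cruder chain such as $\Delta_k\le 2\,\mathrm{TV}$ with Pinsker written as $\mathrm{TV}\le\sqrt{\mathrm{KL}/2}$ together with an extra factor, or from bounding both $|q(a)-\pi(a)|$ and $|q(b)-\pi(b)|$ by $\|q-\pi\|_1$ separately. I would first carry out the optimization of $Cu^\beta+4e/u^2$ explicitly. The minimizer $u^{\beta+2}=8e/(C\beta)$ yields a sum of two terms with the displayed shape. If the coefficients do not match exactly, I would pick $u$ as the specific non-optimal choice that produces the stated brackets. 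Since only an upper bound is claimed, any valid choice of $u$ suffices.
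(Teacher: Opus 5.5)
Your argument is the paper's proof step for step. It uses the split $Q(T_P^c)\le Q(T_Q^c)+\mathbf 1\{T_Q\ne T_P\}$, the mismatch-implies-gap bound $\Delta_k(s)\le\|q(\cdot\mid s)-\pi(\cdot\mid s)\|_1$, Pinsker, and the threshold split giving $\eta_\tau+Cu^\beta+2e_\tau(\pi)/u^2$.

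The only problem is your last paragraph. The ``obstacle'' there does not exist, and the proposed repair would break the proof. The stated constant is exactly the minimum of the bound you already derived. Take $u^{\beta+2}=4e/(\beta C)$, which is the stationary point of $Cu^\beta+2e\,u^{-2}$. Then $Cu^\beta=C^{2/(\beta+2)}(4/\beta)^{\beta/(\beta+2)}e^{\beta/(\beta+2)}$ and $2e/u^2=2(\beta/4)^{2/(\beta+2)}C^{2/(\beta+2)}e^{\beta/(\beta+2)}$. Their sum is precisely $c(C,\beta)\,e^{\beta/(\beta+2)}$, and this is the choice of $u$ the paper makes.

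Switching to a cruder $4e/u^2$ term could never recover $c(C,\beta)$ for any $u$. For $e>0$ and every $u>0$, $Cu^\beta+4e/u^2>Cu^\beta+2e/u^2\ge c(C,\beta)\,e^{\beta/(\beta+2)}$. So your fallback of picking a non-optimal $u$ is impossible. Keep the $2e/u^2$ term you derived and substitute.

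Also handle $e_\tau(\pi)=0$ separately by letting $u\downarrow 0$, as the paper does, since the optimizer degenerates to $u=0$ there.
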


\begin{proof}
Fix a state $s$ and write
\[
Q=q(\cdot\mid s),\qquad P=\pi(\cdot\mid s),\qquad
T_Q=T_k(Q),\qquad T_P=T_k(P).
\]
The teacher action can be omitted from the student's top-$k$ list only in two ways:
either it is already outside the teacher's own top-$k$ set, or the teacher and student top-$k$ sets differ.
Thus
\[
\PP(A\notin T_P\mid S=s)
=Q(T_P^c)
\le
Q(T_Q^c)+\mathbf 1\{T_Q\ne T_P\}.
\]
The first term is exactly $\eta_{\tau,k}(s)$.

We now lower-bound the total-variation distance required for $T_Q\ne T_P$.
If $T_Q\ne T_P$, choose $a\in T_Q\setminus T_P$ and $b\in T_P\setminus T_Q$.
By the definition of the teacher top-$k$ set,
\[
Q(a)-Q(b)\ge \Delta_k(s).
\]
By the definition of the student top-$k$ set, $P(b)\ge P(a)$.
Therefore
\begin{align*}
\Delta_k(s)
&\le Q(a)-Q(b)\\
&=
[Q(a)-P(a)]+[P(a)-P(b)]+[P(b)-Q(b)]\\
&\le |Q(a)-P(a)|+|P(b)-Q(b)|.
\end{align*}
The right-hand side is bounded by the $\ell_1$ distance between $Q$ and $P$.
Consequently,
\[
T_Q\ne T_P
\quad\Longrightarrow\quad
\tv(Q,P)\ge \Delta_k(s)/2.
\]
For any $u>0$,
\[
\mathbf 1\{T_Q\ne T_P\}
\le
\mathbf 1\{\Delta_k(s)\le u\}
+
\mathbf 1\{\tv(Q,P)\ge u/2\}.
\]
Pinsker's inequality, $\tv(Q,P)^2\le \mathrm{KL}(Q\|P)/2$, implies
\[
\mathbf 1\{\tv(Q,P)\ge u/2\}
\le
\frac{2\,\mathrm{KL}(Q\|P)}{u^2}.
\]
Combining the displays gives
\[
\PP(A\notin T_P\mid S=s)
\le
\eta_{\tau,k}(s)
+
\mathbf 1\{\Delta_k(s)\le u\}
+
\frac{2\,\mathrm{KL}(q(\cdot\mid s)\|\pi(\cdot\mid s))}{u^2}.
\]
Taking expectation over $S\sim\calD_\tau$ and using the margin condition yields
\[
\PP(A\notin T_k(\pi(\cdot\mid S)))
\le
\eta_\tau+C u^\beta+\frac{2 e_\tau(\pi)}{u^2}.
\]
If $e_\tau(\pi)=0$, let $u\downarrow0$ and the last two terms vanish.
If $e_\tau(\pi)>0$, choose
\[
u=\left(\frac{4e_\tau(\pi)}{\beta C}\right)^{1/(\beta+2)}.
\]
Substitution gives the stated constant and exponent.
\end{proof}

\subsection{Setting 1: flat proving}

\begin{proof}[Proof of \cref{thm:flat-neurips}]
Draw a successful flat teacher trace from the success-conditioned law:
\[
Y_{\mathrm{flat}}=((S_1,A_1),\ldots,(S_L,A_L)).
\]
For each decision index $i$, let
\[
O_i:=\{A_i\notin T_k(\pi_{\mathrm{flat}}(\cdot\mid S_i))\}
\]
be the event that the required teacher action at that decision is omitted from the student's top-$k$ list.
By the search-completeness assumption, if none of the events $O_i$ occurs, flat top-$k$ search recovers a verified proof within the horizon.
Therefore
\[
1-V_T^{\pi_{\mathrm{flat}}}(s_0)
\le
\mathbf 1\left\{\bigcup_{i=1}^{L}O_i\right\}
\le
\sum_{i=1}^{L}\mathbf 1\{O_i\}.
\]
Taking expectation and using the occurrence-weighted definition of $\calD_{\mathrm{flat}}$,
\[
1-\EE_{s_0\sim q_{0\mid\esucc}}[V_T^{\pi_{\mathrm{flat}}}(s_0)]
\le
L_{\mathrm{flat}}\,
\PP_{(S,A)\sim\calD_{\mathrm{flat}}}\!\left(A\notin T_k(\pi_{\mathrm{flat}}(\cdot\mid S))\right).
\]
Applying \cref{lem:kl-topk} with $\tau=\mathrm{flat}$ gives
\[
1-\EE[V_T^{\pi_{\mathrm{flat}}}(s_0)]
\le
L_{\mathrm{flat}}
\Bigl(\eta_{\mathrm{flat}}
+c_{\mathrm{flat}}e_{\mathrm{flat}}(\pi_{\mathrm{flat}})^{p_{\mathrm{flat}}}\Bigr),
\]
which is the claimed bound after applying the Setting~1 shorthand
$L=L_{\mathrm{flat}}$, $\eta=\eta_{\mathrm{flat}}$, $e=e_{\mathrm{flat}}$, $p=p_{\mathrm{flat}}$, and $c=c_{\mathrm{flat}}$.
\end{proof}

\subsection{Setting 1: geometric ERM rate}

\begin{proof}[Proof of \cref{thm:flat-geom-neurips}]
Let $H=H_{\mathrm{flat}}$ and let
\[
\ell_\pi(s,a):=-\log \pi(a\mid s),
\qquad
\mathcal F:=\{\ell_\pi:\pi\in H\}.
\]
By \cref{ass:geometry}, there is a probability floor $p_{\min,\mathrm{flat}}>0$ on the support of the success-filtered flat decision process.
Thus $0\le \ell_\pi\le B_{\mathrm{flat}}:=\log(1/p_{\min,\mathrm{flat}})$ for every $\pi\in H$.
Let $R_{\mathrm{flat}}(\pi)$ be the population log-loss risk under the occurrence-weighted flat decision law:
\[
R_{\mathrm{flat}}(\pi)
:=
\EE_{(S,A)\sim\calD_{\mathrm{flat}}}
[-\log \pi(A\mid S)].
\]
Since $A\mid S=s$ is distributed as $q_{\mathrm{flat}}(\cdot\mid s)$,
\[
R_{\mathrm{flat}}(\pi)
=
\EE_{S\sim\calD_{\mathrm{flat}}}
\!\left[
H(q_{\mathrm{flat}}(\cdot\mid S))
+\mathrm{KL}\bigl(q_{\mathrm{flat}}(\cdot\mid S)\|\pi(\cdot\mid S)\bigr)
\right].
\]
The entropy term is independent of $\pi$, hence
\[
R_{\mathrm{flat}}(\pi)-\inf_{\pi'\in H}R_{\mathrm{flat}}(\pi')
=
e_{\mathrm{flat}}(\pi)-\inf_{\pi'\in H}e_{\mathrm{flat}}(\pi').
\]

Let $\hat R_{\mathrm{flat},N}$ be the empirical loss used by the ERM estimator, written in occurrence-weighted form.
By the random-denominator geometric uniform convergence bound in
\cref{prop:geom-random-denom}, with probability at least $1-\delta$,
\[
\sup_{\pi\in H}
\left|R_{\mathrm{flat}}(\pi)-\hat R_{\mathrm{flat},N}(\pi)\right|
\le
\varepsilon_N
:=
C_{\mathrm{flat}}\left(
\Psi_{d_{\mathrm{flat}}}(N)
+
\sqrt{\frac{\log(1/\delta)}{N}}
\right).
\]
Because $\hat\pi_N$ minimizes $\hat R_{\mathrm{flat},N}$ over $H$,
\begin{align*}
R_{\mathrm{flat}}(\hat\pi_N)
&\le
\hat R_{\mathrm{flat},N}(\hat\pi_N)
+
\sup_{\pi\in H}|R_{\mathrm{flat}}(\pi)-\hat R_{\mathrm{flat},N}(\pi)|\\
&\le
\hat R_{\mathrm{flat},N}(\pi)
+
\sup_{\pi\in H}|R_{\mathrm{flat}}(\pi)-\hat R_{\mathrm{flat},N}(\pi)|\\
&\le
R_{\mathrm{flat}}(\pi)
+
2\sup_{\pi'\in H}|R_{\mathrm{flat}}(\pi')-\hat R_{\mathrm{flat},N}(\pi')|
\end{align*}
for every $\pi\in H$.
Taking the infimum over $\pi\in H$ gives
\[
R_{\mathrm{flat}}(\hat\pi_N)
\le
\inf_{\pi\in H}R_{\mathrm{flat}}(\pi)
+
2\varepsilon_N.
\]
Subtracting the teacher entropy term converts the same inequality to $e_{\mathrm{flat}}$ and proves
\[
e_{\mathrm{flat}}(\hat\pi_N)
\le
\inf_{\pi\in H_{\mathrm{flat}}} e_{\mathrm{flat}}(\pi)
+
O\!\left(
\Psi_{d_{\mathrm{flat}}}(N)
+
\sqrt{\frac{\log(1/\delta)}{N}}
\right).
\]
\end{proof}

\subsection{Setting 2: hierarchical proving}

\begin{proof}[Proof of \cref{thm:hier-neurips}]
Let $\mathcal C_{\mathrm{graph}}$ be the graph-coverage event defined in Setting~2.
It fails with probability $\Delta_{\mathrm{graph}}$.

Let
\[
Y_{\mathrm{hi}}=((S^{\mathrm{hi}}_1,A^{\mathrm{hi}}_1),\ldots,
(S^{\mathrm{hi}}_{L_{\mathrm{hi}}'},A^{\mathrm{hi}}_{L_{\mathrm{hi}}'}))
\]
be the high-level decision trace of a successful hierarchical certificate, and let
\[
Y_{\mathrm{lo}}=((S^{\mathrm{lo}}_1,A^{\mathrm{lo}}_1),\ldots,
(S^{\mathrm{lo}}_{L_{\mathrm{lo}}'},A^{\mathrm{lo}}_{L_{\mathrm{lo}}'}))
\]
be the concatenation of the low-level local decisions used to solve the unique terminal blocks.
The random lengths have means $L_{\mathrm{hi}}$ and $L_{\mathrm{lo}}$.
Define omission events
\[
O^{\mathrm{hi}}_i
:=
\{A^{\mathrm{hi}}_i\notin T_k(\pi_{\mathrm{hi}}(\cdot\mid S^{\mathrm{hi}}_i))\},
\qquad
O^{\mathrm{lo}}_j
:=
\{A^{\mathrm{lo}}_j\notin T_k(\pi_{\mathrm{lo}}(\cdot\mid S^{\mathrm{lo}}_j))\}.
\]
By hierarchical search completeness with memoization and graph coverage, search succeeds whenever $\mathcal C_{\mathrm{graph}}$ occurs and none of these high- or low-level omission events occurs.
Hence
\[
1-V_T^{(\gpred,\pi_{\mathrm{hi}},\pi_{\mathrm{lo}})}(s_0)
\le
\mathbf 1\{\mathcal C_{\mathrm{graph}}^c\}
+
\sum_{i=1}^{L_{\mathrm{hi}}'}\mathbf 1\{O_i^{\mathrm{hi}}\}
+
\sum_{j=1}^{L_{\mathrm{lo}}'}\mathbf 1\{O_j^{\mathrm{lo}}\}.
\]
Taking expectation and using the occurrence-weighted mixtures $\calD_{\mathrm{hi}}$ and $\calD_{\mathrm{lo}}$ gives
\begin{align*}
1-\EE_{s_0\sim q_{0,\mathrm{hier}\mid\esucc}}[V_T^{(\gpred,\pi_{\mathrm{hi}},\pi_{\mathrm{lo}})}(s_0)]
&\le
\Delta_{\mathrm{graph}}
+
L_{\mathrm{hi}}\,
\PP_{\calD_{\mathrm{hi}}}
\!\left(A\notin T_k(\pi_{\mathrm{hi}}(\cdot\mid S))\right)\\
&\quad+
L_{\mathrm{lo}}\,
\PP_{\calD_{\mathrm{lo}}}
\!\left(A\notin T_k(\pi_{\mathrm{lo}}(\cdot\mid S))\right).
\end{align*}
Apply \cref{lem:kl-topk} once with $\tau=\mathrm{hi}$ and once with $\tau=\mathrm{lo}$.
This yields
\begin{align*}
1-\EE_{s_0\sim q_{0,\mathrm{hier}\mid\esucc}}[V_T^{(\gpred,\pi_{\mathrm{hi}},\pi_{\mathrm{lo}})}(s_0)]
&\le
\Delta_{\mathrm{graph}}
+
L_{\mathrm{hi}}\Bigl(\eta_{\mathrm{hi}}
+c_{\mathrm{hi}}e_{\mathrm{hi}}(\pi_{\mathrm{hi}})^{p_{\mathrm{hi}}}\Bigr)\\
&\quad+
L_{\mathrm{lo}}\Bigl(\eta_{\mathrm{lo}}
+c_{\mathrm{lo}}e_{\mathrm{lo}}(\pi_{\mathrm{lo}})^{p_{\mathrm{lo}}}\Bigr).
\end{align*}
Rearranging proves the theorem.
\end{proof}

\subsection{Setting 2: latent graph learning}

The main text separates the training-time posterior
$\nu_\phi:\calS_0\times\calY_{\mathrm{flat}}\leadsto\Latent$ from the
test-time graph predictor $\gpred:\calS_0\leadsto\Latent$.
The posterior can inspect the completed inlined trace; the graph predictor
cannot.

\begin{lemma}[Occurrence-weighted posterior mismatch]
\label{lem:posterior-mismatch-neurips}
Fix $\tau\in\{\mathrm{hi},\mathrm{lo}\}$.
Let $\bar{\calD}_\tau^\star$ be the occurrence-weighted law of
$\tau$-decision pairs $(S,A)$ induced by the true posterior
$\PP_{\mathrm{hier}}(M\mid s_0,y_{\mathrm{flat}})$, and let
$\bar{\calD}_\tau^\phi$ be the corresponding decision-pair law induced by
$\nu_\phi$.
Set
\[
\Delta_{\mathrm{post},\tau}^{\mathrm{occ}}(\phi)
:=
\mathrm{KL}(\bar{\calD}_\tau^\phi\|\bar{\calD}_\tau^\star).
\]
For $\pi\in H_\tau$, define
\[
R_\tau^\star(\pi)
:=
\EE_{(S,A)\sim\bar{\calD}_\tau^\star}\!\left[-\log\pi(A\mid S)\right],
\qquad
R_\tau^\phi(\pi)
:=
\EE_{(S,A)\sim\bar{\calD}_\tau^\phi}\!\left[-\log\pi(A\mid S)\right].
\]
If the policy class has probability floor $\rho_\tau$ and
$B_\tau=\log(1/\rho_\tau)$, then uniformly over $\pi\in H_\tau$,
\[
\left|R_\tau^\star(\pi)-R_\tau^\phi(\pi)\right|
\le
B_\tau\sqrt{\Delta_{\mathrm{post},\tau}^{\mathrm{occ}}(\phi)/2}.
\]
\end{lemma}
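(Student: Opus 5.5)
The plan is a standard change-of-measure argument: the difference of the two risks is a difference of expectations of one bounded function under two laws. We then control that difference by total variation and convert total variation into the posterior-mismatch KL via Pinsker's inequality. Fix $\pi\in H_\tau$ and set $f_\pi(s,a):=-\log\pi(a\mid s)$. By definition,
\[
R_\tau^\star(\pi)-R_\tau^\phi(\pi)=\EE_{\bar{\calD}_\tau^\star}[f_\pi]-\EE_{\bar{\calD}_\tau^\phi}[f_\pi].
\]

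The first step is to check that $f_\pi$ is bounded by $B_\tau$ on the supports of both $\bar{\calD}_\tau^\star$ and $\bar{\calD}_\tau^\phi$. The probability floor in \cref{ass:geometry} gives $\pi(a\mid s)\ge\rho_\tau$ on the relevant support, so $0\le f_\pi\le B_\tau$ there. Here we must make sure that the support of $\bar{\calD}_\tau^\phi$ lies inside the support on which the floor holds. One route is to read "relevant support" as covering every decision pair generated by any DAG consistent with the trace. Another is to note that finiteness of $\mathrm{KL}(\bar{\calD}_\tau^\phi\|\bar{\calD}_\tau^\star)$ forces $\bar{\calD}_\tau^\phi\ll\bar{\calD}_\tau^\star$, which gives the support inclusion directly. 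If the KL is infinite, the claimed bound holds trivially.

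The second step is the variational bound for functions taking values in an interval of length $B_\tau$. For any two probability laws $P,Q$ and any measurable $f$ with values in $[0,B]$,
\[
|\EE_P f-\EE_Q f|\le B\,\tv(P,Q).
\]
To see this, write $\EE_P f-\EE_Q f=\int (f-B/2)\,d(P-Q)$, which is at most $(B/2)\|P-Q\|_1=B\,\tv(P,Q)$. Alternatively, use the layer-cake formula $\int_0^B (P-Q)(f>t)\,dt$ together with $|P(E)-Q(E)|\le\tv$. Applying this with $P=\bar{\calD}_\tau^\star$ and $Q=\bar{\calD}_\tau^\phi$ gives $|R_\tau^\star(\pi)-R_\tau^\phi(\pi)|\le B_\tau\,\tv(\bar{\calD}_\tau^\star,\bar{\calD}_\tau^\phi)$. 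This bound holds uniformly in $\pi$, because the only place $\pi$ enters is through the common bound $B_\tau$.

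Finally, Pinsker's inequality (already used in \cref{lem:kl-topk}) gives $\tv(\bar{\calD}_\tau^\phi,\bar{\calD}_\tau^\star)\le\sqrt{\mathrm{KL}(\bar{\calD}_\tau^\phi\|\bar{\calD}_\tau^\star)/2}$, which is the stated bound. The main obstacle is the measure-theoretic bookkeeping rather than the inequality itself. We need $\bar{\calD}_\tau^\star$ and $\bar{\calD}_\tau^\phi$ to be genuine probability laws on decision pairs. They are ratios of occurrence-weighted sums, with normalizers $\EE[L_\tau]$ under the true posterior and under $\nu_\phi$, which may differ. We therefore define each as its own normalized mixture; positivity of each normalizer is guaranteed by the condition $\EE[L_\tau]\ge\mu_{\tau,\min}>0$ in \cref{ass:geometry}. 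This is exactly how $R_\tau^\phi$ matches the population limit of the ratio objective $\widehat R_{\tau,N}^{\phi}$, so no further correction term appears.
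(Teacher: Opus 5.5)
Your proposal is correct and follows the paper's proof: the probability floor bounds the log-loss by $B_\tau$, the difference of expectations is at most $B_\tau\,\tv(\bar{\calD}_\tau^\star,\bar{\calD}_\tau^\phi)$, and Pinsker finishes the argument. Your extra remarks fill in bookkeeping the paper leaves implicit: finite KL forces $\bar{\calD}_\tau^\phi\ll\bar{\calD}_\tau^\star$, so the floor applies $\bar{\calD}_\tau^\phi$-almost surely, and each occurrence mixture is normalized separately.
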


\begin{proof}
For a decision pair $(s,a)$, define
$\ell_\pi(s,a)=-\log\pi(a\mid s)$.
The probability floor gives $0\le \ell_\pi(s,a)\le B_\tau$.
Therefore
\[
\left|R_\tau^\star(\pi)-R_\tau^\phi(\pi)\right|
=
\left|\EE_{\bar{\calD}_\tau^\star}\ell_\pi
-\EE_{\bar{\calD}_\tau^\phi}\ell_\pi\right|
\le
B_\tau\,\tv(\bar{\calD}_\tau^\star,\bar{\calD}_\tau^\phi).
\]
Pinsker's inequality gives
$\tv(\bar{\calD}_\tau^\star,\bar{\calD}_\tau^\phi)
\le\sqrt{\mathrm{KL}(\bar{\calD}_\tau^\phi\|\bar{\calD}_\tau^\star)/2}$,
which proves the claim.
\end{proof}

\begin{proof}[Proof of \cref{thm:hier-geom-neurips}]
Condition on the fixed inferred posterior $\nu_\phi$; equivalently, use
sample splitting so that $\phi$ is independent of the policy ERM sample.
We prove the high-level bound; the low-level proof is identical.

Let $R_{\mathrm{hi}}^\phi(\pi)$ be the population posterior-weighted
occurrence log-loss induced by $\nu_\phi$, and let
$R_{\mathrm{hi}}^\star(\pi)$ be the true occurrence log-loss induced by
$\PP_{\mathrm{hier}}(M\mid s_0,y_{\mathrm{flat}})$.
Under the true decision-pair law,
\[
R_{\mathrm{hi}}^\star(\pi)
=
H_{\mathrm{hi}}^\star
+
e_{\mathrm{hi}}(\pi),
\]
where $H_{\mathrm{hi}}^\star$ is the conditional entropy of the true
high-level action law and is independent of $\pi$.
The random-denominator geometric uniform convergence bound in
\cref{prop:geom-random-denom}, applied conditionally on the fixed
posterior $\nu_\phi$, and the ERM inequality give, with probability at
least $1-\delta/2$,
\[
R_{\mathrm{hi}}^\phi(\hat\pi_N)
\le
\inf_{\pi\in H_{\mathrm{hi}}}R_{\mathrm{hi}}^\phi(\pi)
+
O\!\left(
\Psi_{d_{\mathrm{hi}}}(N)
+
\sqrt{\frac{\log(1/\delta)}{N}}
\right).
\]
By \cref{lem:posterior-mismatch-neurips}, uniformly over $\pi$,
\[
\left|R_{\mathrm{hi}}^\star(\pi)-R_{\mathrm{hi}}^\phi(\pi)\right|
\le
O\!\left(\sqrt{\Delta_{\mathrm{post},\mathrm{hi}}^{\mathrm{occ}}(\phi)}\right),
\]
and hence
\[
R_{\mathrm{hi}}^\star(\hat\pi_N)
\le
\inf_{\pi\in H_{\mathrm{hi}}}R_{\mathrm{hi}}^\star(\pi)
+
O\!\left(
\Psi_{d_{\mathrm{hi}}}(N)
+
\sqrt{\frac{\log(1/\delta)}{N}}
+
\sqrt{\Delta_{\mathrm{post},\mathrm{hi}}^{\mathrm{occ}}(\phi)}
\right).
\]
Subtracting the entropy term $H_{\mathrm{hi}}^\star$ gives the asserted
high-level bound for $e_{\mathrm{hi}}(\hat\pi_N)$.
Repeating the argument for $\tau=\mathrm{lo}$ and union-bounding the two
events proves the theorem.
\end{proof}

\begin{lemma}[Graph posterior mismatch]
\label{lem:graph-posterior-mismatch}
Let
$p^\star(m\mid s_0,y)$ be the true posterior over proof DAGs and let
$\nu_\phi(m\mid s_0,y)$ be the inferred posterior.
Define
\[
\Delta_{\mathrm{post}}(\phi)
:=
\EE_{(S_0,Y)}
\mathrm{KL}\!\left(
\nu_\phi(\cdot\mid S_0,Y)
\middle\|
p^\star(\cdot\mid S_0,Y)
\right).
\]
Here $(S_0,Y)$ follows the success-filtered marginal law of the observed
inlined trace.
If graph log-losses are bounded by $B_{\Latent}$, then uniformly over
$\gpred\in H_{\Latent}$,
\[
\left|R_{\Latent}(\gpred)-R_{\Latent}^{\phi}(\gpred)\right|
\le
B_{\Latent}\sqrt{\Delta_{\mathrm{post}}(\phi)/2},
\]
where $R_{\Latent}$ is the true graph log-loss risk under $(S_0,M)$ and
$R_{\Latent}^{\phi}$ is the soft-label risk induced by $\nu_\phi$.
\end{lemma}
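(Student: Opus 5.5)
The argument mirrors the proof of \cref{lem:posterior-mismatch-neurips}, with the decision-pair law replaced by the joint law of the observed trace and the graph label. I first write both risks as expectations under the same marginal of the observed pair $(S_0,Y)$, the success-filtered law of the inlined trace. By the tower property,
\[
R_{\Latent}(\gpred)=\EE_{(S_0,Y)}\Bigl[\sum_m p^\star(m\mid S_0,Y)\,\ell_\gpred(S_0,m)\Bigr],
\qquad
R_{\Latent}^\phi(\gpred)=\EE_{(S_0,Y)}\Bigl[\sum_m \nu_\phi(m\mid S_0,Y)\,\ell_\gpred(S_0,m)\Bigr],
\]
where $\ell_\gpred(s_0,m)=-\log\gpred(m\mid s_0)$. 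The first identity holds because the true law of $(S_0,M)$ is obtained by integrating the true posterior against the observed marginal. The second is the definition of the soft-label risk. The key point is that the outer expectation is identical in the two expressions, so the only difference lies in the inner label distribution.

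Next, I fix $(s_0,y)$ and compare the two inner expectations. Since $0\le\ell_\gpred\le B_{\Latent}$, a bounded function of range $B_{\Latent}$ changes in expectation by at most $B_{\Latent}$ times the total variation distance:
\[
\Bigl|\sum_m (p^\star-\nu_\phi)(m\mid s_0,y)\,\ell_\gpred(s_0,m)\Bigr|
\le B_{\Latent}\,\tv\bigl(p^\star(\cdot\mid s_0,y),\nu_\phi(\cdot\mid s_0,y)\bigr).
\]
I then apply Pinsker's inequality to bound this total variation by $\sqrt{\mathrm{KL}(\nu_\phi\|p^\star)/2}$. Total variation is symmetric, so the direction of the KL divergence does not matter.

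Finally, I take the outer expectation over $(S_0,Y)$, moving the absolute value inside the expectation. Jensen's inequality for the concave square root gives $\EE\sqrt{X/2}\le\sqrt{\EE X/2}$, which yields $B_{\Latent}\sqrt{\Delta_{\mathrm{post}}(\phi)/2}$. This bound does not depend on $\gpred$, so it holds uniformly over $H_{\Latent}$.

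The only step that needs care is the first one. I must justify that both risks share the same outer marginal of $(S_0,Y)$, and that $\Delta_{\mathrm{post}}(\phi)$ is defined as an expected conditional KL rather than a joint KL. Given that setup, the rest is the same Pinsker-plus-boundedness argument as before.
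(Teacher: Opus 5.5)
Your proof is correct and takes essentially the same approach as the paper: the tower property writes both risks over the common $(S_0,Y)$ marginal, the bounded-loss total-variation bound with Pinsker is applied conditionally on $(s_0,y)$, and Jensen's inequality for the square root finishes over $(S_0,Y)$. You spell out in detail the steps that the paper's two-line proof leaves implicit.
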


\begin{proof}
If $\nu_\phi=p^\star$, then the soft labels average to the true conditional
law of $M$ given $S_0$ by the tower property.
For general $\nu_\phi$, apply the same bounded-loss and Pinsker argument
conditionally on $(s_0,y)$ and then Jensen's inequality over $(S_0,Y)$.
\end{proof}

\begin{proof}[Proof of \cref{thm:graph-pred-neurips}]
The posterior-weighted graph ERM objective estimates $R_{\Latent}^{\phi}$.
By \cref{ass:graph-learning}, the geometric uniform convergence argument of
\cref{prop:geom-random-denom} applies to this one-label-per-trace graph
objective, conditionally on the fixed posterior.  The standard ERM inequality
therefore gives
\[
R_{\Latent}^{\phi}(\hat{\gpred}_N)
\le
\inf_{\gpred\in H_{\Latent}}R_{\Latent}^{\phi}(\gpred)
+
O\!\left(
\Psi_{d_{\Latent}}(N)
+
\sqrt{\frac{\log(1/\delta)}{N}}
\right).
\]
\Cref{lem:graph-posterior-mismatch} transfers this inequality to the true
graph risk and adds the $\sqrt{\Delta_{\mathrm{post}}(\phi)}$ term.
This proves the displayed bound on $e_{\Latent}(\hat{\gpred}_N)$.
The displayed bound on $\Delta_{\mathrm{graph}}(\hat{\gpred}_N)$ is then
exactly the low-noise conversion in \cref{ass:graph-learning}.
\end{proof}

\begin{proof}[Proof of \cref{cor:observed-graph-main}]
When $M$ is observed, take
$\nu_\phi(\cdot\mid s_0,y_{\mathrm{flat}},M)=\delta_M$.
Then $\Delta_{\mathrm{post}}(\phi)=0$ and
$\Delta_{\mathrm{post},\tau}^{\mathrm{occ}}(\phi)=0$ for
$\tau\in\{\mathrm{hi},\mathrm{lo}\}$.
The graph objective becomes supervised graph log-loss, and the policy
objectives become supervised high- and low-level log-losses on the observed
decision traces.
The preceding proofs apply with all posterior-mismatch terms removed.
\end{proof}

\subsection{Setting 3: sufficient trace counts}

\begin{proof}[Proof of \cref{thm:sample-neurips}]
The quantities $\bar N_{\mathrm{flat}}(\delta)$ and
$\bar N_{\mathrm{hier}}(\delta)$ are certified sufficient thresholds obtained
from the upper bounds; they are not claimed to be minimal sample complexities.
By \cref{thm:flat-neurips},
\[
1-\EE_{s_0\sim q_{0\mid\esucc}}[V_T^{\hat\pi_N}(s_0)]
\le
L_{\mathrm{flat}}\eta_{\mathrm{flat}}
+
L_{\mathrm{flat}}c_{\mathrm{flat}}
e_{\mathrm{flat}}(\hat\pi_N)^{p_{\mathrm{flat}}}.
\]
The assumption $L_{\mathrm{flat}}\eta_{\mathrm{flat}}\le \delta/2$ leaves it enough to require
\[
L_{\mathrm{flat}}c_{\mathrm{flat}}
e_{\mathrm{flat}}(\hat\pi_N)^{p_{\mathrm{flat}}}
\le
\delta/2.
\]
If $e_{\mathrm{flat}}(\hat\pi_N)\le a_{\mathrm{flat}}N^{-\gamma_{\mathrm{flat}}}$, this is implied by
\[
N
\ge
\left(
\frac{2c_{\mathrm{flat}}a_{\mathrm{flat}}^{p_{\mathrm{flat}}}L_{\mathrm{flat}}}{\delta}
\right)^{1/(p_{\mathrm{flat}}\gamma_{\mathrm{flat}})}.
\]
Thus, ignoring integer ceilings, we take the flat certificate to be
\[
\bar N_{\mathrm{flat}}(\delta)
:=
\left(
\frac{2c_{\mathrm{flat}}a_{\mathrm{flat}}^{p_{\mathrm{flat}}}L_{\mathrm{flat}}}{\delta}
\right)^{1/(p_{\mathrm{flat}}\gamma_{\mathrm{flat}})},
\]
which gives the stated upper envelope up to constants and the logarithmic
factors hidden in the geometric learning curve.

For the hierarchical side, apply \cref{thm:hier-neurips} and use
\[
\Delta_{\mathrm{graph}}+L_{\mathrm{hi}}\eta_{\mathrm{hi}}+L_{\mathrm{lo}}\eta_{\mathrm{lo}}\le \delta/2.
\]
It suffices to require each remaining learning term to be at most
$\delta/4$:
\[
L_\tau c_\tau e_\tau(\hat\pi_N)^{p_\tau}\le \delta/4,
\qquad \tau\in\{\mathrm{hi},\mathrm{lo}\}.
\]
Using $e_\tau(\hat\pi_N)\le a_\tau N^{-\gamma_\tau}$, we take the certified
hierarchical threshold to be
\[
\bar N_{\mathrm{hier}}(\delta)
:=
\max_{\tau\in\{\mathrm{hi},\mathrm{lo}\}}
\left(
\frac{4c_\tau a_\tau^{p_\tau}L_\tau}{\delta}
\right)^{1/(p_\tau\gamma_\tau)}.
\]
This gives the displayed
$\max_{\tau}(L_\tau/\delta)^{1/(p_\tau\gamma_\tau)}$ upper envelope.
If all three learning terms share a common effective exponent
$p_{\mathrm{flat}}\gamma_{\mathrm{flat}}
=p_{\mathrm{hi}}\gamma_{\mathrm{hi}}
=p_{\mathrm{lo}}\gamma_{\mathrm{lo}}=:p\gamma$ and comparable constants, then
the explicit flat certificate divided by the explicit hierarchical certificate
obeys, up to constants,
\[
\bar N_{\mathrm{flat}}(\delta)/\bar N_{\mathrm{hier}}(\delta)
\gtrsim
\left(L_{\mathrm{flat}}/L_{\mathrm{hier}}\right)^{1/(p\gamma)}.
\]
Here we used
$\max\{L_{\mathrm{hi}},L_{\mathrm{lo}}\}\le L_{\mathrm{hier}}$.
For fixed $p\gamma>0$, in a theorem family indexed by a size/depth parameter $D$, substituting
$L_{\mathrm{flat}}/L_{\mathrm{hier}}=\exp(\Omega(D))$ into this display gives
\[
\bar N_{\mathrm{flat}}(\delta)/\bar N_{\mathrm{hier}}(\delta)\ge \exp(\Omega(D)).
\]
\end{proof}

\section{Sequential Generalization and Geometric Rates}
\label{sec:app-seq}

This section records the learning-theoretic input behind the body-level rate theorems.
The setup is standard: learn a policy by ERM under log-loss on sequentially generated decision points, control the resulting trace-level empirical process by sequential Rademacher complexity, and then bound that complexity by a doubling-dimension covering estimate \citep{Rakhlin2015sequential-lln,Rakhlin2015sequential-online,block2021majorization-sequential}.

\subsection{Random-denominator uniform convergence}

Fix one decision type, either the flat decisions or one of the hierarchical decision types.
Let $H$ be a policy class and let
\[
\ell_\pi(s,a):=-\log \pi(a\mid s)
\]
be the log-loss.
Assume a probability floor $\pi(a\mid s)\ge \rho>0$ on the relevant support, so $\ell_\pi\in[0,B]$ with $B=\log(1/\rho)$.
Let $\mathcal F:=\{\ell_\pi:\pi\in H\}$.
For a trace $Z=((S_1,A_1),\ldots,(S_L,A_L))$, write
\[
A_\pi(Z):=\sum_{i=1}^{L}\ell_\pi(S_i,A_i),
\qquad
\mu:=\EE[L],
\qquad
R(\pi):=\frac{\EE[A_\pi(Z)]}{\mu}.
\]
The empirical occurrence-weighted risk is
\[
\hat R_N(\pi)
:=
\frac{N^{-1}\sum_{n=1}^N A_\pi(Z_n)}
{N^{-1}\sum_{n=1}^N L_n}.
\]

\begin{proposition}[Geometric random-denominator uniform convergence]
\label{prop:geom-random-denom}
Assume \cref{ass:geometry} for the decision type under consideration.
Then there is a constant $C$ depending on the geometric constants in
\cref{ass:geometry}, but not on $N$, such that with probability at least
$1-\delta$,
\[
\sup_{\pi\in H}
\left|\hat R_N(\pi)-R(\pi)\right|
\le
C\left(
\Psi_d(N)+\sqrt{\frac{\log(1/\delta)}{N}}
\right),
\]
where $d$ is the doubling dimension of the represented state space and
\[
\Psi_d(N)\simeq
\begin{cases}
N^{-1/2}, & d=1,\\
N^{-1/2}\log N, & d=2,\\
N^{-1/d}, & d>2
\end{cases}
\]
up to polylogarithmic factors.
The same conclusion holds for posterior-weighted hierarchical objectives
after replacing $A_\pi(Z)$ and $L(Z)$ by their posterior expectations,
provided the posterior is fixed independently of the ERM sample.
\end{proposition}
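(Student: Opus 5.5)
The plan is to reduce the random-denominator ratio to two ordinary empirical means and control each separately. Write $\bar A_N(\pi):=N^{-1}\sum_n A_\pi(Z_n)$ and $\bar L_N:=N^{-1}\sum_n L_n$. Since $R(\pi)=\EE[A_\pi(Z)]/\mu$, we decompose
\[
\hat R_N(\pi)-R(\pi)
=
\frac{\bar A_N(\pi)-\EE A_\pi(Z)}{\bar L_N}
+
\EE[A_\pi(Z)]\Bigl(\frac{1}{\bar L_N}-\frac{1}{\mu}\Bigr).
\]
The second term equals $R(\pi)(\mu-\bar L_N)/\bar L_N$. It is therefore bounded by $B|\bar L_N-\mu|/\bar L_N$, uniformly in $\pi$, because $0\le R(\pi)\le B$. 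Since $0\le L\le L_{\max}$, Hoeffding gives $|\bar L_N-\mu|\le L_{\max}\sqrt{\log(4/\delta)/(2N)}$ with probability $1-\delta/2$. For $N\gtrsim (L_{\max}/\mu_{\min})^2\log(1/\delta)$ this also yields $\bar L_N\ge\mu_{\min}/2$. For smaller $N$ the claimed bound is trivial once $C\ge 2B$, since we always have $|\hat R_N-R|\le B$. The denominator therefore contributes only an $O(\sqrt{\log(1/\delta)/N})$ term together with a factor $2/\mu_{\min}$ in the numerator term.

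The main work is uniform convergence of the numerator, $\sup_{\pi\in H}|\bar A_N(\pi)-\EE A_\pi(Z)|$. The traces $Z_n$ are i.i.d., and each trace-level loss $A_\pi(Z)$ lies in $[0,BL_{\max}]$. McDiarmid therefore concentrates the supremum around its mean within $BL_{\max}\sqrt{\log(2/\delta)/N}$, and symmetrization bounds that mean by twice the Rademacher complexity of $\{A_\pi:\pi\in H\}$. Within a trace the decisions are sequentially dependent. If one wants to stay at the decision level, this is where the sequential Rademacher complexity machinery cited above enters. At the trace level, however, i.i.d.\ Rademacher suffices, and I would use it.

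To bound the complexity, I would build a cover of $H$ from the geometry. Every $\pi\in H$ is TV-Lipschitz on a compact state space of diameter $D$ and doubling dimension $d$, takes values in a finite action alphabet, and has floor $\rho$. Hence $\log\pi(a\mid s)$ is $(L_{\mathrm{Lip}}/\rho)$-Lipschitz in $s$, and $A_\pi$ changes by at most $L_{\max}\|\ell_\pi-\ell_{\pi'}\|_\infty$. Covering Lipschitz functions on a doubling space yields $\log\mathcal N(\epsilon)\lesssim(D/\epsilon)^{d}$ up to constants depending on $|\calA|$, $\rho$, and the Lipschitz constant. Dudley's chaining integral $\inf_\alpha\{\alpha+N^{-1/2}\int_\alpha^{B}\sqrt{\log\mathcal N(\epsilon)}\,d\epsilon\}$ then gives the three regimes. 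For $d=1$ the integral converges and we get $N^{-1/2}$. For $d=2$ it has a logarithmic divergence, which produces $N^{-1/2}\log N$. For $d>2$ we optimize $\alpha\asymp N^{-1/d}$.

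I expect the main obstacle to be this chaining step: deriving the covering estimate cleanly and getting the $d=2$ logarithm right. The posterior-weighted variant is then mechanical. Conditional on a fixed $\nu_\phi$, replace $A_\pi(Z)$ and $L(Z)$ by their $\nu_\phi$-expectations. These are still i.i.d.\ across traces, with the same bounds $[0,BL_{\max}]$ and $[0,L_{\max}]$, and the Lipschitz structure is preserved by averaging. Hence every step above applies verbatim. A union bound over the numerator and denominator events completes the proof.
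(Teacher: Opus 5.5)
Your proposal is correct and follows the paper's route: split $\hat R_N-R$ into the numerator deviation divided by $\bar L_N$ plus a denominator term bounded by $B|\bar L_N-\mu|/\bar L_N$, control $\bar L_N$ by Hoeffding, and bound the numerator's uniform deviation by symmetrization and Dudley chaining over a sup-norm cover of the Lipschitz log-losses on a doubling space; you use ordinary trace-level Rademacher complexity where the paper uses the sequential one, which is a harmless simplification because the traces are i.i.d. One small correction: your explicit small-$N$ regime, which the paper glosses over, is trivial only when $C\gtrsim BL_{\max}/\mu_{\min}$, not merely $C\ge 2B$, but this is still a constant of the allowed type.
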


\begin{proof}
Write $\bar A_N(\pi)=N^{-1}\sum_n A_\pi(Z_n)$ and
$\bar L_N=N^{-1}\sum_n L_n$.
By \cref{ass:geometry}, $0\le L\le L_{\max}$ and
$\mu=\EE[L]\ge\mu_{\min}>0$.
Hoeffding's inequality gives $\bar L_N\ge \mu/2$ and
$|\bar L_N-\mu|=O(L_{\max}\sqrt{\log(1/\delta)/N})$ with probability at
least $1-\delta/3$.
On this event,
\[
\sup_{\pi\in H}|\hat R_N(\pi)-R(\pi)|
\le
\frac{2}{\mu}\sup_{\pi\in H}
\left|\bar A_N(\pi)-\EE A_\pi(Z)\right|
+
\frac{2B}{\mu}\left|\bar L_N-\mu\right|.
\]
The second term is
$O((B L_{\max}/\mu_{\min})\sqrt{\log(1/\delta)/N})$.
For the first term, standard sequential symmetrization for $N$ independent
successful traces gives
\[
\sup_{\pi\in H}
\left|\bar A_N(\pi)-\EE A_\pi(Z)\right|
\le
O\!\left(
\mathfrak R_N^{\mathrm{tr}}(\mathcal F)
+
B L_{\max}\sqrt{\frac{\log(1/\delta)}{N}}
\right)
\]
with probability at least $1-\delta/3$, where
$\mathfrak R_N^{\mathrm{tr}}(\mathcal F)$ is the sequential Rademacher
complexity of the trace-level class $Z\mapsto A_\pi(Z)$.

It remains to bound this complexity.
For reference, the usual depth-$T$ sequential Rademacher complexity is
\[
\mathfrak R_T^{\mathrm{seq}}(\mathcal F)
:=
\sup_{\mathbf s}\,
\EE_\varepsilon\Big[\sup_{f\in\mathcal F}\frac1T\sum_{t=1}^T
\varepsilon_t f(\mathbf s_t(\varepsilon_{1:t-1}))\Big].
\]
Let $\mathcal N_{\mathrm{seq}}(\eps,\mathcal F,T)$ be the sequential covering number of $\mathcal F$ at scale $\eps$ and depth $T$.
A Dudley-type chaining bound gives
\[
\mathfrak R_T^{\mathrm{seq}}(\mathcal F)
\le
c\inf_{\alpha\in(0,B]}
\left[
\alpha+\frac{1}{\sqrt T}\int_\alpha^B
\sqrt{\log \mathcal N_{\mathrm{seq}}(\eps,\mathcal F,T)}\,d\eps
\right]
\]
for a universal constant $c$.
The probability floor and total-variation Lipschitzness imply that the
log-loss class is $L_\ell$-Lipschitz in the state variable with
$L_\ell\lesssim L_\pi/\rho$.
Since the represented state space is compact with diameter $D$ and doubling
dimension $d$, a net on the state space and a uniform quantization of the
finite action probabilities give
\[
\log \mathcal N(\eps,\mathcal F,\|\cdot\|_\infty)
\lesssim
\left(\frac{L_\ell D}{\eps}\right)^d
\log\!\left(\frac{B}{\eps}\right),
\]
and hence the same upper bound for sequential covering.
For trace sums, one applies this cover at scale $\eps/L_{\max}$, which only
changes constants and logarithmic factors.
Plugging the resulting entropy bound into the Dudley integral yields
\[
\mathfrak R_N^{\mathrm{tr}}(\mathcal F)
\lesssim
\begin{cases}
N^{-1/2}, & d=1,\\
N^{-1/2}\log N, & d=2,\\
N^{-1/d}, & d>2,
\end{cases}
\]
up to polylogarithmic factors and constants depending on
$B,L_{\max},\mu_{\min},D,L_\pi,\rho$.
Combining the numerator and denominator bounds and union-bounding the events
proves the display.
For posterior-weighted hierarchical objectives with fixed $\nu_\phi$, the
same argument applies to the bounded functions
$\EE_{M\sim\nu_\phi}[A_\pi(Z,M)]$ and
$\EE_{M\sim\nu_\phi}[L(Z,M)]$; sample splitting makes this conditioning
legitimate when $\nu_\phi$ is learned.
\end{proof}

The posterior-mismatch term for latent proof DAGs is part of the proof of \cref{thm:hier-geom-neurips} in \cref{sec:app-proof-sketches}, because it is specific to \cref{thm:hier-geom-neurips} rather than to sequential Rademacher complexity itself.

\section{When Matching Sample Orders Can Be Claimed}
\label{sec:app-matching-orders}

\cref{thm:sample-neurips} deliberately compares the certified sufficient
thresholds obtained by solving the upper bounds in
\cref{thm:flat-neurips,thm:hier-neurips}.  This is weaker than a statement about
the unknown minimal number of traces needed by a learning algorithm.  A matching
order statement is possible only after adding lower-bound assumptions that make
the upper-bound analysis tight.

For a fixed training procedure and confidence level $\alpha\in(0,1)$, define
\begin{align*}
N^\star_{\mathrm{flat}}(\delta;\alpha)
&:=
\inf\{N:\PP_{\mathrm{train}}(1-\mathrm{SP}_{T,\mathrm{flat}}(\hat\pi_N)\le\delta)\ge 1-\alpha\},\\
N^\star_{\mathrm{hier}}(\delta;\alpha)
&:=
\inf\{N:\PP_{\mathrm{train}}(1-\mathrm{SP}_{T,\mathrm{hier}}(\hat{\gpred}_N,\hat\pi_{\mathrm{hi},N},\hat\pi_{\mathrm{lo},N})\le\delta)\ge 1-\alpha\}.
\end{align*}
We suppress $\alpha$ below.
To obtain matching orders for these actual thresholds, the following conditions
are sufficient in the small-error regime.

\paragraph{Two-sided learning curves.}
For each relevant decision type
$\tau\in\{\mathrm{flat},\mathrm{hi},\mathrm{lo}\}$, the learned policy should
satisfy
\[
\underline a_\tau N^{-\gamma_\tau}
\lesssim
e_\tau(\hat\pi_{\tau,N})
\lesssim
\overline a_\tau N^{-\gamma_\tau}
\]
up to logarithmic factors and approximation floors.  The main text assumes only
the upper inequality because it is enough for a sufficient-sample certificate.
The lower inequality is a genuine hardness assumption on the chosen data
distribution, hypothesis class, and learning algorithm.  If graph prediction is
not negligible, an analogous two-sided curve is needed for the graph coverage
term, or else the graph threshold must be included in the hierarchical maximum.

\paragraph{Two-sided conversion from log-loss to top-$k$ omissions.}
Let
\[
P_\tau(\pi)
:=
\PP\!\left(A_\tau\notin T_k(\pi(\cdot\mid S_\tau))\right)
\]
be the probability that the teacher action at a success-conditioned decision
occurrence of type $\tau$ is not included in the student's top-$k$ list.
\cref{lem:kl-topk} gives the upper bound
$P_\tau(\pi)\le \eta_\tau+c_\tau e_\tau(\pi)^{p_\tau}$.
For matching orders one also needs a reverse inequality of the form
\[
\eta_\tau+\underline b_\tau e_\tau(\pi)^{p_\tau}
\lesssim
P_\tau(\pi)
\lesssim
\eta_\tau+\overline b_\tau e_\tau(\pi)^{p_\tau}
\]
for the learned policies in the regime being compared.  This requires more than
the Tsybakov margin used for the upper bound: the states near the top-$k$
boundary must have enough mass, and the learner's residual errors must affect
the ranking-relevant directions rather than only probabilities that do not
change the top-$k$ set.

\paragraph{No-rescue lower bound for search.}
The success guarantees use a one-way implication: if all required teacher
actions appear in the test-time top-$k$ lists, the search succeeds.  To lower
bound the necessary sample count, failures of required local decisions must also
cause proof failure at the same order.  In the target range this can be expressed
as
\[
1-\mathrm{SP}_{T,\mathrm{flat}}(\hat\pi_N)
\asymp
L_{\mathrm{flat}}P_{\mathrm{flat}}(\hat\pi_N)
\]
and
\[
1-\mathrm{SP}_{T,\mathrm{hier}}(\hat{\gpred}_N,\hat\pi_{\mathrm{hi},N},\hat\pi_{\mathrm{lo},N})
\asymp
\Delta_{\mathrm{graph}}(\hat{\gpred}_N)
+L_{\mathrm{hi}}P_{\mathrm{hi}}(\hat\pi_{\mathrm{hi},N})
+L_{\mathrm{lo}}P_{\mathrm{lo}}(\hat\pi_{\mathrm{lo},N}).
\]
This is a structural assumption on the proof family and the search procedure.
It rules out frequent rescue by alternate proof paths, by unusually strong
test-time search, or by a flat representation that perfectly canonicalizes and
reuses the repeated subproblem.

\paragraph{Floors below the target accuracy.}
The irreducible terms must not dominate the comparison.  For a fixed target
range it is enough that
$L_{\mathrm{flat}}\eta_{\mathrm{flat}}$ and
$\Delta_{\mathrm{graph}}+L_{\mathrm{hi}}\eta_{\mathrm{hi}}
 +L_{\mathrm{lo}}\eta_{\mathrm{lo}}$
are bounded by a fixed fraction of $\delta$.  For an asymptotic
$\delta\downarrow0$ statement, these floors must vanish or be driven below
$o(\delta)$ by the chosen search budget and representation.

Under these additional conditions, and when the graph term is negligible or has
already been included in the hierarchical maximum, the actual trace thresholds
obey
\[
N^\star_{\mathrm{flat}}(\delta)
\asymp
\left(L_{\mathrm{flat}}/\delta\right)^{1/(p_{\mathrm{flat}}\gamma_{\mathrm{flat}})},
\qquad
N^\star_{\mathrm{hier}}(\delta)
\asymp
\max_{\tau\in\{\mathrm{hi},\mathrm{lo}\}}
\left(L_\tau/\delta\right)^{1/(p_\tau\gamma_\tau)}.
\]
In the common-exponent case
$p_{\mathrm{flat}}\gamma_{\mathrm{flat}}
=p_{\mathrm{hi}}\gamma_{\mathrm{hi}}
=p_{\mathrm{lo}}\gamma_{\mathrm{lo}}=:p\gamma$ with comparable constants,
$\max\{L_{\mathrm{hi}},L_{\mathrm{lo}}\}\asymp L_{\mathrm{hi}}+L_{\mathrm{lo}}$
up to a factor of two, and hence
\[
N^\star_{\mathrm{flat}}(\delta)/N^\star_{\mathrm{hier}}(\delta)
\asymp
\left(L_{\mathrm{flat}}/L_{\mathrm{hier}}\right)^{1/(p\gamma)}.
\]

Thus a matching-order separation is not a consequence of the one-sided
certificates alone.  It is a stronger statement for proof families where
unfolded repeated decisions are statistically hard, ranking errors translate
back into top-$k$ omissions, and test-time search cannot usually repair a missed
required action.

\section{Example: Cut-Elimination Blows Up Proof Size Exponentially} \label{sec:example-cut}

This section gives the proof-theoretic example behind the length separation
used in the main text.
The point is simple: a proof may first derive a reusable intermediate formula
$Y$ and then use $Y$ many times, whereas the cut-free version must inline the
derivation of $Y$ at each use site.
When the use sites form the leaves of a binary proof tree, this inlining can
duplicate the same sub-derivation $2^d$ times.
The following \cref{ex:two-m} translates this proof-size phenomenon
into the decision counts $L_{\mathrm{hi}},L_{\mathrm{lo}}$, and
$L_{\mathrm{flat}}$ used in Setting~3.

In sequent calculus, a \emph{cut} is the rule
\[
\infer[\mathrm{cut}]{\Gamma,\Delta\vdash C}{
  \Gamma\vdash Y
  &
  \Delta,Y\vdash C
}.
\]
The intermediate formula $Y$ is proved once and then consumed by the rest of
the derivation.

\emph{Cut-introduction} means deliberately adding such an intermediate lemma or
proof block, even when it is not part of the final statement.

In proof assistants such as Lean, this is analogous to introducing a named local fact such
as \lstinline{have Y : B := ...} and reusing it later.

\emph{Cut-elimination} removes these intermediate cuts while preserving
derivability.
The transformation is logically sound, but it may destroy sharing by replacing
each use of $Y$ with a copy of its proof.

\citet{boolos1984dont-eliminate-cut} emphasized that eliminating cuts can make
proofs exponentially longer.
The example below follows the same mechanism as a toy instance. %
We build a long derivation of $B$ from $A$ and then require $B$ at several leaf
goals.
With a cut, the long derivation appears once; after cut elimination, it is
copied at every leaf.

\subsection{Informal Overview}

Consider proving the following Theorem~$X$ in two ways with and without reusing intermediate Lemma~$Y$.

\begin{namedthm*}[$X$ : ``$A$ ..., then $B$ and $B$, and $B$ and $B$'']\label{thm:X}
Assume $A$ and the following implication chain
\begin{align}
A\Rightarrow B_1, \
B_1\Rightarrow B_2, \
B_2\Rightarrow B_3, \
B_3\Rightarrow B_4, \
B_4\Rightarrow B_5, \
B_5\Rightarrow B_6, \text{ and }
B_6\Rightarrow B. \tag{$\star$} \label{eq:AtoB}
\end{align}
Then we have
\begin{align}
  (B \wedge B) \wedge (B \wedge B). \tag{$\star\star$} \label{eq:BBBB}
\end{align}
\end{namedthm*}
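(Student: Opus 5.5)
The plan is to prove the lemma $Y:\ A\vdash B$ once and then consume it four times; this is the cut/\lstinline{have} structure the example is meant to display. Step one derives $B$ from the hypothesis $A$ and the chain \eqref{eq:AtoB} by seven successive applications of modus ponens (in sequent calculus, $\Rightarrow$-left; in Lean, \lstinline{have hB : B := h7 (h6 (h5 (h4 (h3 (h2 (h1 hA))))))}). Concretely, $A$ and $A\Rightarrow B_1$ give $B_1$. Then $B_1$ and $B_1\Rightarrow B_2$ give $B_2$, and so on through $B_6$ and $B_6\Rightarrow B$, which give $B$. This block is a single chain of seven inference steps, and nothing in it depends on the goal \eqref{eq:BBBB}.

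Step two closes the goal \eqref{eq:BBBB} by conjunction introduction. One $\wedge$-right splits the root into the two subgoals $B\wedge B$. Two more $\wedge$-right steps split each of these into two leaves $B$. Each of the four leaves is then discharged by the identity/axiom rule against the hypothesis $B$ supplied by the cut on $Y$ (in Lean, \lstinline{exact ⟨⟨hB, hB⟩, ⟨hB, hB⟩⟩}). The whole derivation is one cut: its left premise is the chain from step one, and its right premise is the depth-two conjunction tree with the formula $B$ in the antecedent. This gives a proof with roughly $7+3+4$ inference steps.

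For the purposes of \cref{ex:two-m}, I will also record the cut-free (inlined) form. Applying cut elimination pushes the cut upward to each of the four leaves. There the axiom $B\vdash B$ is replaced by a fresh copy of the seven-step chain, so the flat derivation has about $3+4\cdot 7$ steps. For the general depth-$d$ version, a conjunction of $2^d$ copies of $B$ with a chain of length $m$, the counts become $L_{\mathrm{hier}}\approx m+2^{d+1}$ versus $L_{\mathrm{flat}}\approx 2^d m$. This is the duplication that feeds the length ratio in \cref{thm:sample-neurips}.

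There is no real logical obstacle, since the derivation is routine. The only point needing care is bookkeeping. In the cut form, $B$ must be available in the context at every leaf, which requires structural rules (weakening/contraction) or the context-sharing form of $\wedge$-right. I would fix that convention first, so that the step counts quoted in \cref{ex:two-m} for the shared and inlined versions are computed consistently.
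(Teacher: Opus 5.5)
Your derivation is correct and is essentially the paper's cut-aware proof of $X$: you prove $B$ once from $A$ by the seven-step implication chain (the lemma $Y$, i.e.\ $\pi_Y$), close $(B\wedge B)\wedge(B\wedge B)$ by three $\wedge$-introductions whose four leaves all use that single $B$, and note that the cut-free variant inlines four copies of the chain ($L_{\mathrm{flat}}=4\ell+3$). One side remark is off, though: with only a single cut on $B$, your depth-$d$ counts give $L_{\mathrm{flat}}/L_{\mathrm{hier}}\approx 2^d m/(m+2^{d+1})\le m/2$, which is not exponential in $d$, so the exponential ratio used in \cref{thm:sample-neurips} needs the paper's per-level sharing of every $B^{(j)}$ (blocks $v_d\to v_{d-1}\to\cdots\to v_0$, as in the Lean proof's second \texttt{have} for $B\wedge B$), which gives $L_{\mathrm{hi}}=d+1$ and $L_{\mathrm{lo}}=\ell$.
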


\begin{namedlem*}[$Y$ : ``we have $B$ because $A$'']\label{lem:Y}
Assume $A$ and (\ref{eq:AtoB}), then $B$.
\end{namedlem*}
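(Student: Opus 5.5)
The plan is to prove Lemma~$Y$ by a straight forward chain of implication eliminations and to count the steps explicitly, since the length of this derivation is what later becomes the low-level cost of one reusable block in \cref{ex:two-m}. Write $B_0:=A$ and $B_7:=B$, so that the hypotheses (\ref{eq:AtoB}) read $B_{i-1}\Rightarrow B_i$ for $i=1,\dots,7$. I will show by induction on $i$ that $B_i$ is derivable from the assumptions $A$ and (\ref{eq:AtoB}) by a derivation that uses exactly $i$ applications of modus ponens (or $\Rightarrow$-elimination).

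The base case $i=0$ is the assumption $A$ itself. For the inductive step, suppose $B_{i-1}$ has been derived. Combine it with the hypothesis $B_{i-1}\Rightarrow B_i$ by one application of modus ponens to obtain $B_i$. Taking $i=7$ gives $B$, which proves the lemma.

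For the sequent-calculus version, I would phrase each step as the sequent $\Gamma,B_{i-1}\vdash B_i$, obtained by $\Rightarrow$-left from the axioms $B_{i-1}\vdash B_{i-1}$ and $B_i\vdash B_i$ with the implication in $\Gamma$. The seven steps are then chained either by cuts on the intermediate $B_i$ or, cut-free, by nesting the $\Rightarrow$-left rules. Either way the derivation is linear, consisting of seven rule applications plus axioms. In the verifier MDP this corresponds to a block-local trace of constant length ($7$ tactic decisions, e.g.\ successive \lstinline{exact h_i h_{i-1}} or \lstinline{apply} steps), each of them essentially forced, so $\eta_{\mathrm{lo}}$ is small on these states.

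There is no real obstacle here; the lemma is routine. The only point requiring care is bookkeeping. The count must be stated as a function of the chain length $m$ (here $m=7$), so that in the subsequent analysis of Theorem~$X$ the block costs $m$ low-level decisions once in the hierarchical certificate. In the flattened certificate the same block appears $2^d$ times (here $4$), one copy per $B$-leaf of the $\wedge$-introduction tree.
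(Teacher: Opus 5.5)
Your proof is correct and is essentially the paper's: seven successive implication eliminations along $B_0:=A,B_1,\ldots,B_6,B_7:=B$. The paper also writes this in sequent form as $\pi_i:\Gamma\vdash B_i$, obtained by $\to L$ from $\pi_{i-1}:\Gamma\vdash B_{i-1}$ and the axiom $\Gamma,B_i\vdash B_i$, which is your ``cut-free nesting'' option. Your cut-chained variant is also valid but adds cut steps, so your count of seven, like the paper's $\ell=7$, counts only the critical $\to L$ steps.
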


In fact, deriving $B$ from $A$ is quite long as follows. Thus it is reasonable to name the proof $Y$ and reuse it.

\begin{proof}[Proof of $Y$]
We have $B_1$ because we assume $A$ and $A \Rightarrow B_1$. %
We have $B_2$ because we have $B_1$ and $B_1 \Rightarrow B_2$. %
We have $B_3$ because we have $B_2$ and $B_2 \Rightarrow B_3$. %
We have $B_4$ because we have $B_3$ and $B_3 \Rightarrow B_4$. %
We have $B_5$ because we have $B_4$ and $B_4 \Rightarrow B_5$. %
We have $B_6$ because we have $B_5$ and $B_5 \Rightarrow B_6$. %
Therefore, we have $B$ because we have $B_6$ and $B_6 \Rightarrow B$. %
\end{proof}

The \emph{cut-aware} proof of $X$ derives \(Y\) once and reuses it to close the $2^2=4$
occurrences of \(B\).

\begin{proof}[Proof of $X$ reusing $Y$]
    We have $B$ because we have $A$ and $Y : A \Rightarrow B$,
    so we have $B \wedge B$ because we have $B$,
    therefore we have $(B \wedge B) \wedge (B \wedge B)$ because we have $B \wedge B$.
\end{proof}

On the other hand, the \emph{cut-free} proof of $X$ has no named \(Y\), so the same chain derivation must be
repeated at those occurrences.

\begin{proof}[Proof of $X$ without reusing $Y$]
We have $B \wedge B$ because (the 1st copy of proof $Y$) and (the 2nd copy of proof $Y$).
We have $B \wedge B$ because (the 3rd copy of proof $Y$) and (the 4th copy of proof $Y$).
Therefore we have $(B \wedge B) \wedge (B \wedge B)$ because we have $B \wedge B$ and $B \wedge B$.
\end{proof}

In other words, the cut-free proof is $2^2=4$ times as long as the cut-aware proof.
While it is easy to identify higher-order structures such as $Y$ in this artificial example, identifying them in real-world proofs is not always straightforward. It may be more efficient to simply iterate through the proof step by step rather than trying to identify higher-order structures. The motivation for this study is to analyze this dilemma.

\subsection{Proof explosion in sequent calculus} %

Let the context be \[
\Gamma = \{A, A\to B_1, B_1\to B_2, B_2\to B_3, B_3\to B_4, B_4\to B_5, B_5\to B_6, B_6\to B\}.
\]
We derive the target
\[\pi_X : \Gamma \vdash (B\wedge B)\wedge(B\wedge B).\]

\subsubsection{\texorpdfstring{A long derivation
\(\pi_Y : \Gamma \vdash B\)}{A long derivation \textbackslash pi\_B : \textbackslash Gamma \textbackslash vdash B}}

Using a standard sequent calculus for \(\to\) and \(\wedge\), keep the
context fixed as the full set \(\Gamma\) above.  Weakening and contraction are
admissible, so every intermediate sequent below has the same left context
\(\Gamma\).  Write
\[
\pi_0:\Gamma\vdash A,\qquad
\pi_i:\Gamma\vdash B_i\ (i=1,\ldots,6),\qquad
\pi_7:\Gamma\vdash B .
\]
The chain is obtained by repeated implication elimination in sequent form:
\[
\infer[\to L]{\pi_i:\Gamma\vdash B_i}{
  \pi_{i-1}:\Gamma\vdash B_{i-1}
  &
  \infer[ax]{\Gamma,B_i\vdash B_i}{}
}
\qquad (i=1,\ldots,6),
\]
where \(B_0:=A\) and the formula \(B_{i-1}\to B_i\) is an element of
\(\Gamma\).  The final step is
\[
\infer[\to L]{\pi_7:\Gamma\vdash B}{
  \pi_6:\Gamma\vdash B_6
  &
  \infer[ax]{\Gamma,B\vdash B}{}
},
\]
using \(B_6\to B\in\Gamma\).

This chain has length proportional to the number of implication-links
(here, 7). Denote the whole derivation by \(\pi_Y := \pi_7\).

\subsubsection{\texorpdfstring{Cut-aware (derive \(B\) once, then reuse it)}{Cut-aware (derive B once, then reuse it)}}

\[
\infer[\mathrm{cut}]{\Gamma \vdash (B\wedge B)\wedge(B\wedge B)}{
\pi_Y:\ \Gamma \vdash B
&
\infer[\wedge R]{\Gamma, B \vdash (B\wedge B)\wedge(B\wedge B)}{
\infer[\wedge R]{\Gamma, B \vdash B\wedge B}{
\infer[\mathrm{ax}]{\Gamma,B \vdash B}{} &
\infer[\mathrm{ax}]{\Gamma,B \vdash B}{}
}
&
\infer[\wedge R]{\Gamma, B \vdash B\wedge B}{
\infer[\mathrm{ax}]{\Gamma,B \vdash B}{} &
\infer[\mathrm{ax}]{\Gamma,B \vdash B}{}
}
}
}
\]

Here the expensive part \(\pi_Y\) appears \textbf{once}; everything else
is \(O(1)\) (just \(\mathrm{ax}\) and \(\wedge R\)).

\subsubsection{\texorpdfstring{Cut-free at the outer level (must supply \(2^2=4\) copies of \(\pi_Y\))}{Cut-free at the outer level (must supply two-to-two equal four copies of piY)}}

\[
\infer[\wedge R]{\Gamma \vdash (B\wedge B)\wedge(B\wedge B)}{
\infer[\wedge R]{\Gamma \vdash B\wedge B}{
\pi_Y:\ \Gamma \vdash B
&
\pi_Y:\ \Gamma \vdash B
}
&
\infer[\wedge R]{\Gamma \vdash B\wedge B}{
\pi_Y:\ \Gamma \vdash B
&
\pi_Y:\ \Gamma \vdash B
}
}
\]

Now the long derivation \(\pi_Y\) appears \textbf{$2^2$ times}. If
\(\pi_Y\) has length \(m\), the cut-free proof has length roughly
\(2^2 m + O(1)\), whereas the cut-aware proof has length \(m + O(1)\).

\textbf{Nested branching.}
For the recursive family
\[
B^{(0)} := B,\qquad B^{(d+1)} := B^{(d)} \wedge B^{(d)},
\]
the same mechanism creates \(2^d\) leaves.
Cut elimination then duplicates the derivation of \(B\) at those leaves, while
a proof with cut stores it once and reuses it.

\subsection{Proof explosion in Lean}
In Lean, we use the \lstinline{have} tactic for an explicit cut.

\newpage

\subsubsection{\texorpdfstring{Cut-aware (derive \(B\) once, then reuse it)}{Cut-aware (derive B once, then reuse it)}}

\begin{lstlisting}
theorem X_with_cut_aware_proof
    (A B1 B2 B3 B4 B5 B6 B : Prop)
    (hA : A)
    (f1 : A → B1)
    (f2 : B1 → B2)
    (f3 : B2 → B3)
    (f4 : B3 → B4)
    (f5 : B4 → B5)
    (f6 : B5 → B6)
    (f7 : B6 → B) :
    (B ∧ B) ∧ (B ∧ B) := by

    -- CUT: a shared intermediate lemma proved once
    have Y_we_have_B_because_A : B := f7 (f6 (f5 (f4 (f3 (f2 (f1 hA))))))

    have Z_we_have_B_and_B : B ∧ B := ⟨Y_we_have_B_because_A, Y_we_have_B_because_A⟩

    -- Use the lemmas without recomputing the long chain
    exact ⟨Z_we_have_B_and_B, Z_we_have_B_and_B⟩
\end{lstlisting}

\subsubsection{\texorpdfstring{Cut-free at the outer level (the same
\(B\)-derivation is duplicated \(2^2=4\)
times)}{Cut-free at the outer level (the same B-derivation is duplicated two-to-two equal four times)}}

\begin{lstlisting}
theorem X_with_cut_free_proof
    (A B1 B2 B3 B4 B5 B6 B : Prop)
    (hA : A)
    (f1 : A → B1)
    (f2 : B1 → B2)
    (f3 : B2 → B3)
    (f4 : B3 → B4)
    (f5 : B4 → B5)
    (f6 : B5 → B6)
    (f7 : B6 → B) :
    (B ∧ B) ∧ (B ∧ B) := by

    -- Recomputing the chain 2^2 times
    exact
     ⟨⟨f7 (f6 (f5 (f4 (f3 (f2 (f1 hA)))))),   -- first copy of the long derivation
      f7 (f6 (f5 (f4 (f3 (f2 (f1 hA))))))⟩,  -- second copy
     ⟨f7 (f6 (f5 (f4 (f3 (f2 (f1 hA)))))),   -- third copy
      f7 (f6 (f5 (f4 (f3 (f2 (f1 hA))))))⟩⟩  -- fourth copy
\end{lstlisting}

\textbf{Lean reading.}
The commented code mirrors the sequent-calculus construction.
The proof with \lstinline{have} names the intermediate proof of \(B\) and reuses
it; the cut-free proof inlines the same chain at each leaf.
Thus the proof-term size follows the same accounting: roughly \(m+O(1)\) with
the local lemma, and \(2^d m+O(1)\) after inlining in a depth-\(d\) binary
pattern.

\section{Details on High-Level Structures}%
\label{sec:app-hilevel-formal}

This appendix makes explicit the nonstandard high-level layer used in Settings~2--3.
The underlying proof system is still the ordinary verifier MDP
\[
\mathsf M_{\psystem}
=
(\calS,\calA_{\mathrm{lo}},F_{\mathrm{lo}},G,\iota)
\]
from \cref{sec:proof-mdp}.
The high-level layer is an additional representation of reusable proof structure; it is not an additional logical axiom system.

\subsection{Formal objects}

The following definitions fix the typed objects used by the high-level learner:
block interfaces, proof DAGs, high-level search states, and the unfolding map
back to an ordinary low-level proof trace.

\begin{definition}[Block interfaces]
\label{def:app-block-interface}
Let
\[
\mathsf{Judg}_{\psystem}
:=
\{(\Gamma,\varphi):\Gamma\in\mathcal P_{\mathrm{fin}}(\flang),\ \varphi\in\flang\}
\]
be the set of finite-premise proof obligations.
A block interface is a finite set of such obligations:
\[
\mathsf{Obl}_{\psystem}
\subseteq
\mathcal P_{\mathrm{fin}}(\mathsf{Judg}_{\psystem}).
\]
Equivalently, after choosing a concrete encoding of sequents or Lean goals as formulas, one may regard a block interface as an element of
\[
\mathcal P_{\mathrm{fin}}(\flang).
\]
This is the sense in which a labeled high-level node may be written as a finite set of formulas.
For each interface $I\in\mathsf{Obl}_{\psystem}$, let
\[
\jmath(I)\in\calS
\]
be the corresponding ordinary low-level initial proof state.
For example, if
$I=\{(\Gamma_1,\varphi_1),\ldots,(\Gamma_m,\varphi_m)\}$,
then $\jmath(I)$ is the proof state with open goals
$\Gamma_i\vdash_{\psystem}\varphi_i$ for $i=1,\ldots,m$.
\end{definition}

\begin{definition}[High-level proof DAG]
\label{def:app-hilevel-dag}
A high-level proof DAG is a tuple
\[
M=(V,E,r,\lambda),
\]
where $V$ is a finite set of block identifiers, $E\subseteq V\times V$ is acyclic, $r\in V$ is the root block, and
\[
\lambda:V\to\mathsf{Obl}_{\psystem}
\]
assigns a block interface to each node.
The edge convention is
\[
v\to w
\quad\Longleftrightarrow\quad
\text{the proof of block }v\text{ may call or consume block }w.
\]
Thus descendants are dependencies.
If $\lambda$ is injective and the block labels themselves are used as node identities, then the same object can be identified with
\[
V\subseteq \mathsf{Obl}_{\psystem}
\subseteq \mathcal P_{\mathrm{fin}}(\mathsf{Judg}_{\psystem}),
\]
or, under a formula-level encoding of judgements, with
$V\subseteq\mathcal P_{\mathrm{fin}}(\flang)$.
We keep the explicit identifier set $V$ because two syntactically equal interfaces may occur as distinct blocks before the learner decides whether to reuse them.
Let $\Latent$ denote the class of finite high-level proof DAGs satisfying this definition.
\end{definition}

\begin{definition}[High-level states and graph actions]
\label{def:app-hilevel-actions}
A high-level search state is a finite record
\[
s_{\mathrm{hi}}
=
(M_t,U_t,R_t,C_t),
\]
where $M_t=(V_t,E_t,r_t,\lambda_t)$ is the current partial high-level DAG, $U_t\subseteq V_t$ is the set of unresolved blocks, $R_t\subseteq V_t$ is the set of solved blocks, and $C_t$ stores the local certificates already attached to solved blocks.
The high-level action set is state dependent; the global alphabet is the disjoint union
\[
\calA_{\mathrm{hi}}
=
\calA_{\mathrm{sel}}
\sqcup
\calA_{\mathrm{exp}}
\sqcup
\calA_{\mathrm{ord}}
\sqcup
\calA_{\mathrm{reuse}}
\sqcup
\calA_{\mathrm{close}}.
\]
At a state $s_{\mathrm{hi}}=(M_t,U_t,R_t,C_t)$ these components have the following meanings.
\[
\calA_{\mathrm{sel}}(s_{\mathrm{hi}})
=
\{\operatorname{select}(v):v\in U_t\}
\]
chooses the unresolved block to work on.
\[
\calA_{\mathrm{exp}}(s_{\mathrm{hi}})
=
\{\operatorname{expand}(v;I_1,\ldots,I_m):
v\in U_t,\ I_j\in\mathsf{Obl}_{\psystem},\ m\ge0\}
\]
adds fresh child blocks with interfaces $I_1,\ldots,I_m$ and edges from $v$ to those children.
\[
\calA_{\mathrm{ord}}(s_{\mathrm{hi}})
\]
contains finite scheduling choices, such as a permutation or priority order of the currently available unresolved blocks, subject to the partial order induced by $E_t$.
\[
\calA_{\mathrm{reuse}}(s_{\mathrm{hi}})
=
\{\operatorname{reuse}(v,w):v\in U_t,\ w\in R_t,\ \lambda_t(w)\simeq\lambda_t(v)\}
\]
declares that the current block $v$ is discharged by an already solved compatible block $w$.
Here $\simeq$ is the chosen interface-equivalence relation, for example syntactic equality of normalized goals or alpha-equivalence of Lean local contexts.
Finally,
\[
\calA_{\mathrm{close}}(s_{\mathrm{hi}})
=
\left\{\operatorname{close}(v,c):
v\in U_t,\ 
c\in
\left(\calA_{\mathrm{lo}}
\sqcup
\{\operatorname{call}(w):(v,w)\in E_t,\ w\in R_t\}\right)^\ast
\right\}
\]
attaches a local certificate $c$ that verifies the interface $\lambda_t(v)$ using ordinary low-level actions and allowed calls to already solved child blocks.
These actions manipulate the proof DAG; they are distinct from low-level tactic actions in $\calA_{\mathrm{lo}}$.
\end{definition}

\begin{definition}[Local block certificates and unfolding]
\label{def:app-unfolding}
For a fixed high-level DAG $M=(V,E,r,\lambda)$, write
\[
\operatorname{Child}_M(v):=\{w\in V:(v,w)\in E\}.
\]
A local block script for $v$ is a finite word over the alphabet
\[
\calA_{\mathrm{lo}}
\sqcup
\{\operatorname{call}(w):w\in\operatorname{Child}_M(v)\}.
\]
Let $\mathsf{LCert}_M(v)$ be the set of such scripts that pass the local verifier check for the interface $\lambda(v)$ when each call $\operatorname{call}(w)$ is treated as an available lemma or solved subgoal with interface $\lambda(w)$.
A high-level certificate over $M$ is a tuple
\[
C=(c_v)_{v\in V}
\in
\prod_{v\in V}\mathsf{LCert}_M(v).
\]
The space of high-level certificates over $M$ is denoted $\mathsf{Cert}_{\mathrm{hi}}(M)$, and the total typed domain of unfolding is the disjoint union
\[
\operatorname{Dom}(\operatorname{Unf})
:=
\bigsqcup_{M\in\Latent}
\bigl(\{M\}\times\mathsf{Cert}_{\mathrm{hi}}(M)\bigr).
\]
The unfolding map is
\[
\operatorname{Unf}:
\operatorname{Dom}(\operatorname{Unf})
\to
\calA_{\mathrm{lo}}^\ast.
\]
It is defined recursively along any reverse topological order of $M$:
in $c_v$, each symbol $\operatorname{call}(w)$ is replaced by the already unfolded low-level word for $w$.
The output $\operatorname{Unf}(M,C)$ is the unfolded word for the root block $r$.
The verifier accepts a high-level certificate only if the ordinary low-level rollout
\[
F_{\mathrm{lo}}^\ast(\jmath(\lambda(r)),\operatorname{Unf}(M,C))
\]
is a solved state in $G$.
\end{definition}

\subsection{A cut-elimination example}

The next example instantiates the definitions above in the concrete
cut-elimination family of \cref{sec:example-cut}.
It also records where memoization enters the decision-count comparison used in
\cref{ex:two-m}.

\begin{example}[High-level structure in the cut-elimination example]
\label{ex:app-hilevel-cut}
Consider the sequent-calculus construction in \cref{sec:example-cut}.
The underlying low-level proof system is the ordinary sequent calculus for
$\to$ and $\wedge$.
Thus a low-level proof state is a finite collection of open sequents, and a
low-level action is an inference-rule application such as $\to L$, $\wedge R$,
or $\mathrm{ax}$.

\paragraph{Interfaces.}
In this example an \emph{obligation} is a sequent judgement
$\Gamma\vdash\varphi$.
The three useful block interfaces are the singleton interfaces
\[
I_Y:=\{(\Gamma,B)\},\qquad
I_Z:=\{(\Gamma,B\wedge B)\},\qquad
I_X:=\{(\Gamma,(B\wedge B)\wedge(B\wedge B))\}.
\]
Here $I_Y$ is the interface for the long proof of $B$ from the implication
chain, $I_Z$ is the interface for building $B\wedge B$ from two uses of $B$,
and $I_X$ is the root interface for the final theorem.
The map $\jmath(I_Y)$ is the ordinary low-level initial state with one open
goal $\Gamma\vdash B$, and similarly for $I_Z$ and $I_X$.

\paragraph{Proof DAG.}
A high-level proof DAG for the cut-aware proof has three nodes
\[
V=\{x,z,y\},\qquad r=x,
\]
with labels
\[
\lambda(y)=I_Y,\qquad
\lambda(z)=I_Z,\qquad
\lambda(x)=I_X,
\]
and edges
\[
E=\{(x,z),(z,y)\}.
\]
Diagrammatically,
\[
x:I_X
\xrightarrow[\text{two calls in }c_x]{(x,z)}
z:I_Z
\xrightarrow[\text{two calls in }c_z]{(z,y)}
y:I_Y .
\]
The edge $z\to y$ says that the local proof of the block $z$ may call the
already solved block $y$; the edge $x\to z$ says that the root block may call
the solved block $z$.
The edge set records dependency types, not call multiplicities.
The multiplicities appear in the local certificates.

\paragraph{Local certificates.}
Let $C=(c_x,c_z,c_y)$ be the high-level certificate.
The local certificate $c_y$ is the ordinary low-level chain
\[
\pi_Y:\Gamma\vdash B
\]
obtained by repeated $\to L$ steps.
The certificate $c_z$ consists of two calls to $y$ followed by a $\wedge R$
step, proving $\Gamma\vdash B\wedge B$.
The certificate $c_x$ consists of two calls to $z$ followed by a $\wedge R$
step, proving the root sequent.
Thus $y,z,x$ are the blocks/nodes, the interfaces $I_Y,I_Z,I_X$ are their
input-output specifications, and the scripts $c_y,c_z,c_x$ are their local
certificates.

\paragraph{Cut-aware proof and high-level calls.}
The proof diagram from \cref{sec:example-cut} that this example focuses on is
the cut-aware, memoized one.
Writing
\[
\Phi:=(B\wedge B)\wedge(B\wedge B),
\]
the low-level cut-aware proof diagram is
\[
\infer[\mathrm{cut}]{\Gamma\vdash \Phi}{
  \pi_Y:\Gamma\vdash B
  &
  \infer[\wedge R]{\Gamma,B\vdash \Phi}{
    \infer[\wedge R]{\Gamma,B\vdash B\wedge B}{
      \infer[\mathrm{ax}]{\Gamma,B\vdash B}{}
      &
      \infer[\mathrm{ax}]{\Gamma,B\vdash B}{}
    }
    &
    \infer[\wedge R]{\Gamma,B\vdash B\wedge B}{
      \infer[\mathrm{ax}]{\Gamma,B\vdash B}{}
      &
      \infer[\mathrm{ax}]{\Gamma,B\vdash B}{}
    }
  }
}.
\]
The high-level layer abstracts this proof by replacing the repeated uses of the
assumption $B$ by calls to reusable blocks.
At the high-level call boundary it is represented by the following three local
certificates:
\begin{align*}
c_y
&=
\pi_Y:\Gamma\vdash B,\\
c_z
&=
\infer[\wedge R]{\Gamma\vdash B\wedge B}{
  \operatorname{call}(y):\Gamma\vdash B
  &
  \operatorname{call}(y):\Gamma\vdash B
},\\
c_x
&=
\infer[\wedge R]{\Gamma\vdash (B\wedge B)\wedge(B\wedge B)}{
  \operatorname{call}(z):\Gamma\vdash B\wedge B
  &
  \operatorname{call}(z):\Gamma\vdash B\wedge B
}.
\end{align*}
Here $\pi_Y$ is the long sequent-calculus derivation of
$\Gamma\vdash B$.
The symbols $\operatorname{call}(y)$ and $\operatorname{call}(z)$ are not
low-level inference rules; they are high-level certificate calls that are later
expanded by $\operatorname{Unf}$.

\paragraph{High-level search state.}
One possible high-level action sequence is
\[
\begin{aligned}
(M,\{x,z,y\},\emptyset,\emptyset)
&\xrightarrow{\operatorname{close}(y,c_y)}
(M,\{x,z\},\{y\},\{y\mapsto c_y\})\\
&\xrightarrow{\operatorname{close}(z,c_z)}
(M,\{x\},\{y,z\},\{y\mapsto c_y,z\mapsto c_z\})\\
&\xrightarrow{\operatorname{close}(x,c_x)}
(M,\emptyset,\{x,z,y\},C).
\end{aligned}
\]

\paragraph{Memoization and flattening.}
The proof of $z$ uses $\operatorname{call}(y)$ twice, and the proof of $x$
uses $\operatorname{call}(z)$ twice.
In the hierarchical execution, once $\operatorname{close}(y,c_y)$ has been
performed, the solved set $R_t$ and certificate table $C_t$ store the block
$y$ and its certificate.
Later calls to $y$ consume this stored certificate and do not create new
low-level training occurrences.
Similarly, a graph-construction step $\operatorname{reuse}(v,w)$ discharges a
compatible unresolved block $v$ by an already solved block $w$ rather than
duplicating the proof of $w$.
Flattening removes this memo table:
each call to a block is replaced by a fresh copy of that block's unfolded
low-level proof.
Consequently $\operatorname{Unf}(M,C)$ contains four copies of the long chain
$c_y$.
This is the concrete reason why Setting~3 distinguishes the high-level
decision count $L_{\mathrm{hi}}$, the unique-block low-level count
$L_{\mathrm{lo}}$, and the occurrence-level flat count $L_{\mathrm{flat}}$.
\end{example}

\section{Concrete Example on How to Count High- and Low-Level Decisions}
\label{ex:two-m}
\label{sec:how-to-count-L}

This appendix spells out the decision-count accounting used in
\cref{thm:sample-neurips}.
It uses the same cut-elimination example as \cref{sec:example-cut} and the
same high-level notation as \cref{ex:app-hilevel-cut}, but the relevant proof
diagrams are repeated here so that the counting can be read independently.
Capital letters such as $X,Y,Z$ denote the informal theorem or lemma names in
\cref{sec:example-cut}; lowercase letters $x,y,z$ denote nodes of the
high-level proof DAG.

\paragraph{The base proof.}
Let
\[
\Gamma
=\{A,A\to B_1,B_1\to B_2,\ldots,B_6\to B\}
\]
and let $\pi_Y:\Gamma\vdash B$ be the implication-chain proof of $B$.
The chain has
\[
\ell=7
\]
critical low-level implication steps.
For the depth-two target
\[
\Phi:=(B\wedge B)\wedge(B\wedge B),
\]
the cut-aware proof derives $\pi_Y$ once and then reuses it:
\[
\infer[\mathrm{cut}]{\Gamma\vdash \Phi}{
  \pi_Y:\Gamma\vdash B
  &
  \infer[\wedge R]{\Gamma,B\vdash \Phi}{
    \infer[\wedge R]{\Gamma,B\vdash B\wedge B}{
      \infer[\mathrm{ax}]{\Gamma,B\vdash B}{}
      &
      \infer[\mathrm{ax}]{\Gamma,B\vdash B}{}
    }
    &
    \infer[\wedge R]{\Gamma,B\vdash B\wedge B}{
      \infer[\mathrm{ax}]{\Gamma,B\vdash B}{}
      &
      \infer[\mathrm{ax}]{\Gamma,B\vdash B}{}
    }
  }
}.
\]
After flattening, the same proof of $B$ is copied to all four leaves:
\[
\infer[\wedge R]{\Gamma\vdash \Phi}{
  \infer[\wedge R]{\Gamma\vdash B\wedge B}{
    \pi_Y:\Gamma\vdash B
    &
    \pi_Y:\Gamma\vdash B
  }
  &
  \infer[\wedge R]{\Gamma\vdash B\wedge B}{
    \pi_Y:\Gamma\vdash B
    &
    \pi_Y:\Gamma\vdash B
  }
}.
\]

\paragraph{The high-level DAG.}
The reusable high-level representation has three block interfaces
\[
I_Y:=\{(\Gamma,B)\},\qquad
I_Z:=\{(\Gamma,B\wedge B)\},\qquad
I_X:=\{(\Gamma,\Phi)\}.
\]
The high-level proof DAG is
\[
M=(V,E,r,\lambda),
\qquad
V=\{x,z,y\},\quad r=x,
\]
with
\[
\lambda(y)=I_Y,\qquad \lambda(z)=I_Z,\qquad \lambda(x)=I_X,
\qquad
E=\{(x,z),(z,y)\}.
\]
Diagrammatically,
\[
x:I_X
\xrightarrow[\text{two calls in }c_x]{(x,z)}
z:I_Z
\xrightarrow[\text{two calls in }c_z]{(z,y)}
y:I_Y .
\]
The edge set records dependency types, while the two-use multiplicities are in
the local certificates:
\[
\begin{aligned}
c_y
&=\pi_Y:\Gamma\vdash B,\\
c_z
&=
\infer[\wedge R]{\Gamma\vdash B\wedge B}{
  \operatorname{call}(y):\Gamma\vdash B
  &
  \operatorname{call}(y):\Gamma\vdash B
},\\
c_x
&=
\infer[\wedge R]{\Gamma\vdash \Phi}{
  \operatorname{call}(z):\Gamma\vdash B\wedge B
  &
  \operatorname{call}(z):\Gamma\vdash B\wedge B
}.
\end{aligned}
\]
A compatible high-level search trace is
\[
\begin{aligned}
(M,\{x,z,y\},\emptyset,\emptyset)
&\xrightarrow{\operatorname{close}(y,c_y)}
(M,\{x,z\},\{y\},\{y\mapsto c_y\})\\
&\xrightarrow{\operatorname{close}(z,c_z)}
(M,\{x\},\{y,z\},\{y\mapsto c_y,z\mapsto c_z\})\\
&\xrightarrow{\operatorname{close}(x,c_x)}
(M,\emptyset,\{x,z,y\},C).
\end{aligned}
\]

\paragraph{Counting convention for the displayed numbers.}
The main-text comparison uses decision counts for the chosen protocol.
For this numerical example, we count one high-level decision for each unique
block-closing action above.
Thus
\[
L_{\mathrm{hi}}=3.
\]
The low-level count records the expensive ordinary tactic chain needed to solve
the unique terminal block $y$, namely
\[
L_{\mathrm{lo}}=\ell=7.
\]
The small conjunction-assembly certificates $c_z$ and $c_x$ are treated as
bounded-size structural overhead in the high-level representation.
If one instead counts their two $\wedge R$ applications as low-level local
actions, then $L_{\mathrm{lo}}$ becomes $\ell+2$ and only constants move between
the two hierarchical counters.
This does not affect the separation.

For the flat learner, the unfolded certificate $\operatorname{Unf}(M,C)$ has
four fresh copies of $c_y$ and three conjunction-combination decisions.
Hence
\[
L_{\mathrm{flat}}=4\ell+3=31.
\]
If one counts only the expensive implication-chain decisions and treats all
conjunction assembly as constant overhead, the same example gives
$L_{\mathrm{lo}}=7$ and $L_{\mathrm{flat}}=28$.
In either convention, the mechanism is the same:
hierarchical proving stores the block $y$ once, whereas flattening pays for
four occurrences of the same proof of $B$.

\paragraph{Depth-\texorpdfstring{$d$}{d} family.}
For the recursive family
\[
B^{(0)}:=B,\qquad B^{(j+1)}:=B^{(j)}\wedge B^{(j)},
\]
use nodes $v_0,\ldots,v_d$ with interfaces
$I_j:=\{(\Gamma,B^{(j)})\}$ and edges
\[
v_d\to v_{d-1}\to\cdots\to v_0,
\]
where each certificate $c_{v_j}$ for $j\ge1$ calls $v_{j-1}$ twice and then
applies one conjunction rule.
Under the same convention,
\[
L_{\mathrm{hi}}=d+1,\qquad
L_{\mathrm{lo}}=\ell,\qquad
L_{\mathrm{flat}}=2^d\ell+(2^d-1).
\]
Thus
\[
\frac{L_{\mathrm{flat}}}{L_{\mathrm{hi}}+L_{\mathrm{lo}}}
=
\frac{2^d\ell+(2^d-1)}{\ell+d+1},
\]
which is exponential in $d$ whenever the terminal proof length $\ell$ is fixed
or grows at most polynomially in $d$.
This is the concrete duplication pattern used when
\cref{thm:sample-neurips} assumes
$L_{\mathrm{flat}}/L_{\mathrm{hier}}=\exp(\Omega(D))$.

\end{document}